\newcommand{\naf}[1]{\ensuremath{{\neg{#1}}}}
\newcommand{\sig}[1]{\ensuremath{\mathcal{#1}}}
\newtheorem{theorem}{Theorem}[section]
\newcommand{\atoms}[1]{\ensuremath{atoms(#1)}}
\newcommand{\open}[0]{\emph{Open}}
\newcommand{\generate}[0]{\emph{Gen}}
\newcommand{\define}[0]{\emph{Def}}
\newcommand{\test}[0]{\emph{Test}}
\newcommand{\trf}[0]{\ensuremath{\Phi}}
\newcommand{\planif}[2]{\ensuremath{#1_#2}}
\newcommand{\planifnot}[2]{\ensuremath{#1_{\ifnot{#2}}}}
\newcommand{\ifnot}[1]{\ensuremath{\overline{#1}}}
\newcommand{\flag}[0]{\ensuremath{\alpha}}
\newcommand{\fixbf}[2]{\ensuremath{\mathit{fixbf}(#1,#2)}}
\newcommand{\fixcons}[2]{\ensuremath{\mathit{fixcons}(#1,#2)}}
\newcommand{\fixfact}[2]{\ensuremath{\mathit{fixfact}(#1,#2)}}
\newcommand{\dynx}[0]{\ensuremath{\mathbf{D}}}
\newcommand{\initx}[0]{\ensuremath{\mathbf{I}}}
\newcommand{\finalx}[0]{\ensuremath{\mathbf{G}}}
\newcommand{\dyn}[1]{\ensuremath{\dynx{}(#1)}}
\newcommand{\init}[1]{\ensuremath{\initx{}(#1)}}
\newcommand{\final}[1]{\ensuremath{\finalx{}(#1)}}
\newcommand{\prev}[1]{\ensuremath{#1'}}
\newcommand{\ttt}[1]{\ensuremath{#1^{\bullet}}}
\newcommand{\ttx}[1]{\ensuremath{#1^{\circ}}}
\newcommand{\pp}[0]{\ensuremath{\mathcal{PP}}}
\newcommand{\dd}[0]{\ensuremath{\mathcal{D}}}
\newcommand{\cpp}[0]{\ensuremath{\mathit{C}_{\dd}}}
\newcommand{\cp}[0]{\ensuremath{\mathit{C}_{\dd}}}
\newcommand{\copyinit}[1]{\ensuremath{copy}}
\newcommand{\assumeatoms}[0]{\ensuremath{\mathit{Assume}}}
\newcommand{\exampleemph}[1]{\emph{#1}}
\newcommand{\facts}[0]{\ensuremath{\mathit{Domain}}}
\newcommand{\base}[0]{\facts{}}
\newcommand{\guess}[0]{\ensuremath{\mathit{Guess}}}
\newcommand{\cone}[0]{\ensuremath{\mathit{C1}}}
\newcommand{\ctwo}[0]{\ensuremath{\mathit{C2}}}
\newcommand{\lpnormal}[0]{{\sc lp2normal}}
\newcommand{\lpacyc}[0]{{\sc lp2acyc}}
\newcommand{\lpsat}[0]{{\sc lp2sat}}
\newcommand{\qasp}[0]{{\sc qasp2qbf}}
\newcommand{\optt}[0]{{\sc opt}}
\newcommand{\fl}[0]{{\sc sat}}
\newcommand{\caqe}[1]{{\sc caqe}\textsuperscript{{\tiny \sc{#1}}}}
\newcommand{\caqecirc}[0]{{\sc caqe}\textsuperscript{$\circ$}}
\newcommand{\qesto}[1]{{\sc qesto}\textsuperscript{{\tiny \sc{#1}}}}
\newcommand{\qestocirc}[0]{{\sc qesto}\textsuperscript{$\circ$}}
\newcommand{\qute}[1]{{\sc qute}\textsuperscript{{\tiny \sc{#1}}}}
\newcommand{\qutecirc}[0]{{\sc qute}\textsuperscript{$\circ$}}
\newcommand{\depqbf}[1]{{\sc depqbf}\textsuperscript{{\tiny \sc{#1}}}}
\newcommand{\depqbfcirc}[0]{{\sc depqbf}\textsuperscript{$\circ$}}
\newcommand{\sclingo}[1]{{\sc clingo}\textsuperscript{{\tiny \sc{#1}}}}
\newcommand{\bloqqer}[0]{{\sc bloqqer}}
\newcommand{\hqspre}[0]{{\sc hqspre}}
\newcommand{\qratpre}[0]{{\sc qratpre}+}
\def\cpasp{{\sc CPasp}}
\def\cpasp{{\sc CPasp}}
\def\cplan{$\cal C$-{\sc Plan}}
\def\cf2cff{{\tt cf2cs(ff)}}
\def\t0{{\tt t0}}
\def\T1{{\tt t1}}
\def\cplan{{\sc C-Plan}}
\newcommand{\dlvk}{$\mathtt{dlv}^\mathcal{K}$}
\newtheorem{lemma}{Lemma}
\begin{document}

\title{Planning with Incomplete Information in~Quantified~Answer Set Programming}

\author[Jorge Fandinno et al]{%
  Jorge Fandinno$^{2,3}$ \ François Laferrière$^3$ \ Javier Romero$^3$ \ Torsten Schaub$^3$ \ Tran Cao Son$^1$
  \\$^1${New Mexico State University, USA} \ $^2${Omaha State University, USA} \ $^3${University of Potsdam, Germany} }

\maketitle

\begin{abstract}
We present a general approach to planning with incomplete information in Answer Set Programming (ASP).
More precisely, we consider the problems of conformant and conditional planning with sensing actions and assumptions.
We represent planning problems using a simple formalism
where logic programs describe
the transition function between states, the initial states and the goal states.
For solving planning problems, we use Quantified Answer Set Programming (QASP),
an extension of ASP with existential and universal quantifiers over atoms
that is analogous to Quantified Boolean Formulas (QBFs).
We define the language of quantified logic programs
and use it to represent the solutions to different variants of conformant and conditional planning.
On the practical side, we present a translation-based QASP solver that
converts quantified logic programs into QBFs and then executes a QBF solver, and
we evaluate experimentally the approach on conformant and conditional planning benchmarks.
Under consideration for acceptance in TPLP.
\end{abstract}
%

\section{Introduction}\label{sec:introduction}

We propose a general and uniform framework for planning in Answer Set Programming (ASP;~\citeNP{lifschitz02a}).
Apart from classical planning,
referring to planning with deterministic actions and complete initial states,
our focus lies on conformant and conditional planning.
While the former extends the classical setting by incomplete initial states,
the latter adds sensing actions and conditional plans.
Moreover, we allow for making assumptions to counterbalance missing information.

To illustrate this, let us consider the following example.
There is a cleaning robot in a corridor going through adjacent rooms
that may be occupied by people.
The robot can go to the next room and
can sweep its current room to clean it, but
it should not sweep a room if it is occupied.
We assume that nothing changes if
the robot tries to go further than the last room, or
if it sweeps a room that is already clean.
The goal of the robot is to clean all rooms that are not occupied by people.
We present a solution for any number of rooms,
but in our examples we consider only two.

\emph{Classical planning.}
Consider an initial situation where
the robot is in the first room, only the first room is clean, and no room is occupied.
In this case, the classical planning problem is to
find a plan that, applied to \emph{the} initial situation, achieves the goal.
The plan,
where the robot goes to the second room and then sweeps it,
solves this problem.

\emph{Conformant planning.}
Consider now
that the robot initially does not know whether the rooms are clean or not.
There are four possible initial situations,
depending on the state of cleanliness of the two rooms.
In this case, the conformant planning problem is to
find a plan that, applied to \emph{all} possible initial situations,
achieves the goal.
The plan,
where the robot sweeps, goes to the second room and sweeps it,
solves that problem.

So far rooms were unoccupied.
Now consider that initially the robot knows that
exactly one of the rooms is occupied, but does not know which.
Combining the previous four options about cleanliness
with the two new options about occupancy,
there are eight possible initial situations.
It turns out that there is no conformant plan for this problem.
The robot would have to leave the occupied room as it is and sweep the other,
but there is no way of doing that without knowing which is the occupied room.

\exampleemph{Assumption-based planning.}
At this point,
the robot can try to make assumptions about the unknown initial situation,
and find a plan that at least works under these assumptions,
hoping that they will indeed hold in reality.
In this case, a conformant planning problem with assumptions is to
find a plan and a set of assumptions such that
the assumptions hold in some possible initial situation, and
the plan, applied to
\emph{all} possible initial situations satisfying the assumptions,
achieves the goal.
Assuming that room one is occupied,
the plan where the robot goes to room two and then sweeps it,
solves the problem.
Another solution is to assume that the second room is occupied and simply sweep the first room.

\emph{Conditional planning.}
The robot has a safer approach, if
it can
observe the occupancy of a room and
prepare a different subplan for each possible observation.
This is similar to conformant planning,
but now plans have actions to observe the world
and different subplans for different observations.
In our example, there is a conditional plan where
the robot first observes if the first room is occupied,
and if so, goes to the second room and sweeps it,
otherwise it simply sweeps the first room.
The robot could also make some assumptions about the initial situation,
but this is not needed in our example.

Unfortunately, the expressiveness of regular ASP is insufficient to capture this variety of planning problems.
While bounded classical and conformant planning are still expressible
since their corresponding decision problems are still
at the first and second level of the polynomial hierarchy,
bounded conditional planning is \textsc{Pspace}-complete~\cite{turner02a}.

To match this level of complexity, we introduce a quantified extension of ASP, called Quantified Answer Set Programming (QASP),
in analogy to Quantified Boolean Formulas (QBFs).
More precisely,
we start by capturing the various planning problems within a simple uniform framework
centered on the concept of transition functions,
mainly by retracing the work of~\citeANP{sotugemo05b}~\citeyear{sotugemo05b,tusoba07a,tusogemo11a}.
The core of this framework consists of a general yet simplified fragment of logic programs
that aims at representing transition systems,
similar to action languages~\cite{gellif98a} and (present-centered) temporal logic programs~\cite{cakascsc18a},
We then extend the basic setting of ASP with quantifiers and define quantified logic programs, in analogy to QBFs.
Although we apply QASP to planning problems, it is of general nature.
This is just the same with its implementation,
obtained via a translation of quantified logic programs to QBFs.
This allows us to represent the above spectrum of planning problems by quantified logic programs
and to compute their solutions with our QASP solver.
Interestingly, the core planning problems are described by means of the aforementioned simple language fragment,
while the actual type of planning problem is more or less expressed via quantification.
Finally,
we empirically evaluate our solver on conformant and conditional planning benchmarks.


\section{Background}\label{sec:background}

%
We consider normal logic programs over a set \sig{A} of atoms with choice rules and integrity constraints.
A rule $r$ has the form $H \leftarrow B$
where $B$ is a set of literals,
and $H$ is  either
an atom $p$, and we call $r$ a normal rule, or
$\{p\}$ for some atom $p$, making $r$ a choice rule, or
$\bot$, so that $r$ an integrity constraint.
We usually drop braces from rule bodies $B$, and
also use $l,B$ instead of $\{l\} \cup B$ for a literal $l$.
We also abuse notation and identify sets of atoms $X$
with sets of facts $\{ p \leftarrow {} \mid p \in X \}$.
A (normal) logic program is a set of (normal) rules.
%
%
As usual, rules with variables are viewed as shorthands for the set of their ground instances.
We explain further practical extensions of ASP, like conditional literals and cardinality constraints, in passing.
Semantically,
we identify a body $B$ with the conjunction of its literals,
the head of a choice rule $\{ p \}$ with the disjunction $p \vee \neg p$,
a rule $H \leftarrow B$ with the implication $B \rightarrow H$, and
a program with the conjunction of its rules.
A set of atoms $X\subseteq\sig{A}$ is a stable model of a logic program $P$ if
it is a subset-minimal model of the formula that results
from replacing in $P$ any literal by $\bot$ if it is not satisfied by $X$.
We let $\mathit{SM}(P)$ stand for the set of stable models of $P$.

The dependency graph of a logic program $P$ has nodes \sig{A},
an edge $p\stackrel{+}{\rightarrow}q$ if there is a rule
whose head is either $q$ or $\{q\}$ and whose body contains $p$, and
an edge $p\stackrel{-}{\rightarrow}q$ if there is a rule
with head $q$ or $\{q\}$, and $\naf{p}$ in its body.
A logic program is stratified if its dependency graph has no cycle involving a negative edge ($\stackrel{-}{\rightarrow}$).
Note that stratified normal programs have exactly one stable model,
unlike more general stratified programs.

ASP rests on a Generate-define-test (GDT) methodology~\cite{lifschitz02a}.
Accordingly, we say that a logic program $P$ is in GDT form if
it is stratified and
all choice rules in $P$ are of form $\{p\} \leftarrow {}$ such that
$p$ does not occur in the head of any other rule in $P$.
In fact,
GDT programs constitute a normal form because
every logic program can be translated into GDT form by using 
auxiliary atoms~\cite{niemela08b,famirosc20a}.

Quantified Boolean formulas (QBFs; \citeNP{gimana09a})
 extend propositional formulas
by existential ($\exists$) and universal ($\forall$) quantification over atoms.
We consider QBFs over \sig{A} of the form
\begin{equation}\label{qbf:formula}
Q_0 X_0 \ldots Q_n X_n \phi
\end{equation}
where $n \geq 0$,
$X_0$, \ldots, $X_n$ are pairwise disjoint subsets of \sig{A},
every $Q_i$ is either $\exists$ or $\forall$, and
$\phi$ is a propositional formula over \sig{A} in conjunctive normal form (CNF).
QBFs as in \eqref{qbf:formula} are in prenex conjunctive normal form.
More general QBFs can be transformed to this form in a 
satisfiability-preserving way~\cite{gimana09a}.
Atoms in $X_i$ are existentially (universally) quantified if
$Q_i$ is $\exists$ ($\forall$).
Sequences of quantifiers and sets $Q_0 X_0 \ldots Q_n X_n$ are called prefixes,
and abbreviated by $\mathcal{Q}$.
With it, a QBF as in (\ref{qbf:formula}) can be written as $\mathcal{Q}\phi$.
As usual, we represent CNF formulas as sets of clauses, and clauses as sets of literals.
For sets $X$ and $Y$ of atoms such that $X \subseteq Y$,
we define $\fixbf{X}{Y}$ as the set of clauses
$\{ \{p\} \mid p \in X\} \cup \{ \{ \neg p \} \mid p \in Y \setminus X \}$
that selects models containing the atoms in $X$ and no other atom from $Y$.
That is, if $\phi$ is a formula then the models of $\phi \cup \fixbf{X}{Y}$ are
$\{ M \mid M \textnormal{ is a model of }\phi \textnormal{ and } M \cap Y = X\}$.
%
%
Given that a CNF formula is satisfiable if it has some model,
the satisfiability of a QBF can be defined as follows:
\begin{itemize}
\item
$\exists X \phi$ is satisfiable if $\phi \cup \fixbf{Y}{X}$ is satisfiable for some $Y \subseteq X$.
\item
$\forall X \phi$ is satisfiable if $\phi \cup \fixbf{Y}{X}$ is satisfiable for all $Y \subseteq X$.
\item
$\exists X \mathcal{Q} \phi$ is satisfiable if $\mathcal{Q}\big( \phi \cup \fixbf{Y}{X}\big)$ is satisfiable for some $Y \subseteq X$.
\item
$\forall X \mathcal{Q} \phi$ is satisfiable if $\mathcal{Q}\big( \phi \cup \fixbf{Y}{X}\big)$ is satisfiable for all $Y \subseteq X$.
\end{itemize}
The formula $\phi$ in $\mathcal{Q}\phi$ generates a set of models,
while the prefix $\mathcal{Q}$ can be interpreted as a kind of query over them.
Consider $\phi_1 = \{\{a, b, \neg c\}, \{c \}\}$
and its models $\{\{a,c\}, \{a,b,c\}, \{b,c\}\}$.
Adding the prefix $\mathcal{Q}_1 = \exists\{a\}\forall\{b\}$ amounts to querying 
if there is some subset $Y_1$ of $\{a\}$ such that for all subsets $Y_2$ of $\{b\}$
there is some model of $\phi_1$ that contains the atoms in $Y_1 \cup Y_2$ and no other atoms from $\{a,b\}$.
The answer is yes, for $Y_1=\{a\}$, hence $\mathcal{Q}_1\phi_1$ is satisfiable.
One can check that letting $\mathcal{Q}_2$ be $\exists\{b,c\}\forall\{a\}$ it holds that
$\mathcal{Q}_2\phi_1$ is satisfiable,
while letting $\mathcal{Q}_3$ be $\exists\{a\}\forall\{b,c\}$ we have that $\mathcal{Q}_3\phi_1$ is not.
%

\section{Planning Problems}

In this section, we define different planning problems
with deterministic and non-concurrent actions
using a transition function approach
building on the work of~\citeN{tusoba07a}.

The domain of a planning problem is described in terms of
fluents, i.e.\ properties changing over time,
normal actions, and
sensing actions for observing fluent values.
We represent them by disjoint sets
$F$, $A_n$, and $A_s$ of atoms, respectively,
let $A$ be the set $A_n \cup A_s$ of actions,
and assume that $F$ and $A$ are non-empty.
For clarity, we denote sensing actions in $A_s$ by $a^f$ for some $f \in F$,
indicating that $a^f$ observes fluent $f$.
To simplify the presentation,
we assume that sets $F$, $A_n$, $A_s$ and $A$ are fixed.
A state $s$ is a set of fluents, $s \subseteq F$, that represents a snapshot of the domain.
To describe planning domains, we have to specify
what is the next state after the application of actions.
Technically, this is done by a transition function \trf, that is,
%
a function that takes as arguments a state and an action,
and returns either one state or the bottom symbol $\bot$.
Formally,
$\trf \colon \mathcal{P}(F) \times A \to
             \mathcal{P}(F) \cup \{ \bot \}$,
where $\mathcal{P}(F)$ denotes the power set of $F$.
The case where $\trf(s,a) = \bot$
represents that action $a$ is not executable in state $s$.
%
%

\emph{Example.}
Let $R$ be the set of rooms $\{1, \ldots, r\}$.
We represent our example domain with
the fluents $F = \{ at(x), clean(x), occupied(x) \mid x \in R \}$,
normal actions $A_n = \{ go, sweep \}$, and
sensing actions $A_s = \{sense(occupied(x)) \mid x \in R\}$.
For $r=2$, $s_1 = \{at(1), clean(1) \}$ is the state representing the initial situation
of our classical planning example.
%
%
The transition function $\trf_e$ can be defined as follows:
$\trf_e(s,go)$ is $(s \setminus \{at(x) \mid x \in R\}) \cup \{at(x+1) \mid at(x) \in s, x < r\}
\cup \{ at(r) \mid at(r) \in s \}$,
$\trf_e(s,sweep)$ is $s \cup \{clean(x) \mid at(x) \in s\}$ if, for all $x \in R$,
$at(x) \in s$ implies $occupied(x) \notin s$, and is $\bot$ otherwise; and,
for all $x \in R$, $\trf_e(s,sense(occupied(x)))$ is $s$ if $at(x) \in s$ and is $\bot$ otherwise.
%
%

Once we have fixed the domain,
we can define a planning problem as a tuple
$\langle \trf, I, G \rangle$
where $\trf$ is a transition function,
and $I,G \subseteq \mathcal{P}(F)$ are non-empty sets of initial and goal states.
A conformant planning problem is a planning problem with no sensing actions, viz.\ $A_s=\emptyset$,
and a classical planning problem is a conformant planning problem where $I$ is a singleton.
A planning problem with assumptions is then a tuple
$\langle \trf, I, G, As \rangle$
where $\langle \trf, I, G\rangle$ is a planning problem
and $As \subseteq F$ is a set of possible assumptions.

\emph{Example. }
The initial situation is $I_1 = \{s_1\}$ for our classical planning problem,
$I_2 = \{\{at(1)\}\cup X \mid X \subseteq \{clean(1),clean(2)\}\}$ for the first conformant planning problem and
$I_3 = \{ X \cup Y \mid X \in I_2, Y \in \{ \{ occupied(1) \}, \{ occupied(2) \} \} \}$ for the second one.
All problems share the goal states $G_e=
\{ s \subseteq F \mid  \textnormal{ for all } x \in R
  \textnormal{ either } occupied(x) \in s
  \textnormal{ or } clean(x) \in s\}$.
Let
$\pp_1$ be $\langle \trf_e, I_1, G_e \rangle$,
$\pp_2$ be $\langle \trf_e, I_2, G_e \rangle$, and
$\pp_3$ be $\langle \trf_e, I_3, G_e \rangle$.
If we disregard sensing actions (and adapt $\trf_e$ consequently)
these problems correspond
to our examples of classical and conformant planning, respectively.
The one of assumption-based planning is given by
$\pp_4 = \langle \trf_e, I_3, G_e, \{occupied(1),occupied(2)\} \rangle$, and
the one of conditional planning by $\pp_3$ with sensing actions.

Our next step is to define the solutions of a planning problem.
For this, we specify what is a plan
and extend transitions functions to apply to plans and sets of states.
A plan and its length are defined inductively as follows:
%
\begin{itemize}
\item
$[]$ is a plan, denoting the empty plan of length $0$.
\item
If $a\in A_n$ is a non-sensing action and $p$ is a plan,
then $[a;p]$ is a plan of length one plus the length of $p$.
\item
If $a^f \in A_s$ is a sensing action, and \planif{p}{f}, $\planifnot{p}{f}$ are plans,
then $[a^f; (p_f, p_{\overline{f}}) ]$ is a plan
of length one plus the maximum of the lengths of $\planif{p}{f}$ and $\planifnot{p}{f}$.
\end{itemize}
We simplify notation and write $[a;[]]$ as $[a]$,
and $[a;[\sigma]]$ as $[a;\sigma]$ for any action $a$ and plan $[\sigma]$.
For example,
$p_1 = [go; sweep]$,
$p_2 = [sweep; go; sweep]$, and
$p_3 = [sense(occupied(1));$ $([go;clean],[clean])]$ are plans
of length $2$, $3$, and $3$, respectively.

We extend the transition function $\trf$ to a set of states $S$
as follows:
$\trf(S,a)$ is $\bot$ if there is some $s \in S$ such that $\trf(a,s)=\bot$,
and is $\bigcup_{s \in S} \trf(s,a)$ otherwise.
%
%
In our example,
$\trf_e(I_1,go) = \{\{at(2),clean(1)\}\}$,
$\trf_e(I_2,sweep) = \{\{at(1),clean(1)\},\{at(1),clean(1),$ $clean(2)\}\}$,
$\trf_e(I_3,sweep) = \bot$, and
$\trf_e(\trf_e(I_3,go),sweep) = \bot$.
%
%
With this, we can extend the transition function $\trf$ to plans as follows.
Let
$p$ be a plan, and $S$ a set of states, then:
\begin{itemize}
\item
If $p=[]$ then $\trf(S,p)=S$.
\item
If $p=[a;q]$, where $a$ is a non-sensing action and $q$ is a plan, then
\[
\trf(S,p) =
\begin{cases}
\bot & \textnormal{if } \trf(S,a) = \bot \\
\trf(\trf(S,a),q) & \textnormal{otherwise}
\end{cases}
\]
\item
If
$p = [a^f; (\planif{q}{f}, \planifnot{q}{f})]$,
where $a^f$ is a sensing action and
$\planif{q}{f}$, $\planifnot{q}{f}$ are plans,
then
\[
\trf(S,p) =
\begin{cases}
    \bot & \textnormal{if either } \trf(S,a^f), \trf(S^f,\planif{q}{f}) \textnormal{ or } \trf(S^{\ifnot{f}},\planifnot{q}{f}) \textnormal{ is } \bot \\
\trf(S^f          ,\planif   {q}{f})  \cup
\trf(S^{\ifnot{f}},\planifnot{q}{f})
& \textnormal{otherwise}
\end{cases}
\]
where $S^f=\{s \mid f    \in s, s \in \trf(S,a^f) \}$ and
      $S^{\ifnot{f}}=\{s \mid f \notin s, s \in \trf(S,a^f) \}$.
\end{itemize}
In our example,
$\trf(I_1,p_1) = \{\{at(2),clean(1),clean(2)\}\}$,
$\trf(I_2,p_2) = \{\{at(2), clean(1),$ $clean(2)\}\}$,
$\trf(I_3,q) = \bot$ for any plan $q$ without sensing actions that involves some $sweep$ action,
$\trf(\{s \in I_3 \mid occupied(1) \in s\},p_1) =
\{ \{at(2),occupied(1),clean(2)\} \cup X \mid X \subseteq \{clean(1)\}\}$, and
$\trf(I_3,p_3) =
\{ \{at(2),occupied(1),clean(2)\} \cup X \mid X \subseteq \{clean(1)\}\} \cup
\{ \{at(1),occupied(2),clean(1)\} \cup X \mid X \subseteq \{clean(2)\}\}$.

We can now define the solutions of planning problems:
a plan $p$ is a solution to planning problem
$\langle \trf, I, G \rangle$
if $\trf(I,p)\neq\bot$ and $\trf(I,p) \subseteq G$.
In our example,
plan $p_1$ solves $\pp_1$,
$p_2$ solves $\pp_2$, and
$p_3$ solves $\pp_3$.
There is no plan without sensing actions solving $\pp_3$.
For assumption-based planning, a tuple $\langle p, T, F \rangle$,
where $p$ is a plan and $T,F \subseteq As$,
is a solution to a planning problem with assumptions
$\langle \trf, I, G, As \rangle$
if
(1) $J = \{s \mid s \in I, T \subseteq s, s \cap F = \emptyset\}$ is not empty, and
(2) $p$ solves the planning problem $\langle \trf, J, G \rangle$.
Condition (1) guarantees that the true assumptions $T$ and the false assumptions $F$
are consistent with some initial state, and
condition (2) checks that $p$ achieves the goal starting from all initial states consistent with the assumptions.
%
%
For example,
the planning problem with assumptions $\pp_4$ is solved by
$\langle p_1, \{occupied(1)\},\{\}\rangle$
and by
$\langle [sweep], \{occupied(2)\},\{\}\rangle$.
%

\section{Representing Planning Problems in ASP}

In this section we present an approach for representing planning problems
using logic programs.
Let $F$, $A_n$, $A_s$ and $A$ be sets of atoms as before, and
let $F'=\{ f' \mid f \in F\}$ be a set of atoms that we assume to be disjoint from the others.
We use the atoms in $F'$ to represent the value of the fluents in the previous situation.
We represent planning problems by planning descriptions, 
that consist of dynamic rules to represent transition functions, 
initial rules to represent initial states, and
goal rules to represent goal states.
Formally, a dynamic rule is a rule whose head atoms belong to $F$ and whose body atoms belong to $A \cup F \cup F'$,
an initial rule is a rule whose atoms belong to $F$,
and a goal rule is an integrity constraint whose atoms belong to $F$.
%
%
Then a planning description $\dd$ is a tuple $\langle \mathit{DR}, \mathit{IR}, \mathit{GR} \rangle$ of
dynamic rules $\mathit{DR}$, initial rules $\mathit{IR}$, and goal rules $\mathit{GR}$.
By $\dyn{\dd}$, $\init{\dd}$ and $\final{\dd}$ we refer to the elements $\mathit{DR}$, $\mathit{IR}$ and $\mathit{GR}$, respectively.
%

\emph{Example.}
We represent the transition function $\trf_e$ by the following dynamic rules $\mathit{DR}_e$:
\begin{equation*}
\begin{array}{r@{\ }c@{\ }lr@{\ }c@{\ }l}
at(R) & \leftarrow & go, \prev{at(R-1)}, R<r
 & 
clean(R) & \leftarrow & sweep, \prev{at(R)}\\
at(r) & \leftarrow & go, \prev{at(r)}
 &
clean(R) & \leftarrow & \prev{clean(R)}\\
at(R) & \leftarrow & \naf{go}, \prev{at(R)}
 &
occupied(R) & \leftarrow & \prev{occupied(R)}
\\
\bot & \leftarrow & sweep, \prev{at(R)}, \prev{occupied(R)}
 &
\bot & \leftarrow & sense(occupied(R)),  \naf{\prev{at(R)}}
\end{array}\end{equation*}%
%
On the left column, 
the normal rules describe the position of the robot 
depending on its previous position and the action $go$, while
the integrity constraint below forbids the robot 
to sweep if it is in a room that is occupied.
On the right column, the first two rules state when a room is clean, 
the third one expresses that rooms remain occupied if they were before, and
the last integrity constraint forbids the robot to observe a room if it is not at it.
%
%
For the initial states, $I_1$ is represented by the initial rules
$\mathit{IR}_1 = \{ at(1) \leftarrow ; clean(1) \leftarrow \}$, 
$I_2$ is represented by $\mathit{IR}_2$:
\begin{equation*}\begin{array}{r@{\ }c@{\ }lcr@{\ }c@{\ }l}
        at(1) & \leftarrow & \phantom{R=1..r} & &
\{ clean(R) \} & \leftarrow & R=1..r
\end{array}\end{equation*}
and $I_3$ by $\mathit{IR}_3$, that contains the rules in $\mathit{IR}_2$ and also these ones:
\begin{equation*}\begin{array}{r@{\ }c@{\ }lccr@{\ }c@{\ }l}
\{ occupied(R) \} & \leftarrow & R=1..r  & & &
\bot & \leftarrow & \{ occupied(R) \} \neq 1
\end{array}\end{equation*}
The choice rules generate the different possible initial states, and
the integrity constraint inforces that exactly one room is occupied.
The goal states $G$ are represented by $\mathit{GR}_e = \{
\bot \leftarrow \naf{occupied(R)}, \naf{clean(R)}, R=1..n\}$
that forbids states where some room is not occupied and not clean.

Next we specify formally the relation between planning descriptions and planning problems.
We extract a transition function $\trf(s,a)$
from a set of dynamic rules $\mathit{DR}$ by looking at the stable models 
of the program $s' \cup \{a \leftarrow\} \cup \mathit{DR}$ for every state $s$ and action $a$, 
where $s'$ is $\{f' \mid f \in s\}$.
The state $s$ is represented as $s'$ to stand for the previous situation, 
and the action and the dynamic rules generate the next states.
Given that we consider only deterministic transition functions, 
we restrict ourselves to deterministic problem descriptions, 
where we say that a set of dynamic rules $\mathit{DR}$ is deterministic if 
for every state $s \subseteq F$ and action $a \in A$, 
the program $s' \cup \{a\leftarrow\} \cup \mathit{DR}$ has zero or one stable model, 
and a planning description $\dd$ is deterministic if \dyn{\dd} is deterministic.
A deterministic set of dynamic rules $\mathit{DR}$ defines the transition function 
\[
\trf_{\mathit{DR}}(s,a) = 
\begin{cases}
M \cap F & \textnormal{ if } s' \cup \{a\} \cup \mathit{DR} \textnormal{ has a single stable model }M\\
\bot & \textnormal{otherwise}
\end{cases}
\]
defined for every state $s \subseteq F$ and action $a \in A$.
Note that given that $\mathit{DR}$ is deterministic,
the second condition only holds when the program $s' \cup \{a\} \cup \mathit{DR}$ has no stable models.
For the case where no action occurs, 
we require dynamic rules to make the previous state persist.
This condition is not strictly needed, 
but it makes the formulation of the solutions to planning problems in Section \ref{section:solving} easier.
Formally, we say that a set of dynamic rules $\mathit{DR}$ is inertial if 
for every state $s \subseteq F$ it holds that $SM(s' \cup \mathit{DR}) = \{s' \cup s\}$, and
we say that a planning description $\dd$ is inertial if \dyn{\dd} is inertial.
From now on,
we restrict ourselves to deterministic and inertial planning descriptions.

Coming back to the initial and goal rules of a planning description $\dd$,
the first ones represent the initial states $SM(\init{\dd})$, 
while the second ones represent the goal states $SM(\{\{f\}\leftarrow \mid f \in F\}\cup \final{\dd})$.
In the latter case,
the choice rules generate all possible states while 
the integrity constraints in \final{\dd} eliminate those that are not goal states.
Given that we consider only non-empty subsets of initial and goal states, 
we require the programs
\init{\dd} and $\{\{f\}\leftarrow \mid f \in F\}\cup \final{\dd}$
to have at least one stable model.
Finally, putting all together, we say that
a deterministic and inertial planning description $\dd$ represents the planning problem
$\langle \trf_{\dyn{\dd}}, SM(\init{\dd}), SM( \{\{f\}\leftarrow \mid f \in F\}\cup \final{\dd}) \rangle$.
Moreover, the planning description $\dd$ together with a set of atoms $As$
represent the planning problem with assumptions
$\langle \trf_{\dyn{\dd}}, SM(\init{\dd}), SM( \{\{f\}\leftarrow \mid f \in F\}\cup \final{\dd}), As \rangle$.

\emph{Example}.
One can check that the dynamic rules $\mathit{DR}_e$
are deterministic, inertial, and define the transition function $\trf_e$ of the example.
Moreover, 
$\dd_1 = \langle \mathit{DR}_e, \mathit{IR}_1, \mathit{GR}_e \rangle$ represents $\pp_1$,  
$\dd_2 = \langle \mathit{DR}_e, \mathit{IR}_2, \mathit{GR}_e \rangle$ represents $\pp_2$,
$\dd_3 = \langle \mathit{DR}_e, \mathit{IR}_3, \mathit{GR}_e \rangle$ represents $\pp_3$, and
$\dd_3$ with the set of atoms $\{occupied(1), occupied(2)\}$ represents $\pp_4$.

\section{Quantified Answer Set Programming}

Quantified Answer Set Programming (QASP) is an extension of ASP to quantified logic programs (QLPs),
analogous to the extension of propositional formulas by QBFs.
A quantified logic program over \sig{A} has the form
\begin{equation}\label{qasp:program}
Q_0 X_0 \ldots Q_n X_n P
\end{equation}
where $n \geq 0$,
$X_0$, \ldots, $X_n$ are pairwise disjoint subsets of \sig{A},
every $Q_i$ is either $\exists$ or $\forall$, and
$P$ is a logic program over \sig{A}.
Prefixes are the same as in QBFs, and we may refer to a QLP as in (\ref{qasp:program}) by $\mathcal{Q}P$.
For sets $X$ and $Y$ of atoms such that $X \subseteq Y$,
we define $\fixcons{X}{Y}$ as the set of rules
\(
\{\bot \leftarrow \neg{x} \mid x \in X \} \cup \{\bot \leftarrow x \mid x \in Y \setminus X \}
\)
that selects stable models containing the atoms in $X$ and no other atom from $Y$.
That is, if $P$ is a logic program then the stable models of $P \cup \fixcons{X}{Y}$ are
$\{ M \mid M \textnormal{ is a stable model of } P \textnormal{ and } M \cap Y = X\}$.
Given that a logic program is satisfiable if it has a stable model,
the satisfiability of a QLP is defined as follows:
\begin{itemize}
\item $\exists X P$ is satisfiable if program $P \cup \fixcons{Y}{X}$ is satisfiable for some $Y \subseteq X$.
\item $\forall X P$ is satisfiable if program $P \cup \fixcons{Y}{X}$ is satisfiable for all $Y \subseteq X$.
\item $\exists X \mathcal{Q} P$ is satisfiable if program $\mathcal{Q}\big( P \cup \fixcons{Y}{X}\big)$ is satisfiable for some $Y \subseteq X$.
\item $\forall X \mathcal{Q} P$ is satisfiable if program $\mathcal{Q}\big( P \cup \fixcons{Y}{X}\big)$ is satisfiable for all $Y \subseteq X$.
\end{itemize}
As with QBFs, program $P$ in $\mathcal{Q}P$ generates a set of stable models,
while its prefix $\mathcal{Q}$ can be seen as a kind of query on it.
Consider
\(
P_1=\{\{a\} \leftarrow {}; \{b\} \leftarrow {}; c \leftarrow a; c \leftarrow b; \bot \leftarrow \naf{c}\}
\)
and its stable models $\{a,c\}$, $\{a,b,c\}$, $\{b,c\}$.
The prefixes of $\mathcal{Q}_1 P_1$, $\mathcal{Q}_2 P_1$ and $\mathcal{Q}_3 P_1$
pose the same queries over the stable models of $P$ than those posed
in $\mathcal{Q}_1 \phi_1$, $\mathcal{Q}_2 \phi_1$ and $\mathcal{Q}_3 \phi_1$
over the models of $\phi_1$.
Given that the stable models of $P_1$ and the models of $\phi_1$ coincide,
the satisfiability of the $\mathcal{Q}_iP_1$'s is the same as that of
the corresponding $\mathcal{Q}_i\phi_1$'s.
This result is generalized by the following theorem that relates QLPs and QBFs.
%
\begin{theorem}\label{qasp:theorem:qbf}
Let $P$ be a logic program over \sig{A} and
$\phi$ be a CNF formula over $\sig{A}\cup\sig{B}$
such that $SM(P) = \{ M \cap \sig{A} \mid M \textnormal{ is a model of } \phi\}$.
For every prefix $\mathcal{Q}$ whose sets belong to \sig{A},
the QLP $\mathcal{Q}P$ is satisfiable if and only if the QBF $\mathcal{Q}\phi$ is satisfiable.
\end{theorem}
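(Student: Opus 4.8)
The plan is to argue by induction on the length of the prefix $\mathcal{Q}$, that is, on the number $n+1$ of quantifier blocks. The whole argument is driven by one preservation lemma, which I would isolate first: if $SM(P) = \{ M \cap \sig{A} \mid M \textnormal{ is a model of } \phi \}$ and $X \subseteq \sig{A}$, then for every $Y \subseteq X$ the fixed programs and formulas still stand in the same projection relationship,
\[
SM(P \cup \fixcons{Y}{X}) = \{ M \cap \sig{A} \mid M \textnormal{ is a model of } \phi \cup \fixbf{Y}{X} \}.
\]
This follows by combining the two selection properties already recorded in the excerpt: the stable models of $P \cup \fixcons{Y}{X}$ are exactly the $M' \in SM(P)$ with $M' \cap X = Y$, and the models of $\phi \cup \fixbf{Y}{X}$ are exactly the models $M$ of $\phi$ with $M \cap X = Y$. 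Because $X \subseteq \sig{A}$ we have $(M \cap \sig{A}) \cap X = M \cap X$, so the selection of stable models by their trace on $X$ mirrors exactly the selection of models of $\phi$ by their trace on $X$, and the hypothesis transports one set of selected projections onto the other.

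For the base case of a single-block prefix $Q_0 X_0$, I would unfold satisfiability down to its ground meaning, using that a logic program is satisfiable iff it has a stable model and a CNF formula is satisfiable iff it has a model. If $Q_0 = \exists$, then $\exists X_0 P$ is satisfiable iff $P \cup \fixcons{Y}{X_0}$ has a stable model for some $Y \subseteq X_0$, i.e.\ iff some $M' \in SM(P)$ has $M' \cap X_0 = Y$; by the hypothesis together with $X_0 \subseteq \sig{A}$ this is equivalent to some model $M$ of $\phi$ having $M \cap X_0 = Y$, i.e.\ to $\phi \cup \fixbf{Y}{X_0}$ being satisfiable for some $Y$, which is exactly the satisfiability of $\exists X_0 \phi$. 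The case $Q_0 = \forall$ is word-for-word the same with ``for some'' replaced by ``for all''.

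For the inductive step, write $\mathcal{Q} = Q_0 X_0 \mathcal{Q}'$ where $\mathcal{Q}'$ has fewer blocks and all of its sets still lie in $\sig{A}$. Fix any $Y \subseteq X_0$. The preservation lemma shows that $P \cup \fixcons{Y}{X_0}$ and $\phi \cup \fixbf{Y}{X_0}$ again satisfy the projection hypothesis of the theorem, so the induction hypothesis applies to them with prefix $\mathcal{Q}'$ and gives that $\mathcal{Q}'(P \cup \fixcons{Y}{X_0})$ is satisfiable iff $\mathcal{Q}'(\phi \cup \fixbf{Y}{X_0})$ is. Quantifying this equivalence over $Y \subseteq X_0$ --- existentially when $Q_0 = \exists$ and universally when $Q_0 = \forall$ --- and matching it against the recursive clauses for $\exists X_0 \mathcal{Q}'$ and $\forall X_0 \mathcal{Q}'$ in the two satisfiability definitions closes the induction.

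I expect the preservation lemma to be the only real obstacle, and specifically the point where the hypothesis ``whose sets belong to $\sig{A}$'' earns its keep: the identity $(M \cap \sig{A}) \cap X = M \cap X$ breaks if $X$ is permitted to meet the auxiliary atoms $\sig{B}$, since fixing a $\sig{B}$-atom in the QBF would have no stable-model counterpart and could change satisfiability on only one side. Granting that restriction, the remainder is bookkeeping that exploits the deliberate parallelism between $\fixcons{\cdot}{\cdot}$ and $\fixbf{\cdot}{\cdot}$.
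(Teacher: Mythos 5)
Your proof is correct and follows essentially the same route as the paper, which only indicates that the argument proceeds by induction on the number of quantifiers in $\mathcal{Q}$. Your preservation lemma, resting on the identity $(M \cap \sig{A}) \cap X = M \cap X$ for $X \subseteq \sig{A}$, is exactly the right invariant to carry that induction through.
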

The proof is by induction on the number of quantifiers in $\mathcal{Q}$.
The condition $SM(P) = \{ M \cap \sig{A} \mid M \textnormal{ is a model of } \phi\}$
of Theorem \ref{qasp:theorem:qbf} is satisfied by
existing polynomial-time translations
from logic programs $P$ over \sig{A} to CNF formulas $\phi$ over 
$\sig{A}\cup\sig{B}$, and
from CNF formulas $\phi$ over $\sig{A}$ to logic programs $P$ over \sig{A}~\cite{janhunen04a}.
%
Using these translations, Theorem \ref{qasp:theorem:qbf}, and
the fact that deciding whether a QBF is satisfiable is PSPACE-complete,
we can prove the following complexity result about QASP.
\begin{theorem}\label{qasp:complexity}
The problem of deciding whether a given QLP $\mathcal{Q}P$ is satisfiable is PSPACE-complete.
\end{theorem}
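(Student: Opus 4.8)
The plan is to prove PSPACE-completeness by establishing membership and hardness separately, leveraging Theorem~\ref{qasp:theorem:qbf} together with the two polynomial-time translations between logic programs and CNF formulas cited from~\cite{janhunen04a}.

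For \textbf{hardness}, I would reduce from QBF satisfiability, which is PSPACE-complete. Given a QBF $\mathcal{Q}\phi$ over $\sig{A}$, I would apply the translation from CNF formulas over $\sig{A}$ to logic programs $P$ over $\sig{A}$ that satisfies the hypothesis of Theorem~\ref{qasp:theorem:qbf} with $\sig{B}=\emptyset$, so that $SM(P) = \{M \mid M \textnormal{ is a model of }\phi\}$. Since this translation runs in polynomial time and the prefix $\mathcal{Q}$ is carried over unchanged (its sets already belong to $\sig{A}$), Theorem~\ref{qasp:theorem:qbf} gives that $\mathcal{Q}P$ is satisfiable if and only if $\mathcal{Q}\phi$ is satisfiable. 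This is a polynomial-time many-one reduction from QBF to QASP, establishing PSPACE-hardness.

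For \textbf{membership}, I would go in the reverse direction. Given a QLP $\mathcal{Q}P$ with $P$ over $\sig{A}$, I would apply the polynomial-time translation from logic programs over $\sig{A}$ to CNF formulas $\phi$ over $\sig{A}\cup\sig{B}$ with $SM(P) = \{M \cap \sig{A} \mid M \textnormal{ is a model of }\phi\}$. The extra atoms in $\sig{B}$ do not appear in any set of the prefix $\mathcal{Q}$, so Theorem~\ref{qasp:theorem:qbf} applies directly and yields that $\mathcal{Q}P$ is satisfiable if and only if $\mathcal{Q}\phi$ is satisfiable. Since $\mathcal{Q}\phi$ can be computed in polynomial time and QBF satisfiability is in PSPACE, deciding the satisfiability of $\mathcal{Q}P$ is in PSPACE.

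The step requiring the most care is checking that the translations of~\cite{janhunen04a} interact correctly with the prefix. In particular, for membership the auxiliary atoms $\sig{B}$ introduced by the logic-program-to-CNF translation must be genuinely disjoint from every set quantified in $\mathcal{Q}$ and from $\sig{A}$, which is exactly the form demanded by the hypothesis of Theorem~\ref{qasp:theorem:qbf}; this is what lets the prefix remain untouched and the equivalence go through. Once that bookkeeping is confirmed, both directions are immediate corollaries of Theorem~\ref{qasp:theorem:qbf} and the known PSPACE-completeness of QBF, so no further combinatorial work is needed.
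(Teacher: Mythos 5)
Your proposal is correct and follows exactly the route the paper indicates: membership via the polynomial-time logic-program-to-CNF translation combined with Theorem~\ref{qasp:theorem:qbf} and the fact that QBF is in PSPACE, and hardness via the reverse CNF-to-logic-program translation combined with the same theorem and the PSPACE-hardness of QBF. The paper only sketches this argument in one sentence, and your write-up is a faithful (and more explicit) elaboration of it, including the correct observation that the auxiliary atoms $\sig{B}$ must stay outside the prefix for Theorem~\ref{qasp:theorem:qbf} to apply.
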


The implementation of our system \qasp\footnote{\url{https://github.com/potassco/qasp2qbf}} relies on the previous results.
The input is a QLP $\mathcal{Q}P$ that is specified by putting together the rules of $P$
with facts over the predicates $\_exists/2$ and $\_forall/2$ describing the prefix $\mathcal{Q}$, 
where $\_exists(i,a)$ ($\_forall(i,a)$, respectively) 
asserts that the atom $a$ is existentially (universally, respectively) quantified at position $i$ of $\mathcal{Q}$.
The system first translates $P$ into a CNF formula $\phi$ that satisfies the condition of Theorem \ref{qasp:theorem:qbf}
using the tools \lpnormal, \lpacyc, and \lpsat,\footnote{%
\url{http://research.ics.aalto.fi/software/asp}} and
then uses a QBF solver to decide the satisfiability of $\mathcal{Q}\phi$.
If $\mathcal{Q}\phi$ is satisfiable and the outermost quantifier is existential,
then the system returns an assignment to the atoms occurring in the scope of that quantifier.

\section{Solving Planning Problems in QASP}
\label{section:solving}

In this section, 
we describe how to solve planning problems represented by a planning description 
using QASP.

\emph{Classical planning.}
We start with a planning description $\mathcal{\dd}$
that represents a classical planning problem $\pp = \langle \trf, \{s_0\}, G \rangle$.
Our task, given a positive integer $n$, 
is to find a plan $[a_1; \ldots; a_n]$ of length $n$ such that $\trf(\{s_0\},p) \subseteq G$.
%
This can be solved as usual in answer set planning~\cite{lifschitz02a}, 
using choice rules to generate possible plans, 
initial rules to define the initial state of the problem, 
dynamic rules replicated $n$ times to define the next $n$ steps, and
goal rules to check the goal conditions at the last step.
To do that in this context, 
we first let \facts\ be the union of the following sets of facts
asserting the time steps of the problem, the actions, 
and the fluents that are sensed by each sensing action:
$\{ t(T) \mid t \in \{1, \ldots, n\} \}$,
$\{ action(a) \mid a \in A \}$, and
$\{ senses(a^f,f) \mid a^f \in A_s\}$.
The last set is only needed for conditional planning, but
we already add it here for simplicity.
Given these facts, the following choice rule generates the possible plans of the problem:
\begin{equation}\label{solving:generate}
\{ occ(A,T) : action(A) \} = 1 \leftarrow t(T)
\end{equation}
Additionally, by \ttx{\dd_\initx} (\ttx{\dd_\finalx}, respectively) we denote the 
set of rules that results from replacing in \init{\dd} (\final{\dd}, respectively)
the atoms $f \in F$ by $h(f,0)$ (by $h(f,n)$, respectively);
by \ttx{\dd_\dynx} we denote the 
set of rules that results from replacing in \dyn{\dd} 
the atoms $f \in F$ by $h(f,T)$,
the atoms $f' \in F'$ by $h(f,T-1)$,
the atoms $a \in A$ by $occ(a,T)$ and 
adding the atom $t(T)$ to the body of every rule; and
by \ttx{\dd} we denote the program $\ttx{\dd_\initx} \cup  \ttx{\dd_\dynx} \cup \ttx{\dd_\finalx}$.
Putting all together,
the program $\facts \cup (\ref{solving:generate}) \cup \ttx{\dd}$
represents the solutions to the planning problem $\pp$.
The choice rule (\ref{solving:generate}) guesses plans $[ a_1; \ldots; a_n ]$ using 
atoms of the form $occ(a_1,1), \ldots, occ(a_n,n)$,
the rules of \ttx{\dd_\initx} define the initial state $s_0$ using atoms of the form $h(\cdot,0)$,
the rules of \ttx{\dd_\dynx} define 
the next states $s_i = \trf(s_{i-1},a_i)$ for $i\in\{1,\ldots,n\}$ using atoms of the form $h(\cdot,i)$ while
at the same time check that the actions $a_i$ are executable in $s_{i-1}$, and
the rules of \ttx{\dd_\finalx} check that the last state $s_n$ 
belongs to $G$.
%
Of course, all this works only because $\dd$ represents $\pp$ and therefore
$\trf$, $\{s_0\}$ and $G$ are defined by $\dyn{\dd}$, \init{\dd} and \final{\dd}, respectively.
%
Finally, letting $Occ$ be the set of atoms
$\{ occ(a,t) \mid a \in A, t \in \{1, \ldots, n\}\}$,
we can represent the solutions to \pp\ by the quantified logic program
\begin{equation}\label{solving:classical}
\exists Occ \big( \facts \cup (\ref{solving:generate}) \cup \ttx{\dd}\big)
\end{equation}
where the atoms selected by the existential quantifier correspond to solutions to $\pp$.
Going back to our example, where $\dd_1$ represents the problem $\pp_1$, 
we have that for $n=2$
the program (\ref{solving:classical}) (adapted to $\dd_1$)
is satisfiable selecting the atoms $\{occ(go,1), occ(sweep,2)\}$ 
that represent the solution $p_1$.

\emph{Conformant planning.}
When $\dd$ represents a conformant planning problem $\pp =\langle \trf, I, G \rangle$
our task is to find a plan $p$ such that $\trf(I,p) \subseteq G$,
or, alternatively, $p$ must be such that for all $s \in I$ it holds that $\trf(\{s\},p) \subseteq G$.
This formulation of the problem suggests to use a prefix $\exists \forall$
where the existential quantifier guesses a plan $p$ and the universal quantifier 
considers all initial states.
Let us make this more concrete.
From now on we assume that $\init{\dd}$ is in \emph{GDT} form.\footnote{%
Note that this implies that \ttx{\dd_{\initx}} is also in \emph{GDT} form.}
If that is not the case then we can translate the program into \emph{GDT} form
using the method mentioned in the Background section, and we continue from there.
Let \generate{}, \define{} and \test{}
be the choice rules, normal rules and integrity constraints of \ttx{\dd_{\initx}}, respectively, 
and let \open{} be the set $\{h(f,0) \mid \{h(f,0)\}\leftarrow \in \generate\}$
of possible guesses of the choice rules of \ttx{\dd_{\initx}}.
Observe that for every possible set $X \subseteq \open{}$
the program $X \cup \define{}$ has a unique stable model $M$, 
and if $M$ is also a model of \test{} then
$M$ is a stable model of \ttx{\dd_{\initx}}.
Moreover, note that all stable models of \ttx{\dd_{\initx}} can be constructed in this manner.
Given this, we say that the sets $X \subseteq \open{}$
that lead to a stable model $M$ of \ttx{\dd_{\initx}} are relevant, 
because they can be used as representatives of the initial states, 
and the other sets in \open{} are irrelevant.
Back to our quantified logic program, 
we are going to use the prefix $\exists Occ \forall \open{}$.
This works well with the logic program 
$\facts \cup (\ref{solving:generate}) \cup \ttx{\dd}$
whenever all choices $X \subseteq \open{}$ are relevant.
But it fails as soon as there are irrelevant sets
because, when we select them as subsets of \open{} in the universal quantifier,
the resulting program becomes unsatisfiable.
To fix this, we can modify our logic program 
so that for the irrelevant sets the resulting program becomes always satisfiable.
We do that in two steps.
First, we modify \ttx{\dd_{\initx}} so that 
the irrelevant sets lead to a unique stable model that contains the special atom $\flag(0)$.
This is done by the program \ttt{\dd_\initx} 
that results from replacing in \ttx{\dd_\initx}
the symbol $\bot$ in the head of the integrity constraints by $\flag(0)$.
Additionally, we consider the rule 
\begin{equation}\label{solving:flags}
\flag(T) \leftarrow t(T), \flag(T{-}1)
\end{equation}
that derives $\alpha(1)$, \ldots, $\alpha(n)$ for the irrelevant sets.
Second, we modify \ttx{\dd_{\dynx}} and \ttx{\dd_{\finalx}} so that 
whenever those special atoms are derived, these programs are inmediately satisfied.
This is done by the programs \ttt{\dd_\dynx} and \ttt{\dd_\finalx}
that result from adding 
the literal $\naf{\flag(T)}$
to the bodies of the rules in \ttx{\dd_\dynx} and \ttx{\dd_\finalx}, respectively.
Whenever the special atoms $\flag(0)$, \ldots, $\flag(n)$ are derived, 
they deactivate the rules in \ttt{\dd_\dynx} and \ttt{\dd_\finalx}
and make the program satisfiable.
We denote by \ttt{\dd} the program $\ttt{\dd_\initx} \cup  \ttt{\dd_\dynx} \cup \ttt{\dd_\finalx}$.
Then the following theorem establishes the correctness and completeness of the approach.
\begin{theorem}\label{theorem:confp} 
Let $\dd$ be a planning description that represents a conformant planning problem $\pp$, and $n$ be a positive integer.
If $\init{P}$ is in $GDT$ form, then
there is a plan of length $n$ that solves $\pp$ 
if and only if the following quantified logic program is satisfiable:
\begin{equation}\label{solving:conformant}
\exists Occ \forall \open{} \big( \facts \cup \ttt{\dd} \cup (\ref{solving:generate})\cup(\ref{solving:flags}) \big)
\end{equation}
\end{theorem}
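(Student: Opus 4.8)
The plan is to prove the biconditional by establishing a tight correspondence between plans of length $n$, the existential choices of $Occ$, and the behaviour of the quantified program on every universal choice $X \subseteq \open{}$. The structure mirrors the $\exists\forall$ prefix: I would fix an existential witness (a plan) and show that the inner program is satisfiable for \emph{all} $X \subseteq \open{}$ exactly when the corresponding plan drives every initial state into $G$.

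First I would set up the semantic dictionary. By the definition of the QLP semantics, $(\ref{solving:conformant})$ is satisfiable iff there is some $Y \subseteq Occ$ such that for all $X \subseteq \open{}$ the program $\facts \cup \ttt{\dd} \cup (\ref{solving:generate}) \cup (\ref{solving:flags}) \cup \fixcons{Y}{Occ} \cup \fixcons{X}{\open{}}$ is satisfiable. Because $(\ref{solving:generate})$ is the choice rule generating exactly one $occ(a,T)$ per time step, the relevant witnesses $Y$ are in bijection with plans $p = [a_1;\ldots;a_n]$; so I would first argue that WLOG $Y$ encodes a genuine plan (any other $Y$ makes the generator unsatisfiable and cannot serve as a witness). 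Fixing such a $p$, the claim reduces to: $\trf(I,p) \subseteq G$ iff for every $X \subseteq \open{}$ the fixed inner program has a stable model.

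Next I would exploit the \open{}/\generate{}/\define{}/\test{} decomposition already established before the theorem. For each $X \subseteq \open{}$, the program $X \cup \define{}$ has a unique stable model $M_X$; the set $X$ is \emph{relevant} iff $M_X$ satisfies \test{}, in which case $M_X$ is a stable model of $\ttx{\dd_\initx}$ and represents an initial state $s_X = \{f \mid h(f,0) \in M_X\} \in I$. Here the flag machinery does its work, and this is the heart of the argument. For relevant $X$, no flag $\flag(0)$ fires, so $\ttt{\dd}$ behaves exactly like $\ttx{\dd}$; I would then invoke the classical-planning correspondence (the paragraph establishing that $\ttx{\dd}$ faithfully computes $s_i = \trf(s_{i-1},a_i)$) to conclude the inner program is satisfiable iff the plan $p$ started from $s_X$ stays executable and lands in $G$, i.e.\ iff $\trf(\{s_X\},p) \subseteq G$. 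For irrelevant $X$, the modified program $\ttt{\dd_\initx}$ derives $\flag(0)$, rule $(\ref{solving:flags})$ propagates $\flag(1),\ldots,\flag(n)$, and the added $\naf{\flag(T)}$ literals deactivate every dynamic and goal rule, so the inner program is \emph{trivially} satisfiable and imposes no constraint. Hence the universal quantifier effectively ranges only over relevant $X$, which by the preceding observations are exactly representatives of all states in $I = SM(\init{\dd})$.

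Combining these, for a fixed plan $p$ the inner program is satisfiable for all $X \subseteq \open{}$ iff $\trf(\{s\},p) \subseteq G$ for every $s \in I$, which by the set-of-states reformulation noted in the theorem's motivation is equivalent to $\trf(I,p) \subseteq G$, i.e.\ $p$ solves $\pp$. Quantifying over the existential witness $p$ then yields the biconditional. \textbf{The main obstacle} I anticipate is the careful verification of the flag mechanism: I must check that flags are derived \emph{precisely} on the irrelevant sets and nowhere else (so that relevant computations are untouched), and that once a flag is present the deactivation genuinely makes the whole inner program satisfiable rather than merely removing some constraints — in particular that $\fixcons{X}{\open{}}$ together with the now-unconstrained $h(\cdot,0)$ atoms still admits a stable model. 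The rest is a routine unfolding of the stable-model semantics of a stratified, essentially deterministic program across the $n$ time steps.
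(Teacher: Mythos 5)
Your proposal is correct and follows essentially the same route as the paper's proof: reduce satisfiability via the $\exists\forall$ semantics, argue that only plan-shaped existential witnesses matter (the paper's Lemma~\ref{lem:aux5:theorem:confp}), split the universal choices into relevant and irrelevant sets via the \generate{}/\define{}/\test{} decomposition with the flag mechanism handling the irrelevant ones (Lemmas~\ref{lem:aux1b:theorem:confp} and~\ref{lem:aux3:theorem:confp}, where the paper explicitly constructs the trivial stable model you worry about), and reduce the relevant cases to the step-by-step correspondence between $\ttx{\dd}$ and $\trf$ (which the paper works out by induction with the Splitting Theorem in Lemmas~\ref{lem:aux0a:theorem:confp}--\ref{lem:aux0:theorem:confp}). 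The only difference is one of rigor, not of strategy: what you invoke as "the classical-planning correspondence" and "routine unfolding" is exactly the part the paper proves carefully, including the unsatisfiability of the inner program when the plan is not executable.
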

In our example, where $\dd_2$ represents the problem $\pp_2$, 
for $n=3$
the program (\ref{solving:conformant}) (adapted to $\dd_2$)
is satisfiable selecting the atoms $\{occ(sweep,1), occ(go,2), occ(sweep,3)\}$ 
that represent the solution $p_2$,
while for $\dd_3$, that represents $\pp_3$, the
corresponding program is unsatisfiable for any integer $n$.

\emph{Assumption-based planning.}
Let 
$\dd$, together with a set of atoms $As \subseteq F$, 
represent a conformant planning problem with assumptions $\pp=\langle \trf, I, G , As\rangle$.
To solve this problem we have to find a plan $p=[a_1; \ldots; a_n]$ and
a set of assumptions $T,F \subseteq As$ such that
(1) the set $J = \{s \mid s \in I, T \subseteq s, s \cap F = \emptyset\}$ is not empty, and 
(2) $p$ solves the conformant planning problem $\langle \trf, J, G \rangle$.
The formulation of the problem suggests that first we can 
guess the set of assumptions $T$ and $F$, and
then check (1) and (2) separately.
The guess can be represented by the set of rules \guess\
that consists of the facts
$\{assumable(f) \mid f \in As\}$
and the choice rule
\[
\{ assume(F,true); assume(F,false) \} \leq 1 \leftarrow assumable(F)
\]
that generates all possible sets of assumptions using the predicate $assume/2$.
Moreover, we add the set of atoms $\assumeatoms = \{ assume(f,v) \mid f \in As, v \in \{true,false\}\}$ 
to $Occ$ at the outermost existential quantifier of our program.
Condition (1) can be checked by the set of rules \cone\ that can be divided in two parts.
The first part is a copy of the initial rules,
that consists of the rules that result from replacing in \init{P} every atom $f \in F$ by $init(f)$, 
and in this way generates all possible initial states in $I$ using the predicate $init/1$. 
The second part contains the integrity constraints 
\begin{equation*}\begin{array}{rclcccrcl}
\bot & \leftarrow & \naf{init(F)}, assume(F,true) & & &
\bot & \leftarrow &     {init(F)}, assume(F,false)
\end{array}\end{equation*}
that guarantee that the guessed assumptions represented by $assume/2$ 
hold in some initial state represented by $init/1$.
Condition (2) can be represented extending the program for conformant planning
with the following additional rules \ctwo, stating that 
the initial states that do not agree with the guessed assumptions are irrelevant:
\begin{equation*}
\begin{array}{rclcccrcl}
\flag(0) & \leftarrow & \naf{h(F,0)}, assume(F,true) & & &
\flag(0) & \leftarrow & \phantom{\naf{}}h(F,0), assume(F,false)
\end{array}\end{equation*}
With these rules, the plans only have to succeed 
starting from the initial states that agree with the assumptions, 
and condition (2) is satisfied.
\begin{theorem}\label{theorem:assump} 
Let $\dd$, $\pp$, and $As$ be as specified before, and $n$ be a positive integer.
If $\init{\dd}$ is in \emph{GDT} form, then
there is a plan with assumptions of length $n$ that solves $\pp$
if and only if the following quantified logic program is satisfiable:
\begin{equation}\label{solving:assump}
\exists ( Occ \cup \assumeatoms) \forall \open{} \big( \facts \cup \ttt{\dd} \cup (\ref{solving:generate})\cup(\ref{solving:flags}) \cup \guess \cup \cone \cup \ctwo \big)
\end{equation}
\end{theorem}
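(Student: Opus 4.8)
The plan is to reduce Theorem~\ref{theorem:assump} to the already-established conformant-planning result (Theorem~\ref{theorem:confp}), treating the guessed assumptions as a parameter. I would begin by fixing an arbitrary assignment to the newly existentially quantified atoms \assumeatoms, which partitions $As$ into the true assumptions $T = \{f \mid assume(f,true) \in \assumeatoms\text{-assignment}\}$ and the false assumptions $F = \{f \mid assume(f,false)\}$ (with the $\leq 1$ choice rule guaranteeing these are disjoint and some $f$ may be left unassigned). The key observation is that, once this assignment is fixed, the program \eqref{solving:assump} decomposes into two independent obligations that exactly mirror conditions~(1) and~(2) in the definition of a solution to a planning problem with assumptions.

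For condition~(1), I would argue that the rules \cone\ are satisfiable precisely when $J = \{s \mid s \in I, T \subseteq s, s \cap F = \emptyset\}$ is nonempty. The copy of \init{\dd} with atoms renamed to $init(\cdot)$ generates, via its stable models, exactly the initial states $I = SM(\init{\dd})$ in the $init/1$ vocabulary; the two integrity constraints of \cone\ then eliminate every such state that fails to contain a $T$-assumption or that does contain an $F$-assumption. Since \cone\ uses a private vocabulary ($init/1$, $assume/2$) that does not interfere with the $h(\cdot,\cdot)$ and $occ(\cdot,\cdot)$ atoms, the existence of a stable model of this block is equivalent to $J \neq \emptyset$. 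Crucially this block is evaluated under the existential prefix (it contains no \open{}-atoms), so it acts as a satisfiability filter on the guessed assumptions rather than being subjected to the $\forall$.

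For condition~(2), I would show that the remaining program coincides with the conformant encoding \eqref{solving:conformant} for the restricted problem $\langle \trf, J, G\rangle$, with the role of \ttx{\dd_\initx} now played by the combination of the original initial rules and the extra block \ctwo. The two rules of \ctwo\ fire $\flag(0)$ exactly on those $\open{}$-guesses $X$ whose induced initial state $s$ disagrees with the assumptions, i.e.\ $T \not\subseteq s$ or $s \cap F \neq \emptyset$. By the flag-propagation mechanism already analysed in Theorem~\ref{theorem:confp} --- rule~\eqref{solving:flags} lifts $\flag(0)$ to all $\flag(T)$, and the $\naf{\flag(T)}$ literals in \ttt{\dd_\dynx}, \ttt{\dd_\finalx} then trivially satisfy those blocks --- these disagreeing guesses become vacuously satisfiable and are effectively removed from the universal obligation. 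Thus the $\forall \open{}$ quantifier ranges effectively over exactly the initial states in $J$, and the satisfiability of this part is equivalent, by Theorem~\ref{theorem:confp} applied to $\langle \trf, J, G\rangle$, to the existence of a length-$n$ plan solving that conformant problem.

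Combining the two, I would conclude: \eqref{solving:assump} is satisfiable iff there is a choice of \assumeatoms\ (equivalently, of $T,F \subseteq As$) making both blocks succeed iff $J \neq \emptyset$ and some length-$n$ plan solves $\langle \trf, J, G\rangle$ iff $\pp$ has a length-$n$ plan with assumptions. The main obstacle I anticipate is verifying the \emph{non-interference} carefully: I must confirm that the $\flag(0)$ rules of \ctwo\ interact with the existing \ttt{\dd} flag mechanism in exactly the same way the irrelevant-set flags do in Theorem~\ref{theorem:confp}, and that adding \ctwo\ does not spuriously introduce $\flag(0)$ for guesses corresponding to genuine states of $J$ nor create extra stable models that break determinism of the \define{} block. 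A related subtlety is checking that placing \assumeatoms\ at the \emph{outermost} existential layer (so the same assumptions are shared across all universally quantified initial states) is what condition~(1)--(2) demand, rather than a per-state guess; this ordering of quantifiers is precisely what encodes ``the assumptions hold in some state and the plan works for all consistent states.''
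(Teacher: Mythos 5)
The paper itself contains no proof of Theorem~\ref{theorem:assump}; only the conformant case (Theorem~\ref{theorem:confp}) is proved in the appendix, via Lemmas~\ref{lem:aux0a:theorem:confp}--\ref{lem:aux5:theorem:confp}. Your overall strategy --- fix the assumption guess, split off \cone{} as a $Y$-independent satisfiability filter for condition~(1), and read \ctwo{} as an extension of the flag mechanism that enlarges the class of ``irrelevant'' universal choices to those whose state lies in $I\setminus J$ --- is exactly the natural continuation of that proof, and your splitting argument for the independence of the $init/1$ block is correct: it shares only the existentially fixed $assume/2$ atoms with the rest and contains no \open{} atoms, so its (un)satisfiability is uniform over all $Y$ and correctly encodes $J\neq\emptyset$.

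The one step that does not go through as written is the appeal to ``Theorem~\ref{theorem:confp} applied to $\langle \trf, J, G\rangle$.'' That theorem is a statement about a planning description \emph{representing} the problem: its hypotheses require initial rules whose stable models are exactly the initial states, and its encoding quantifies universally over the \open{} atoms of \emph{those} rules. In program \eqref{solving:assump} the initial rules still generate all of $I$ and the quantifier still ranges over all of $\open{}$, so there is no planning description for $\langle\trf,J,G\rangle$ to which the theorem could be applied as a black box. What is actually required is to redo the analogues of Lemmas~\ref{lem:aux1a:theorem:confp}, \ref{lem:aux1:theorem:confp} and~\ref{lem:aux3:theorem:confp} with a three-way case split on $Y\subseteq\open{}$: (i) $Y$ irrelevant in the original sense ($\ttx{\dd_\initx}\cup\fixcons{Y}{\open{}}$ unsatisfiable), where $\flag(0)$ is derived by the rewritten integrity constraints of \ttt{\dd_\initx}; (ii) $Y$ relevant but $s_Y\notin J$, where $\flag(0)$ is now derived by \ctwo{} --- note this invalidates the step in the proof of Lemma~\ref{lem:aux3:theorem:confp} that infers $\flag(0)\notin Z$ from relevance of $Y$ alone, so that lemma must be restated, not reused; and (iii) $Y$ relevant with $s_Y\in J$, where one checks that \ctwo{} cannot fire and the old argument applies verbatim. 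You anticipate precisely this in your ``main obstacle'' paragraph, so the issue is flagged rather than overlooked, but as submitted the reduction invokes a theorem whose hypotheses are not met; the proof is complete only once those lemma variants are carried out. A minor loose end of the same kind: an analogue of Lemma~\ref{lem:aux5:theorem:confp} is needed to ensure the existential witness restricted to $Occ$ encodes a genuine sequence of $n$ actions and restricted to \assumeatoms{} encodes disjoint sets $T,F\subseteq As$.
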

%
In our example, 
where $\dd_3$ together with the set of atoms $\{occupied(1), occupied(2)\}$ represents $\pp_4$, 
we have that for $n=2$
the program (\ref{solving:assump}) (adapted to $\dd_3$)
is satisfiable selecting the atoms $\{occ(go,1), occ(sweep,2), assume(occupied(1),true)\}$ 
that represent the solution $\langle p_1,\{occupied(1)\}, \{\}\rangle$,
and for $n=1$ selecting the atoms $\{occ(sweep,1), $ $assume(occupied(2),true)\}$ 
that represent the solution $\langle [sweep],\{occupied(2)\}, \{\}\rangle$.

\emph{Conditional planning.}
Consider the case where
$\dd$ represents a planning problem with sensing actions $PP=\langle \trf, I, G \rangle$.
Again, we have to find a plan $p$ such that $\trf(I,p) \subseteq G$, 
but this time $p$ has the form of a tree where sensing actions 
can be followed by different actions depending on sensing results.
%
%
This suggests a formulation where 
we represent the sensing result at time point $T$ by the truth value of an atom $obs(true,T)$, 
we guess those possible observations with a choice rule:
\begin{equation}\label{solving:obs}
\{ obs(true,T) \} \leftarrow t(T), T<n
\end{equation}
and, letting $Obs_t$ be $\{ obs(true,t) \}$ and $Occ_t$ be $\{ occ(a,t) \mid a \in A \}$ for $t \in \{1,\dotsc,n\}$,
we consider a QLP of the form
\[\exists Occ_1 \forall Obs_1 \ldots \exists Occ_{n-1} \forall Obs_{n-1} \exists Occ_n \forall \open{}
\big( \facts \cup \ttt{\dd} \cup (\ref{solving:generate})\cup(\ref{solving:flags})\cup(\ref{solving:obs}) \big).\]
%
%
This formulation nicely allows for a different plan $[a_1; \ldots; a_n]$ for every sequence $[O_1; \ldots; $ $O_{n-1}]$ 
of observations $O_i \subseteq Obs_i$. 
But at the same time it requires that each such plan achieves the goal for all possible initial situations, 
and this requirement is too strong.
Actually, we only want the plans to work for those cases
where, at every time step $T$, the result of a sensing action $a^f$, represented by $obs(true,T)$, 
coincides with the value of the sensed fluent represented by $h(f,T)$.
We can achieve this by signaling the other cases as irrelevant with the following rule:
\begin{equation}\label{solving:sensing}
\flag (T) \leftarrow t(T), occ(A,T{-}1), senses(A,F), \{ h(F,T{-}1), obs(true,T{-}1) \} = 1
\end{equation}
where the cardinality constraint  $\{ h(F,T{-}1), obs(true,T{-}1) \} = 1$
holds if the truth value of $obs(true,T{-}1)$ and $h(F,T{-}1)$ is not the same.
Another issue with the previous QLP is that
it allows normal actions at every time point $T$
to be followed by different actions at $T{+}1$ for each value of $obs(true,T)$.
This is not a problem for the correctness of the approach, 
but it is not a natural representation.
To fix this, we can consider that, 
whenever a normal action occurs at time point $T$, 
the case where $obs(true,T)$ holds is irrelevant:
\begin{equation}\label{solving:nonsensing}
\flag (T)  \leftarrow t(T), occ(A,T), \{ senses(A,F) \} = 0, obs(true,T)
\end{equation}
Appart from this, note that in conditional planning the different subplans may have different lengths.
For this reason, in the choice rule (\ref{solving:generate}) we have to replace the symbol `$=$' by `$\leq$'.
We denote the new rule by (\ref{solving:generate})${}^{\leq}$. 
\begin{theorem}\label{theorem:condp} 
Let $\dd$ and $\pp$ be as specified before, and $n$ be a positive integer.
If $\init{\dd}$ is in $GDT$ form, then
there is a plan of length less or equal than $n$ that solves $\pp$
if and only if the following quantified logic program is satisfiable:
\begin{equation}\label{solving:conditional}
\exists Occ_1 \forall Obs_1 \ldots \exists Occ_{n-1} \forall Obs_{n-1} \exists Occ_n \forall Open 
\big( \facts \cup \ttt{P} \cup (\ref{solving:generate})^{\leq}\cup(\ref{solving:flags})\cup(\ref{solving:obs}-\ref{solving:nonsensing}) \big)
\end{equation}
\end{theorem}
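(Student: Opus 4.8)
The plan is to reduce correctness of the conditional-planning encoding~\eqref{solving:conditional} to the already-established conformant result, Theorem~\ref{theorem:confp}, by a careful analysis of how the alternating prefix $\exists Occ_1 \forall Obs_1 \ldots \exists Occ_n \forall \open{}$ encodes plan \emph{trees}. The key observation is that fixing a sequence of truth values for $obs(true,1),\ldots,obs(true,n-1)$ picks out one root-to-leaf branch of the conditional plan, and for that fixed branch the inner structure $\exists Occ_n \forall \open{}\,(\facts \cup \ttt{\dd} \cup \cdots)$ is exactly the conformant test of Theorem~\ref{theorem:confp} applied to the linear plan read off from the $occ$-atoms along that branch. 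So I would first set up the correspondence between an assignment to the $Obs_t$ sets and a branch, and between the existentially chosen $Occ_t$ atoms and the action occurring at each step of each branch; then argue that satisfiability of~\eqref{solving:conditional} is equivalent to the existence of a plan tree all of whose branches pass the conformant check.

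The main work is showing that the auxiliary rules~\eqref{solving:sensing} and~\eqref{solving:nonsensing} correctly restrict attention to the intended branches. For~\eqref{solving:sensing} I would argue that when a sensing action $a^f$ occurs at step $T{-}1$ but the guessed observation $obs(true,T{-}1)$ disagrees with the actual fluent value $h(f,T{-}1)$, the atom $\flag(T)$ fires, which through rule~\eqref{solving:flags} propagates $\flag$ forward and, via the $\naf{\flag(T)}$ literals built into $\ttt{\dd_\dynx}$ and $\ttt{\dd_\finalx}$, trivially satisfies the remainder of that branch --- so such spurious branches impose no real goal constraint. This mirrors exactly the irrelevant-set mechanism used for $\open{}$ in the conformant proof, and I would lean on that analogy rather than redo the argument from scratch. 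For~\eqref{solving:nonsensing} I would show it only prunes the physically meaningless branches in which a \emph{normal} action is nonetheless followed by two distinct continuations; this makes the $occ$-atoms describe a genuine plan in the inductive sense of Section~3 (a single successor after normal actions, two after sensing actions), establishing the bijection between satisfying assignments and plan trees.

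Having set up this correspondence, the equivalence would follow in two directions. For soundness, given a satisfying assignment to~\eqref{solving:conditional}, I would read off a conditional plan $p$ of length $\le n$ from the $occ$-atoms indexed by the branch structure, and use Theorem~\ref{theorem:confp} branch-by-branch to conclude $\trf(I,p)\subseteq G$ and $\trf(I,p)\neq\bot$. For completeness, given a conditional plan $p$ solving $\pp$, I would instantiate $Occ_t$ from the actions of $p$ along each branch (padding with the empty action for branches shorter than $n$, which is why `$=$' was relaxed to `$\le$' in~\eqref{solving:generate}$^{\le}$), and verify that every forced choice of $Obs_t$ and $\open{}$ leaves a satisfiable residual program.

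The step I expect to be the main obstacle is verifying that the \emph{alternation} of the quantifiers faithfully encodes the tree branching. In particular I must check that a \emph{single} existential choice of $Occ_t$ at level $t$ is correctly shared across those branches that agree on observations up through step $t{-}1$ but not beyond, so that the plan genuinely has tree shape rather than allowing the solver to choose incompatible actions for branches that should coincide at their common prefix. Reconciling the prenex prefix --- where all of $Occ_t$ is chosen before any $Obs_t$ with $t'\ge t$ is universally fixed --- with the intended prefix-sharing of plan trees is the delicate point, and I would argue it by analyzing, for each prefix of observation values, exactly which $occ$-atoms are constrained and showing rule~\eqref{solving:nonsensing} together with the sensing semantics forces the desired sharing.
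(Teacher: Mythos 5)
The paper does not actually include a proof of Theorem~\ref{theorem:condp}; the appendix only proves Theorem~\ref{theorem:confp}, so there is no official argument to compare against. Judged on its own terms, your strategy is sound and is the natural extension of the lemma chain developed for the conformant case: assignments to the $Obs_t$ sets index branches, the innermost $\forall\open{}$ performs the per-branch check, and rule~(\ref{solving:sensing}) plays for observation mismatches exactly the role that the $\flag(0)$ mechanism plays for irrelevant subsets of $\open{}$. Two points deserve correction, though. First, you cannot literally ``use Theorem~\ref{theorem:confp} branch-by-branch'': for a fixed observation sequence the residual check is a conformant check over only those initial states whose sensing results agree with that sequence, not over all of $SM(\init{\dd})$, and the program contains the extra atoms $obs(true,t)$ and rules~(\ref{solving:obs})--(\ref{solving:nonsensing}). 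So you must redo (generalize) Lemmas~\ref{lem:aux1a:theorem:confp}--\ref{lem:aux4:theorem:confp} with the observation-consistency condition folded into the characterization of which $Y\subseteq\open{}$ are flagged; invoking the theorem as a black box does not type-check. You also need the small but necessary observation that when a branch is followed by \emph{no} initial state (i.e.\ $S^f$ or $S^{\ifnot{f}}$ is empty), every $Y$ is flagged and the residual program is trivially satisfiable, matching $\trf(\emptyset,q)=\emptyset\subseteq G$ on the semantic side.

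Second, the step you single out as the main obstacle --- that a single choice of $Occ_t$ is ``correctly shared across branches agreeing on observations up to $t{-}1$'' --- is not an obstacle at all: it is exactly what the nested-quantifier semantics of QLPs gives you for free, since the witness for $\exists Occ_t$ is chosen before (hence independently of) $Obs_t,\dots,Obs_{n-1}$ and after $Obs_1,\dots,Obs_{t-1}$, i.e.\ it is a function of precisely the observation prefix. The genuine concern runs the other way: the prefix permits \emph{more} branching than a conditional plan allows, namely distinct continuations after a \emph{normal} action. Contrary to what you write, rule~(\ref{solving:nonsensing}) is not needed to repair this and is not what establishes correctness --- the paper states explicitly that without it the encoding is still correct, merely unnatural --- because in the soundness direction you can simply extract the plan by always following the $obs(true,T)$-false child after a normal action; every initial state follows that child anyway once~(\ref{solving:sensing}) is in place, so the discarded subtree constrains nothing. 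Recasting your argument so that plan extraction is a canonical read-off from a winning strategy (rather than a claimed bijection enforced by~(\ref{solving:nonsensing})) removes the weakest link in your outline.
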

For $\dd_3$, that represents the problem $\pp_3$, and $n=3$,
the program (\ref{solving:conditional}) (adapted to $\dd_3$)
is satisfiable selecting first $\{occ(sense(occupied(1)))\}$ at $Occ_1$, 
then at $Occ_2$ selecting $\{occ(clean,2)\}$ for the subset $\{\} \subseteq Obs_1$ 
and $\{occ(go,2)\}$ for the subset $Obs_1 \subseteq Obs_1$, and 
finally at $Occ_3$ selecting $\{\}$ in all cases except for the 
subsets $Obs_1 \subseteq Obs_1$ and $\{\} \subseteq Obs_2$ that we select $\{occ(clean,3)\}$.
This assignment corresponds to plan $p_3$.

\section{Experiments}\label{sec:experiments}

We evaluate the performance of \qasp\ in conformant and conditional planning benchmarks.
%
We consider the problem \optt{} of computing a plan of optimal length.
To solve it we first run the solver for length 1, and
successively increment the length by 1 until an optimal plan is found.
The solving times of this procedure are usually dominated by the unsatisfiable runs.
To complement this, we also consider the problem \fl{}
of computing a plan of a fixed given length, for which we know that a solution exists.
%

We evaluate \qasp{} $1.0$ and combine it with $4$ differents QBF solvers:\footnote{%
We follow the selection of QBF solvers and preprocessors of~\citeANP{maysaf20a}~\citeyear{maysaf20a}.}
\caqe{} $4.0.1$~\cite{rabten15a}, \depqbf{} $6.03$~\cite{lonegl17a}, 
\qesto{} $1.0$~\cite{janmar15a} and \qute{} $1.1$~\cite{peslsz19a};
and either none or one of $3$ preprocessors: \bloqqer{} $v37$~(\textsc{b}; \citeNP{bilose11a}), 
\hqspre{} $1.4$~(\textsc{h}; \citeNP{wiremabe17a}), and 
\qratpre{} $2.0$~(\textsc{q}; \citeNP{lonegl19a});
for a total of $16$ configurations.
By \caqecirc{} we refer to the combination of \qasp{} with \caqe{} without preprocessor, and
by \caqe{b} to \qasp{} with \caqe{} and the preprocessor \bloqqer{}.
We proceed similarly for other QBF solvers and preprocessors.
%
%
%
We compare \qasp{} with the incomplete planner \cpasp{}~\cite{tusoba07a},
that translates a planning description into a normal logic program
that is fed to an ASP solver.
In the experiments we have used the ASP solver {\sc clingo} (version $5.5$)\footnote{\url{https://potassco.org/clingo}}
and evaluated its 6 basic configurations
(\textsc{crafty} (\textsc{c}), \textsc{frumpy} (\textsc{f}), \textsc{handy} (\textsc{h}), \textsc{jumpy} (\textsc{j}), \textsc{trendy} (\textsc{r}), and \textsc{tweety} (\textsc{t})).
We refer to \cpasp{} with \textsc{clingo} and configuration \textsc{crafty} by {\sclingo{c}}, 
and similarly with the others.

%
We use the benchmark set from~\citeN{tusoba07a},
but we have increased the size of the instances of some domains if they were too easy.
The conformant domains are six variants of
Bomb in the Toilet (Bt, Bmt, Btc, Bmtc, Btuc, Bmtuc), Domino and Ring.
We have also added a small variation of the Ring domain, called Ringu,
where the room of the agent is unknown,
%
and the planner \cpasp{} cannot find any plan due to its incompleteness.
The conditional domains are
four variations of Bomb in the Toilet with Sensing Actions (Bts1, Bts2, Bts3 and Bts4),
Domino,
Medical Problem (Med),
Ring and Sick.
All domains have 5 instances of increasing difficulty,
except Domino in conformant planning that has 6, and
Ring in conditional planning that has 4.
For the problem \fl{}, the fixed plan length is always the minimal plan length for \cpasp{}.
%
%
%
%

All experiments ran on an Intel Xeon 2.20GHz processor under Linux.
Each run was limited to 30 minutes runtime and 16 GB of memory.
We report the aggregated results per domain:
average runtime in seconds and number of timeouts in parentheses for \optt{},
next to the average runtime in seconds for \fl{}, for which there were very few timeouts.
To calculate the averages, we add 1800 seconds for every timeout.
%
In \ref{app:results},
we report these results for every solver and configuration,
and provide further details.
Here, we show and discuss the best configuration for each solver,
separately for conformant planning in Table~\ref{tab:conf-summary},
and for conditional planning in Table~\ref{tab:cond-summary}.
%

%
%
%

In conformant planning, looking at the \qasp{} configurations,
for the variations of Bt, \caqe{q} and \qesto{q} perform better than \depqbf{q} and \qute{h}.
Domino is solved very quickly by all solvers,
while in Ring and Ringu \caqe{q} clearly outperforms the others.
The planner \sclingo{j}, in the variations of Bt and Domino,
in \optt{} has a similar performance to the best \qasp{} solvers,
while in \fl{} it is much faster and solves the problems in less than a second.
In Ring, however, its performance is worse than that of \caqe{q}.
Finally, in Ringu for \optt{}, given the incompleteness of the system, it never manages to find a plan and always times out.
%
%
In conditional planning, \caqe{b} is the best for the variations of Bts,
while \sclingo{h} is better than the other solvers for \fl{} but worse for \optt{}.
In Domino, for \optt{}, the \qasp{} solvers perform better than \sclingo{h},
while for \fl{} only \qesto{b} matches its performance.
Finally, for Med, Ring, and Sick, all solvers yield similar times.
Summing up, we can conclude that \qasp{} with the right QBF solver and preprocessor compares well to \cpasp{},
except for the \fl{} problem in conformant planning,
while on the other hand it can solve problems, like Ringu,
that are out of reach for \cpasp{} due to its incompleteness.

\begin{table}[th]\footnotesize
\newcommand{\confsummaryavg}[6]{\bf{#1}&\hspace{5pt}\confsummaryavgI#2 &\hspace{5pt}\confsummaryavgII#3 &\hspace{5pt}\confsummaryavgIII#4 &\hspace{5pt}\confsummaryavgIV#5 &\hspace{5pt}\confsummaryavgV#6 \\}
\newcommand{\confsummaryavgI}[3]{#1\ (#3)\ &#2}
\newcommand{\confsummaryavgII}[3]{#1\ (#3)\ &#2}
\newcommand{\confsummaryavgIII}[3]{#1\ (#3)\ &#2}
\newcommand{\confsummaryavgIV}[3]{#1\ (#3)\ &#2}
\newcommand{\confsummaryavgV}[3]{#1\ (#3)\ &#2}

\(
\begin{array}{lr@{\ }rr@{\ }rr@{\ }rr@{\ }rr@{\ }r}\toprule
	\multirow[t]{2}{*}{} &\multicolumn{2}{c}{\multirow[t]{2}{*}{\qquad\sclingo{j}}} &\multicolumn{2}{c}{\multirow[t]{2}{*}{\qquad\caqe{q}}}& \multicolumn{2}{c}{\multirow[t]{2}{*}{\qquad\depqbf{q}}}& \multicolumn{2}{c}{\multirow[t]{2}{*}{\qquad\qesto{q}}} & \multicolumn{2}{c}{\multirow[t]{2}{*}{\qquad\qute{h}}} \\
	\midrule
	\confsummaryavg{Bt}{{43}{0}{0}     }{{46}{0}{0}  }{{319}{0}{0}    }{{57}{0}{0}}     {{363}{42}{1}}
	\confsummaryavg{Bmt}{{50}{0}{0}    }{{155}{2}{0} }{{367}{2}{1}    }{{76}{1}{0}}     {{382}{12}{1}}
	\confsummaryavg{Btc}{{258}{0}{0}   }{{413}{1}{1} }{{566}{2}{1}    }{{394}{1}{1}}    {{729}{38}{2}}
	\confsummaryavg{Bmtc}{{744}{0}{2}  }{{771}{9}{2} }{{1083}{9}{3}   }{{826}{9}{2}}    {{1082}{374}{3}}
	\confsummaryavg{Btuc}{{245}{0}{0}  }{{400}{1}{1} }{{538}{1}{1}    }{{391}{2}{1}}    {{731}{38}{2}}
	\confsummaryavg{Bmtuc}{{739}{0}{2} }{{780}{14}{2}}{{1085}{12}{3}  }{{813}{12}{2}}   {{1440}{378}{4}}
	\confsummaryavg{Domino}{{0}{0}{0}  }{{0}{0}{0}   }{{0}{0}{0}      }{{0}{0}{0}}      {{0}{0}{0}}
	\confsummaryavg{Ring}{{673}{416}{1}}{{281}{40}{0}}{{1120}{1109}{3}}{{790}{392}{2}}  {{1441}{1440}{4}}
	\confsummaryavg{Ringu}{{1800}{0}{5}}{{246}{50}{0}}{{1800}{1800}{5}}{{1800}{1800}{5}}{{1800}{1800}{5}}
	\midrule
	\confsummaryavg{Total}{{505}{46}{10}} {{343}{12}{6}} {{764}{326}{17}} {{571}{246}{13}} {{885}{458}{22}}
	\toprule
\end{array}
\)
\caption{Experimental results on conformant planning}
\label{tab:conf-summary}
\end{table}

\begin{table}[th]\footnotesize
\newcommand{\condsummaryavg}[6]{\bf{#1}&\hspace{5pt}\condsummaryavgI#2 &\hspace{5pt}\condsummaryavgII#3 &\hspace{5pt}\condsummaryavgIII#4 &\hspace{5pt}\condsummaryavgIV#5 &\hspace{5pt}\condsummaryavgV#6 \\}
\newcommand{\condsummaryavgI}[3]{#1\ (#3)\ &#2}
\newcommand{\condsummaryavgII}[3]{#1\ (#3)\ &#2}
\newcommand{\condsummaryavgIII}[3]{#1\ (#3)\ &#2}
\newcommand{\condsummaryavgIV}[3]{#1\ (#3)\ &#2}
\newcommand{\condsummaryavgV}[3]{#1\ (#3)\ &#2}

\(
\begin{array}{lr@{\ }rr@{\ }rr@{\ }rr@{\ }rr@{\ }r}\toprule
	\multirow[t]{2}{*}{} &\multicolumn{2}{c}{\multirow[t]{2}{*}{\qquad\sclingo{h}}} &\multicolumn{2}{c}{\multirow[t]{2}{*}{\qquad\caqe{b}}}& \multicolumn{2}{c}{\multirow[t]{2}{*}{\qquad\depqbf{b}}}& \multicolumn{2}{c}{\multirow[t]{2}{*}{\qquad\qesto{b}}} & \multicolumn{2}{c}{\multirow[t]{2}{*}{\qquad\qute{b}}} \\
	\midrule
	\condsummaryavg{Bts1}  {{795}{49}{2}}      {{14}{3}{0}}      {{364}{361}{1}}   {{102}{65}{0}}    {{365}{361}{1}}
	\condsummaryavg{Bts2}  {{787}{26}{2}}      {{14}{3}{0}}      {{369}{41}{1}}    {{370}{182}{1}}   {{376}{368}{1}}
	\condsummaryavg{Bts3}  {{802}{82}{2}}      {{15}{4}{0}}      {{372}{10}{1}}    {{371}{364}{1}}   {{380}{370}{1}}
	\condsummaryavg{Bts4}  {{833}{76}{2}}      {{17}{4}{0}}      {{368}{362}{1}}   {{379}{89}{1}}    {{390}{377}{1}}
	\condsummaryavg{Domino}{{906}{6}{2}}       {{362}{125}{1}}   {{361}{48}{1}}    {{363}{5}{1}}     {{361}{360}{1}}
	\condsummaryavg{Med}   {{0}{0}{0}}         {{3}{1}{0}}       {{3}{1}{0}}       {{3}{1}{0}}       {{3}{1}{0}}
	\condsummaryavg{Ring}  {{902}{900}{2}}     {{902}{640}{2}}   {{901}{900}{2}}   {{902}{900}{2}}   {{901}{900}{2}}
	\condsummaryavg{Sick}  {{0}{0}{0}}         {{1}{0}{0}}       {{1}{0}{0}}       {{1}{1}{0}}       {{1}{0}{0}}
	\midrule
	\condsummaryavg{Total}  {{628}{142}{12}}   {{166}{97}{3}}    {{342}{215}{7}}   {{311}{200}{6}}   {{347}{342}{7}}
	\toprule
\end{array}
\)
\captionsetup{skip=9pt}
\caption{Experimental results on conditional planning}
\label{tab:cond-summary}
\end{table}


\section{Related Work}

Conformant and conditional planning have already been addressed with ASP \cite{eifalepfpo03b,sotugemo05b,tusoba07a,tusogemo11a}.
\citeN{eifalepfpo03b} introduce the system \dlvk{} for planning with incomplete information.
It solves a conformant planning problem by first generating a potential plan and then verifying it,
no sensing actions are considered.
\citeANP{sotugemo05b}~\citeyear{sotugemo05b,tusoba07a,tusogemo11a} propose an approximation semantics for reasoning about
action and change in the presence of incomplete information and sensing actions.
This is then used for developing ASP-based conformant and conditional planners, like \cpasp, 
that are generally incomplete.

Closely related is SAT-based conformant planning:
\cplan\ \cite{cagita03a} is similar to \dlvk\ in identifying a potential plan before validating it.
{\sc compile-project-sat} \cite{palgef05a} uses a single call to a SAT-solver to compute a conformant plan.
Their validity check is doable in linear time, if the planning problem is encoded in deterministic
decomposable negation normal form. 
Unlike this,
{\sc QBFPlan} \cite{rintanen99b} maps the problem into QBF and
uses a QBF-solver as back-end.

A more recent use of ASP for computing conditional plans is proposed by \citeN{yanopaer17a}.
The planner deals with sensing actions and incomplete information;
it generates multiple sequential plans before combining them in a graph representing a conditional plan.
Cardinality constraints, defaults, and choices are used to represent
the execution of sensing actions, their effects, and branches in the final conditional plan.
In addition, the system computes sequential plans in parallel and
also avoids regenerating plans.

Assumption-based planning, as considered here, is due to \citeN{dabamc13a}.
In that work, the problem is solved by translating it into classical planning 
using an adaptation of the translation of \citeN{palgef09a}.
To the best of our knowledge, there exists no ASP-based planner for this type of problems.

There are a number of extensions of ASP to represent problems
whose complexity lays beyond NP in the polynomial hierarchy.
A comprehensive review was made by~\citeN{amritr19a},
that presents the approach that is closer to QASP,
named ASP with Quantifiers (ASP(Q)).
Like QASP, it introduces existential and universal quantifiers,
but they range over stable models of logic programs and not over atoms.
%
This quantification over stable models 
is very useful for knowledge representation. 
For example, it allows us to represent conformant planning problems
without the need of additional $\alpha{}$ atoms, 
using the following ASP(Q) program:
\begin{equation}\label{related:aspq:conformant}
\exists^{st} \big( \facts \cup (\ref{solving:generate})\big) \ \forall^{st} \ttx{\dd_\initx} \ 
\exists^{st} \big( \ttx{\dd_\dynx} \cup \ttx{\dd_\finalx} \big)
\end{equation}
where $\exists^{st}$ and $\forall^{st}$ are existential and universal stable model quantifiers, respectively.
The program is coherent (or satisfiable, in our terms) if
there is some stable model $M_1$ of $\facts \cup (\ref{solving:generate})$ 
such that for all stable models $M_2$ of $M_1 \cup \ttx{\dd_\initx}$ 
there is some stable model of $M_2  \cup \ttx{\dd_\dynx} \cup \ttx{\dd_\finalx}$.
Stable models $M_1$ correspond to possible plans.
They are extended in $M_2$ by atoms representing initial states, 
that are used in $M_2  \cup \ttx{\dd_\dynx} \cup \ttx{\dd_\finalx}$ 
to check if the plans achieve the goal starting from \emph{all} those initial states.
Assumption-based planning can be represented in a similar way, 
while for conditional planning we have not been able to come 
up with any formulation that does not use additional $\alpha$ atoms.
%

As part of this work, we have developed translations between QASP and ASP(Q).
%
We leave their formal specification to \ref{app:aspq}, and
illustrate them here for conformant planning.
From ASP(Q) to QASP, 
we assume that $\ttx{\dd_\initx}$ is in \emph{GDT} form,
and otherwise we translate it into this form.
Then, the translation of an
ASP(Q) program of the form (\ref{related:aspq:conformant}) yields a QLP
that is essentially the same as (\ref{solving:conformant}), 
except for some renaming of the additional $\alpha$ atoms
and some irrelevant changes in the prefix. 
%
%
In the other direction, the QLP program 
(\ref{solving:conformant}) is translated to the ASP(Q) program 
\begin{equation}\label{related:aspq:qasp}
\exists^{st} P_0 \forall^{st} P_1 \exists^{st}
\big( \facts \cup \ttt{\dd} \cup (\ref{solving:generate})\cup(\ref{solving:flags}) \cup O\big)
\end{equation}
where $P_0$ is $\{ \{ p' \}\leftarrow \mid p \in Occ \}$, 
      $P_1$ is $\{ \{ p' \}\leftarrow \mid p \in \open{} \}$, and
$O$ contains the set of rules
$\bot \leftarrow p, \naf{p'} $ and $\bot \leftarrow \naf{p}, p' $ 
for every $p \in Occ \cup \open{}$.
Programs $P_0$ and $P_1$ guess the values of the atoms of the prefix using additional atoms $p'$, 
and the constraints in $O$ match those guesses to the corresponding original atoms.

\section{Conclusion}

We defined a general ASP language
to represent a wide range of planning \emph{problems}:
classical, conformant, with assumptions,
and conditional with sensing actions.
We then defined a quantified extension of ASP, viz.\ QASP,
to represent the \emph{solutions} to those planning problems.
Finally,
we implemented and evaluated a QASP solver,
available at \url{potassco.org}, 
to \emph{compute} the solutions to those planning problems.
Our focus lays on the generality of the language and the tackled problems;
on the formal foundations of the approach, by relating it to simple transition functions; and
on having a baseline implementation, whose performance we expect to improve further in the future.


\bibliographystyle{acmtrans}

\appendix
\clearpage
\section{Results}\label{app:results}

In this section, we provide some additional information about the experiments,
and present the results for every solver and configuration.

We note that in conditional planning, \cpasp{} receives as input an additional width
parameter that determines the minimum number of branches of a conditional plan.
\qasp{} receives no information of that kind, and in this regard it solves a more
difficult task than \cpasp{}.
On the other hand, that width parameter allows \cpasp{} to return the complete conditional plan,
while \qasp{} only returns the assignment to the first action of the plan.
%
%
Observe also that given that \cpasp{} is incomplete, in some instances \qasp{}
can find plans for \optt{} whose length is smaller than the fixed length of \fl{}.
This explains why in some few cases \qasp{} solves faster the \optt{} problem than \fl{}.

In the experiments, the times for grounding and for the translators \lpnormal{} and \lpsat{} are negligible,
while, in the \fl{} problem, preprocessing takes on average $5$ and $2$ seconds in conformant and conditional planning,
respectively. We expect these times to be similar for the \optt{} problem.
The reported times are dominated by \sclingo{}'s solving time in \cpasp{}, and by the QBF solvers in \qasp.
But the usage of preprocessors is fundamental for the performance of \qasp.
For example, in conformant planning, \caqe{} alone results in $18$ timeouts,
while with \qratpre{} they go down to only $6$.



\begin{table}[th]\footnotesize
\newcommand{\confclingoavg}[7]{\bf{#1}&\hspace{0pt}\confclingoavgI#5 &\hspace{0pt}\confclingoavgII#4 &\hspace{0pt}\confclingoavgIII#7 &\hspace{0pt}\confclingoavgIV#6 &\hspace{0pt}\confclingoavgV#3 &\hspace{0pt}\confclingoavgVI#2 \\}
\newcommand{\confclingoavgI}[3]{#1\ (#3)&#2}
\newcommand{\confclingoavgII}[3]{#1\ (#3)&#2}
\newcommand{\confclingoavgIII}[3]{#1\ (#3)&#2}
\newcommand{\confclingoavgIV}[3]{#1\ (#3)&#2}
\newcommand{\confclingoavgV}[3]{#1\ (#3)&#2}
\newcommand{\confclingoavgVI}[3]{#1\ (#3)&#2}

\(
\begin{array}{lr@{\ }rr@{\ }rr@{\ }rr@{\ }rr@{\ }rr@{\ }r}\toprule
	\multirow[t]{2}{*}{} &\multicolumn{2}{c}{\multirow[t]{2}{*}{\quad\sclingo{c}}} &\multicolumn{2}{c}{\multirow[t]{2}{*}{\quad\sclingo{f}}}& \multicolumn{2}{c}{\multirow[t]{2}{*}{\quad\sclingo{h}}}& \multicolumn{2}{c}{\multirow[t]{2}{*}{\quad\sclingo{j}}} & \multicolumn{2}{c}{\multirow[t]{2}{*}{\quad\sclingo{r}}} & \multicolumn{2}{c}{\multirow[t]{2}{*}{\quad\sclingo{t}}} \\
	\midrule
	\confclingoavg{Bt}{{48}{0}{0}}{{363}{0}{1}}{{176}{0}{0}}{{52}{0}{0}}{{43}{0}{0}}{{362}{0}{1}}
	\confclingoavg{Bmt}{{35}{0}{0}}{{363}{0}{1}}{{163}{0}{0}}{{49}{0}{0}}{{50}{0}{0}}{{238}{0}{0}}
	\confclingoavg{Btc}{{282}{0}{0}}{{380}{0}{1}}{{395}{0}{1}}{{236}{0}{0}}{{258}{0}{0}}{{375}{0}{1}}
	\confclingoavg{Bmtc}{{739}{0}{2}}{{773}{0}{2}}{{998}{0}{2}}{{746}{211}{2}}{{744}{0}{2}}{{743}{0}{2}}
	\confclingoavg{Btuc}{{380}{0}{1}}{{382}{0}{1}}{{389}{0}{1}}{{367}{0}{1}}{{245}{0}{0}}{{372}{0}{1}}
	\confclingoavg{Bmtuc}{{742}{0}{2}}{{778}{0}{2}}{{938}{0}{2}}{{749}{0}{2}}{{739}{0}{2}}{{741}{0}{2}}
	\confclingoavg{Domino}{{0}{0}{0}}{{0}{0}{0}}{{0}{0}{0}}{{0}{0}{0}}{{0}{0}{0}}{{0}{0}{0}}
	\confclingoavg{Ring}{{803}{364}{2}}{{645}{204}{1}}{{1102}{777}{3}}{{774}{506}{2}}{{673}{416}{1}}{{538}{96}{1}}
	\confclingoavg{Ringu}{{1800}{0}{5}}{{1800}{0}{5}}{{1800}{0}{5}}{{1800}{0}{5}}{{1800}{0}{5}}{{1800}{0}{5}}
	\midrule
	\confclingoavg{Total}{{536}{40}{12}}{{609}{22}{14}}{{662}{86}{14}}{{530}{79}{12}}{{505}{46}{10}}{{574}{10}{13}}
	\toprule
\end{array}
\)
\caption{Conformant planning with \sclingo{}}
\label{tab:conf-clingo-avg}
\end{table}

\begin{table}[th]\footnotesize
\newcommand{\confcaqeavg}[5]{\bf{#1}&\hspace{5pt}\confcaqeavgI#2 &\hspace{5pt}\confcaqeavgII#3 &\hspace{5pt}\confcaqeavgIII#5 &\hspace{5pt}\confcaqeavgIV#4 \\}
\newcommand{\confcaqeavgI}[3]{#1\ (#3)\ &#2}
\newcommand{\confcaqeavgII}[3]{#1\ (#3)\ &#2}
\newcommand{\confcaqeavgIII}[3]{#1\ (#3)\ &#2}
\newcommand{\confcaqeavgIV}[3]{#1\ (#3)\ &#2}

\(
\begin{array}{lr@{\ }rr@{\ }rr@{\ }rr@{\ }r}\toprule
    \multirow[t]{2}{*}{} &\multicolumn{2}{c}{\multirow[t]{2}{*}{\qquad \caqecirc }} &\multicolumn{2}{c}{\multirow[t]{2}{*}{\qquad\caqe{b}}}& \multicolumn{2}{c}{\multirow[t]{2}{*}{\qquad\caqe{h}}}& \multicolumn{2}{c}{\multirow[t]{2}{*}{\qquad\caqe{q}}} \\
	\midrule
	\confcaqeavg{Bt}{{69}{3}{0}}{{377}{1}{0}}{{46}{0}{0}}{{364}{1}{1}}
	\confcaqeavg{Bmt}{{44}{3}{0}}{{149}{4}{0}}{{155}{2}{0}}{{373}{11}{1}}
	\confcaqeavg{Btc}{{440}{3}{1}}{{487}{3}{1}}{{413}{1}{1}}{{577}{13}{1}}
	\confcaqeavg{Bmtc}{{814}{7}{2}}{{777}{7}{2}}{{771}{9}{2}}{{1083}{32}{3}}
	\confcaqeavg{Btuc}{{476}{3}{1}}{{475}{3}{1}}{{400}{1}{1}}{{606}{9}{1}}
	\confcaqeavg{Bmtuc}{{849}{14}{2}}{{789}{15}{2}}{{780}{14}{2}}{{1260}{25}{3}}
	\confcaqeavg{Domino}{{1500}{1500}{5}}{{0}{0}{0}}{{0}{0}{0}}{{0}{0}{0}}
	\confcaqeavg{Ring}{{1104}{1097}{3}}{{219}{30}{0}}{{281}{40}{0}}{{837}{793}{2}}
	\confcaqeavg{Ringu}{{1630}{1643}{4}}{{1018}{969}{2}}{{246}{45}{0}}{{1506}{1505}{4}}
	\midrule
	\confcaqeavg{Total}{{769}{474}{18}}{{476}{114}{8}}{{343}{12}{6}}{{734}{265}{16}}
	\toprule
\end{array}
\)
\caption{Conformant planning with \caqe{}}
\label{tab:conf-caqe-avg}
\end{table}

\begin{table}[th]\footnotesize
\newcommand{\confdepqbfavg}[5]{\bf{#1}&\hspace{5pt}\confdepqbfavgI#2 &\hspace{5pt}\confdepqbfavgII#3 &\hspace{5pt}\confdepqbfavgIII#5 &\hspace{5pt}\confdepqbfavgIV#4 \\}
\newcommand{\confdepqbfavgI}[3]{#1\ (#3)\ &#2}
\newcommand{\confdepqbfavgII}[3]{#1\ (#3)\ &#2}
\newcommand{\confdepqbfavgIII}[3]{#1\ (#3)\ &#2}
\newcommand{\confdepqbfavgIV}[3]{#1\ (#3)\ &#2}

\(
\begin{array}{lr@{\ }rr@{\ }rr@{\ }rr@{\ }r}\toprule
	\multirow[t]{2}{*}{} &\multicolumn{2}{c}{\multirow[t]{2}{*}{\qquad\depqbfcirc}} &\multicolumn{2}{c}{\multirow[t]{2}{*}{\qquad\depqbf{b}}}& \multicolumn{2}{c}{\multirow[t]{2}{*}{\qquad\depqbf{h}}}& \multicolumn{2}{c}{\multirow[t]{2}{*}{\qquad\depqbf{q}}} \\
	\midrule
	\confdepqbfavg{Bt}{{740}{0}{2}}{{744}{1}{2}}{{319}{0}{0}}{{363}{1}{1}}
	\confdepqbfavg{Bmt}{{974}{2}{2}}{{1089}{3}{3}}{{367}{2}{1}}{{372}{11}{1}}
	\confdepqbfavg{Btc}{{1111}{3}{3}}{{1084}{3}{3}}{{566}{2}{1}}{{729}{10}{2}}
	\confdepqbfavg{Bmtc}{{1494}{5}{4}}{{1442}{7}{4}}{{1083}{9}{3}}{{1082}{20}{3}}
	\confdepqbfavg{Btuc}{{1102}{7}{3}}{{1088}{3}{3}}{{538}{1}{1}}{{730}{11}{2}}
	\confdepqbfavg{Bmtuc}{{1479}{82}{4}}{{1443}{8}{4}}{{1085}{12}{3}}{{1440}{39}{4}}
	\confdepqbfavg{Domino}{{0}{0}{0}}{{1}{0}{0}}{{0}{0}{0}}{{0}{0}{0}}
	\confdepqbfavg{Ring}{{1800}{1800}{5}}{{1708}{1468}{4}}{{1120}{1109}{3}}{{1108}{736}{3}}
	\confdepqbfavg{Ringu}{{1800}{1800}{5}}{{1800}{1800}{5}}{{1800}{1800}{5}}{{1800}{1800}{5}}
	\midrule
	\confdepqbfavg{Total}{{1166}{411}{28}}{{1155}{365}{28}}{{764}{326}{17}}{{847}{292}{21}}
	\toprule
\end{array}
\)
\caption{ Conformant planning with \depqbf{}}
\label{tab:conf-depqbf-avg}
\end{table}

\begin{table}[th]\footnotesize
\newcommand{\confqestoavg}[5]{\bf{#1}&\hspace{5pt}\confqestoavgI#2 &\hspace{5pt}\confqestoavgII#3 &\hspace{5pt}\confqestoavgIII#5 &\hspace{5pt}\confqestoavgIV#4 \\}
\newcommand{\confqestoavgI}[3]{#1\ (#3)\ &#2}
\newcommand{\confqestoavgII}[3]{#1\ (#3)\ &#2}
\newcommand{\confqestoavgIII}[3]{#1\ (#3)\ &#2}
\newcommand{\confqestoavgIV}[3]{#1\ (#3)\ &#2}

\(
\begin{array}{lr@{\ }rr@{\ }rr@{\ }rr@{\ }r}\toprule
	\multirow[t]{2}{*}{} &\multicolumn{2}{c}{\multirow[t]{2}{*}{\qquad\qestocirc}} &\multicolumn{2}{c}{\multirow[t]{2}{*}{\qquad\qesto{b}}}& \multicolumn{2}{c}{\multirow[t]{2}{*}{\qquad\qesto{h}}}& \multicolumn{2}{c}{\multirow[t]{2}{*}{\qquad\qesto{q}}} \\
	\midrule
	\confqestoavg{Bt}{{725}{3}{2}}{{458}{1}{1}}{{57}{0}{0}}{{363}{1}{1}}
	\confqestoavg{Bmt}{{1081}{9}{3}}{{443}{4}{1}}{{76}{1}{0}}{{372}{13}{1}}
	\confqestoavg{Btc}{{1081}{7}{3}}{{777}{6}{2}}{{394}{1}{1}}{{729}{9}{2}}
	\confqestoavg{Bmtc}{{1440}{20}{4}}{{1104}{23}{3}}{{826}{9}{2}}{{1082}{23}{3}}
	\confqestoavg{Btuc}{{1082}{15}{3}}{{736}{6}{1}}{{391}{2}{1}}{{734}{9}{2}}
	\confqestoavg{Bmtuc}{{1441}{359}{4}}{{1094}{27}{3}}{{813}{12}{2}}{{1286}{26}{3}}
	\confqestoavg{Domino}{{1319}{1339}{4}}{{0}{0}{0}}{{0}{0}{0}}{{0}{0}{0}}
	\confqestoavg{Ring}{{1105}{1096}{3}}{{499}{30}{1}}{{790}{392}{2}}{{782}{752}{2}}
	\confqestoavg{Ringu}{{1800}{1800}{5}}{{1444}{1440}{4}}{{1800}{1800}{5}}{{1800}{1800}{5}}
	\midrule
	\confqestoavg{Total}{{1230}{516}{31}}{{728}{170}{16}}{{571}{246}{13}}{{794}{292}{19}}
	\toprule
\end{array}
\)
\caption{Conformant planning with \qesto{}}
\label{tab:conf-qesto-avg}
\end{table}

\begin{table}[th]\footnotesize
\newcommand{\confquteavg}[5]{\bf{#1}&\hspace{5pt}\confquteavgI#2 &\hspace{5pt}\confquteavgII#3 &\hspace{5pt}\confquteavgIII#5 &\hspace{5pt}\confquteavgIV#4 \\}
\newcommand{\confquteavgI}[3]{#1\ (#3)\ &#2}
\newcommand{\confquteavgII}[3]{#1\ (#3)\ &#2}
\newcommand{\confquteavgIII}[3]{#1\ (#3)\ &#2}
\newcommand{\confquteavgIV}[3]{#1\ (#3)\ &#2}

\(
\begin{array}{lr@{\ }rr@{\ }rr@{\ }rr@{\ }r}\toprule
	\multirow[t]{2}{*}{} &\multicolumn{2}{c}{\multirow[t]{2}{*}{\qquad\qutecirc}} &\multicolumn{2}{c}{\multirow[t]{2}{*}{\qquad\qute{b}}}& \multicolumn{2}{c}{\multirow[t]{2}{*}{\qquad\qute{h}}}& \multicolumn{2}{c}{\multirow[t]{2}{*}{\qquad\qute{q}}} \\
	\midrule
	\confquteavg{Bt}{{864}{3}{2}}{{1081}{1}{3}}{{852}{0}{2}}{{363}{42}{1}}
	\confquteavg{Bmt}{{1095}{9}{3}}{{1180}{3}{3}}{{477}{1}{1}}{{382}{12}{1}}
	\confquteavg{Btc}{{1095}{16}{3}}{{1093}{3}{3}}{{1093}{1}{3}}{{729}{38}{2}}
	\confquteavg{Bmtc}{{1453}{27}{4}}{{1444}{8}{4}}{{1484}{9}{4}}{{1082}{374}{3}}
	\confquteavg{Btuc}{{1111}{55}{3}}{{1101}{3}{3}}{{1116}{1}{3}}{{731}{38}{2}}
	\confquteavg{Bmtuc}{{1501}{557}{4}}{{1745}{24}{4}}{{1483}{12}{4}}{{1440}{378}{4}}
	\confquteavg{Domino}{{1238}{1239}{4}}{{0}{0}{0}}{{0}{0}{0}}{{0}{0}{0}}
	\confquteavg{Ring}{{1800}{1449}{5}}{{1702}{1441}{4}}{{1460}{1208}{4}}{{1441}{1440}{4}}
	\confquteavg{Ringu}{{1800}{1800}{5}}{{1800}{1599}{5}}{{1800}{1800}{5}}{{1800}{1800}{5}}
	\midrule
	\confquteavg{Total}{{1328}{572}{33}}{{1238}{342}{29}}{{1085}{336}{26}}{{885}{458}{22}}
	\toprule
\end{array}
\)
\caption{Conformant planning with \qute{}}
\label{tab:conf-qute-avg}
\end{table}

\begin{table}[th]\footnotesize
\newcommand{\condclingoavg}[7]{\bf{#1}&\hspace{0pt}\condclingoavgI#5 &\hspace{0pt}\condclingoavgII#4 &\hspace{0pt}\condclingoavgIII#7 &\hspace{0pt}\condclingoavgIV#6 &\hspace{0pt}\condclingoavgV#3 &\hspace{0pt}\condclingoavgVI#2 \\}
\newcommand{\condclingoavgI}[3]{#1\ (#3)&#2}
\newcommand{\condclingoavgII}[3]{#1\ (#3)&#2}
\newcommand{\condclingoavgIII}[3]{#1\ (#3)&#2}
\newcommand{\condclingoavgIV}[3]{#1\ (#3)&#2}
\newcommand{\condclingoavgV}[3]{#1\ (#3)&#2}
\newcommand{\condclingoavgVI}[3]{#1\ (#3)&#2}

\(
\begin{array}{lr@{\ }rr@{\ }rr@{\ }rr@{\ }rr@{\ }rr@{\ }r}\toprule
	\multirow[t]{2}{*}{} &\multicolumn{2}{c}{\multirow[t]{2}{*}{\quad\sclingo{c}}} &\multicolumn{2}{c}{\multirow[t]{2}{*}{\quad\sclingo{f}}}& \multicolumn{2}{c}{\multirow[t]{2}{*}{\quad\sclingo{h}}}& \multicolumn{2}{c}{\multirow[t]{2}{*}{\quad\sclingo{j}}} & \multicolumn{2}{c}{\multirow[t]{2}{*}{\quad\sclingo{r}}} & \multicolumn{2}{c}{\multirow[t]{2}{*}{\quad\sclingo{t}}} \\
	\midrule
	\condclingoavg{Bts1}{{927}{370}{2}}{{916}{28}{2}}{{1080}{77}{3}}{{1041}{121}{2}}{{989}{51}{2}}            {{795}{49}{2}}
	\condclingoavg{Bts2}{{1080}{71}{3}}{{920}{66}{2}}{{1080}{361}{3}}{{1080}{60}{3}}{{986}{30}{2}}            {{787}{26}{2}}
	\condclingoavg{Bts3}{{1080}{253}{3}}{{944}{68}{2}}{{1071}{215}{2}}{{993}{85}{2}}{{1012}{50}{2}}           {{802}{82}{2}}
	\condclingoavg{Bts4}{{1080}{173}{3}}{{904}{70}{2}}{{1080}{115}{3}}{{1060}{69}{2}}{{882}{102}{2}}          {{833}{76}{2}}
	\condclingoavg{Domino}{{759}{18}{2}}{{786}{7}{2}}{{885}{59}{2}}{{872}{15}{2}}{{771}{13}{2}}               {{906}{6}{2}}
	\condclingoavg{Med}{{0}{0}{0}}{{0}{0}{0}}{{0}{0}{0}}{{0}{0}{0}}{{0}{0}{0}}                                {{0}{0}{0}}
	\condclingoavg{Ring}{{902}{900}{2}}{{902}{900}{2}}{{905}{900}{2}}{{902}{900}{2}}{{903}{900}{2}}           {{902}{900}{2}}
	\condclingoavg{Sick}{{0}{0}{0}}{{0}{0}{0}}{{0}{0}{0}}{{0}{0}{0}}{{0}{0}{0}}                               {{0}{0}{0}}
	\midrule
	\condclingoavg{Total}{{728}{223}{15}}{{671}{142}{12}}{{762}{215}{15}}{{743}{156}{13}}{{692}{143}{12}}{{628}{142}{12}}
	\toprule
\end{array}
\)
\caption{Conditional planning with \sclingo{}}
\label{tab:cond-clingo-avg}
\end{table}

\begin{table}[th]\footnotesize
\newcommand{\condcaqeavg}[5]{\bf{#1}&\hspace{5pt}\condcaqeavgI#2 &\hspace{5pt}\condcaqeavgII#3 &\hspace{5pt}\condcaqeavgIII#5 &\hspace{5pt}\condcaqeavgIV#4 \\}
\newcommand{\condcaqeavgI}[3]{#1\ (#3)\ &#2}
\newcommand{\condcaqeavgII}[3]{#1\ (#3)\ &#2}
\newcommand{\condcaqeavgIII}[3]{#1\ (#3)\ &#2}
\newcommand{\condcaqeavgIV}[3]{#1\ (#3)\ &#2}

\(
\begin{array}{lr@{\ }rr@{\ }rr@{\ }rr@{\ }r}\toprule
	\multirow[t]{2}{*}{} &\multicolumn{2}{c}{\multirow[t]{2}{*}{\qquad\caqecirc}} &\multicolumn{2}{c}{\multirow[t]{2}{*}{\qquad\caqe{b}}}& \multicolumn{2}{c}{\multirow[t]{2}{*}{\qquad\caqe{h}}}& \multicolumn{2}{c}{\multirow[t]{2}{*}{\qquad\caqe{q}}} \\
	\midrule
	\condcaqeavg{Bts1}{{370}{368}{1}}{{14}{3}{0}}{{366}{363}{1}}{{16}{3}{0}}
	\condcaqeavg{Bts2}{{378}{376}{1}}{{14}{3}{0}}{{106}{54}{0}}{{15}{4}{0}}
	\condcaqeavg{Bts3}{{372}{379}{1}}{{15}{4}{0}}{{135}{65}{0}}{{19}{7}{0}}
	\condcaqeavg{Bts4}{{374}{384}{1}}{{17}{4}{0}}{{107}{53}{0}}{{25}{8}{0}}
	\condcaqeavg{Domino}{{372}{365}{1}}{{362}{125}{1}}{{223}{113}{0}}{{366}{297}{1}}
	\condcaqeavg{Med}{{1095}{1440}{3}}{{3}{1}{0}}{{4}{2}{0}}{{2}{0}{0}}
	\condcaqeavg{Ring}{{1350}{1350}{3}}{{902}{640}{2}}{{904}{901}{2}}{{901}{901}{2}}
	\condcaqeavg{Sick}{{729}{729}{2}}{{1}{0}{0}}{{64}{63}{0}}{{1}{1}{0}}
	\midrule
	\condcaqeavg{Total}{{630}{673}{13}}{{166}{97}{3}}{{238}{201}{3}}{{168}{152}{3}}
	\toprule
\end{array}
\)
\caption{Conditional planning with \caqe{}}
\label{tab:cond-caqe-avg}
\end{table}

\begin{table}[th]\footnotesize
\newcommand{\conddepqbfavg}[5]{\bf{#1}&\hspace{5pt}\conddepqbfavgI#2 &\hspace{5pt}\conddepqbfavgII#3 &\hspace{5pt}\conddepqbfavgIII#5 &\hspace{5pt}\conddepqbfavgIV#4 \\}
\newcommand{\conddepqbfavgI}[3]{#1\ (#3)\ &#2}
\newcommand{\conddepqbfavgII}[3]{#1\ (#3)\ &#2}
\newcommand{\conddepqbfavgIII}[3]{#1\ (#3)\ &#2}
\newcommand{\conddepqbfavgIV}[3]{#1\ (#3)\ &#2}

\(
\begin{array}{lr@{\ }rr@{\ }rr@{\ }rr@{\ }r}\toprule
	\multirow[t]{2}{*}{} &\multicolumn{2}{c}{\multirow[t]{2}{*}{\qquad\depqbfcirc}} &\multicolumn{2}{c}{\multirow[t]{2}{*}{\qquad\depqbf{b}}}& \multicolumn{2}{c}{\multirow[t]{2}{*}{\qquad\depqbf{h}}}& \multicolumn{2}{c}{\multirow[t]{2}{*}{\qquad\depqbf{q}}} \\
	\midrule
	\conddepqbfavg{Bts1}{{722}{720}{2}}{{364}{361}{1}}{{721}{506}{2}}{{396}{366}{1}}
	\conddepqbfavg{Bts2}{{724}{720}{2}}{{369}{41}{1}}{{723}{720}{2}}{{489}{362}{1}}
	\conddepqbfavg{Bts3}{{729}{720}{2}}{{372}{10}{1}}{{726}{720}{2}}{{721}{361}{2}}
	\conddepqbfavg{Bts4}{{745}{720}{2}}{{368}{362}{1}}{{731}{720}{2}}{{722}{654}{2}}
	\conddepqbfavg{Domino}{{474}{158}{1}}{{361}{48}{1}}{{376}{360}{1}}{{366}{7}{1}}
	\conddepqbfavg{Med}{{2}{0}{0}}{{3}{1}{0}}{{1}{0}{0}}{{2}{0}{0}}
	\conddepqbfavg{Ring}{{1340}{1350}{2}}{{901}{900}{2}}{{902}{902}{2}}{{902}{901}{2}}
	\conddepqbfavg{Sick}{{55}{48}{0}}{{1}{0}{0}}{{0}{0}{0}}{{1}{0}{0}}
	\midrule
	\conddepqbfavg{Total}{{598}{554}{11}}{{342}{215}{7}}{{522}{491}{11}}{{449}{331}{9}}
	\toprule
\end{array}
\)
\caption{Conditional planning with \depqbf{}}
\label{tab:cond-depqbf-avg}
\end{table}

\begin{table}[th]\footnotesize
\newcommand{\condqestoavg}[5]{\bf{#1}&\hspace{5pt}\condqestoavgI#2 &\hspace{5pt}\condqestoavgII#3 &\hspace{5pt}\condqestoavgIII#5 &\hspace{5pt}\condqestoavgIV#4 \\}
\newcommand{\condqestoavgI}[3]{#1\ (#3)\ &#2}
\newcommand{\condqestoavgII}[3]{#1\ (#3)\ &#2}
\newcommand{\condqestoavgIII}[3]{#1\ (#3)\ &#2}
\newcommand{\condqestoavgIV}[3]{#1\ (#3)\ &#2}

\(
\begin{array}{lr@{\ }rr@{\ }rr@{\ }rr@{\ }r}\toprule
	\multirow[t]{2}{*}{} &\multicolumn{2}{c}{\multirow[t]{2}{*}{\qquad\qestocirc}} &\multicolumn{2}{c}{\multirow[t]{2}{*}{\qquad\qesto{b}}}& \multicolumn{2}{c}{\multirow[t]{2}{*}{\qquad\qesto{h}}}& \multicolumn{2}{c}{\multirow[t]{2}{*}{\qquad\qesto{q}}} \\
	\midrule
	\condqestoavg{Bts1}{{380}{373}{1}}{{102}{65}{0}}{{400}{391}{1}}{{369}{42}{1}}
	\condqestoavg{Bts2}{{384}{379}{1}}{{370}{182}{1}}{{390}{378}{1}}{{720}{720}{2}}
	\condqestoavg{Bts3}{{390}{379}{1}}{{371}{364}{1}}{{396}{377}{1}}{{722}{362}{2}}
	\condqestoavg{Bts4}{{392}{382}{1}}{{379}{89}{1}}{{392}{379}{1}}{{721}{720}{2}}
	\condqestoavg{Domino}{{720}{413}{2}}{{363}{5}{1}}{{530}{481}{1}}{{366}{360}{1}}
	\condqestoavg{Med}{{1100}{1440}{3}}{{3}{1}{0}}{{5}{6}{0}}{{1}{0}{0}}
	\condqestoavg{Ring}{{1350}{1350}{3}}{{902}{900}{2}}{{905}{902}{2}}{{902}{900}{2}}
	\condqestoavg{Sick}{{739}{736}{2}}{{1}{1}{0}}{{11}{11}{0}}{{1}{0}{0}}
	\midrule
	\condqestoavg{Total}{{681}{681}{14}}{{311}{200}{6}}{{378}{365}{7}}{{475}{388}{10}}
	\toprule
\end{array}
\)
\caption{Conditional planning with \qesto{}}
\label{tab:cond-qesto-avg}
\end{table}

\begin{table}[th]\footnotesize
\newcommand{\condquteavg}[5]{\bf{#1}&\hspace{5pt}\condquteavgI#2 &\hspace{5pt}\condquteavgII#3 &\hspace{5pt}\condquteavgIII#5 &\hspace{5pt}\condquteavgIV#4 \\}
\newcommand{\condquteavgI}[3]{#1\ (#3)\ &#2}
\newcommand{\condquteavgII}[3]{#1\ (#3)\ &#2}
\newcommand{\condquteavgIII}[3]{#1\ (#3)\ &#2}
\newcommand{\condquteavgIV}[3]{#1\ (#3)\ &#2}

\(
\begin{array}{lr@{\ }rr@{\ }rr@{\ }rr@{\ }r}\toprule
	\multirow[t]{2}{*}{} &\multicolumn{2}{c}{\multirow[t]{2}{*}{\qquad\qutecirc}} &\multicolumn{2}{c}{\multirow[t]{2}{*}{\qquad\qute{b}}}& \multicolumn{2}{c}{\multirow[t]{2}{*}{\qquad\qute{h}}}& \multicolumn{2}{c}{\multirow[t]{2}{*}{\qquad\qute{q}}} \\
	\midrule
	\condquteavg{Bts1}{{770}{760}{2}}{{365}{361}{1}}{{751}{726}{2}}{{724}{722}{2}}
	\condquteavg{Bts2}{{1080}{1080}{3}}{{376}{368}{1}}{{1080}{1080}{3}}{{724}{721}{2}}
	\condquteavg{Bts3}{{896}{901}{2}}{{380}{370}{1}}{{1080}{944}{3}}{{783}{777}{2}}
	\condquteavg{Bts4}{{966}{916}{2}}{{390}{377}{1}}{{1081}{1080}{3}}{{731}{726}{2}}
	\condquteavg{Domino}{{722}{408}{2}}{{361}{360}{1}}{{606}{451}{1}}{{720}{720}{2}}
	\condquteavg{Med}{{1104}{1440}{3}}{{3}{1}{0}}{{27}{24}{0}}{{5}{3}{0}}
	\condquteavg{Ring}{{1350}{1350}{3}}{{901}{900}{2}}{{917}{911}{2}}{{907}{903}{2}}
	\condquteavg{Sick}{{782}{776}{2}}{{1}{0}{0}}{{6}{6}{0}}{{1}{1}{0}}
	\midrule
	\condquteavg{Total}{{958}{953}{19}}{{347}{342}{7}}{{693}{652}{14}}{{574}{571}{12}}
	\toprule
\end{array}
\)
\caption{Conditional planning with \qute{}}
\label{tab:cond-qute-avg}
\end{table}

\clearpage
\section{ASP(Q) and QASP}\label{app:aspq}

An ASP(Q) program $\Pi$ is an expression of the form:
\begin{equation}
\label{aspq:program}
\square_0 P_0 \square_1 P_1 \cdots \square_n P_n : C,
\end{equation}
where $n \geq 0$, 
$P_0$, \ldots, $P_n$ are logic programs,
every $\square_i$ is either $\exists^{st}$ or $\forall^{st}$,
and $C$ is a normal stratified program.
In~\cite{amritr19a}, the logic programs $P_i$ can be disjunctive.
Here, for simplicity, we restrict ourselves to logic programs as defined in the Background section.
Symbols $\exists^{st}$ and $\forall^{st}$ are called 
existential and universal answer set quantifiers, respectively. 
We say that the program $P_i$ is existentially (universally, respectively) quantified if
$\square_i$ is $\exists^{st}$ ($\forall^{st}$, respectively).
%
%
%
Let $X$ and $Y$ be sets of atoms such that $X \subseteq Y$,
by $\fixfact{X}{Y}$ we denote the set of rules
$\{ x \leftarrow \mid x \in X \} \cup \{ \leftarrow x \mid x \in Y \setminus X \}$.
Let $\atoms{P}$ denote the set of atoms occurring in a logic program $P$.
Given an ASP(Q) program $\Pi$, a logic program $P$, and a set of atoms $X$, 
we denote by $\Pi_{P,X}$ the program of the form (\ref{aspq:program}), 
where $P_0$ is replaced by $P_0 \cup \fixfact{X}{\atoms{P}}$, that is, 
$\Pi_{P,X} = \square_0 (P_0 \cup \fixfact{X}{\atoms{P}}) \cdots \square_n P_n : C$.
The coherence of ASP(Q) programs can be defined recursively as follows:
\begin{itemize}
\item
$\exists^{st} P : C$ is coherent, if there exists $M \in SM(P)$ such that $C \cup \fixfact{M}{\atoms{P}}$ is satisfiable; 
\item
$\forall^{st} P : C$ is coherent, if for every $M \in SM(P)$, $C \cup \fixfact{M}{\atoms{P}}$ is satisfiable; 
\item
$\exists^{st} P \Pi$ is coherent, if there exists $M \in SM(P)$ such that $\Pi_{P,M}$ is coherent.
\item
$\forall^{st} P \Pi$ is coherent, if for every $M \in SM(P)$, $\Pi_{P,M}$ is coherent.
\end{itemize}




Before moving to the translations between ASP(Q) and QASP, 
we introduce some special forms of ASP(Q) programs.
We say that an ASP(Q) program of the form (\ref{aspq:program}) is in normal form if
$C$ is empty and the heads of the rules of every $P_i$ do not contain atoms occurring in any $P_j$ such that $j<i$,
and we say that 
an ASP(Q) program is in $\forall$-\emph{GDT} form if 
all its universally quantified logic programs are in \emph{GDT} form.
ASP(Q) programs of the form (\ref{aspq:program}) can be translated to 
ASP(Q) programs in normal and $\forall$-\emph{GDT} form (using auxiliary variables)
following the next steps:
\begin{enumerate}
\item
Replace program $C$ by $\emptyset$, and add $\exists^{st} C$ after $\square_n P_n$.
\item
In every logic program,
replace the normal rules $p \leftarrow B$ 
such that $p$ occurs in some previous program
by constraints $\bot \leftarrow \naf{p}, B$; and
erase the choice rules $\{p\}\leftarrow B$
such that $p$ occurs in some previous program.
\item
Apply the method described in~\cite{niemela08b,famirosc20a}
to translate the universally quantified programs to \emph{GDT} form.
\end{enumerate}
The translation can be computed in polynomial time on the size of the ASP(Q) program,
and the result is an ASP(Q) program in normal and $\forall$-\emph{GDT} form that is coherent
if and only if the original ASP(Q) program is coherent.

%
Once an ASP(Q) program $\Pi$ of the form (\ref{aspq:program}) is in normal and $\forall$-\emph{GDT} form,
we can translate it to a QLP as follows.
Let $qlp(\Pi)$ be the QLP of the form
\[
Q_0 X_0 Q_1 X_1 \ldots Q_n X_n \big( P_0^\bullet \cup P_1^\bullet \cup \ldots \cup P_n^\bullet \cup O \big)
\]
where for every $i \in \{0, \ldots, n\}$,
if $\square_i$ is $\exists^{st}$ then 
$Q_i$ is $\exists$,
$X_i$ is the set of atoms occurring in $P_i$ that do not occur in any previous program, and 
$P_i^{\bullet}$ is $P_0$ if $n=0$ and otherwise it is the program that results from adding the literal $\naf{\flag_i}$ to the body of every rule of $P_i$;
while if $\square_i$ is $\forall^{st}$ then 
$Q_i$ is $\forall$, 
$X_i$ is $\{ p \mid \{p\} \leftarrow \in P_i\}$, and 
$P_i^{\bullet}$ is the program that results from replacing $\bot$ in the head of every
integrity constraint by $\flag_i$; and 
$O$ is $\{ \flag_i \leftarrow \flag_{i-1} \mid \beta \leq i \leq n \}$ where $\beta$ is $1$ if $\square_0=\forall^{st}$ and is $2$ otherwise.
\begin{theorem}
$\Pi$ is coherent if and only if $qlp(\Pi)$ is satisfiable.
\end{theorem}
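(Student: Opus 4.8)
The plan is to proceed by induction on the number $n{+}1$ of quantifier layers, peeling off the outermost quantifier $\square_0$ and matching it with the leading quantifier $Q_0 X_0$ of $qlp(\Pi)$. The structural key is that the normal-form condition makes $U=\atoms{P_0}\cup\{\flag_0\}$ a splitting set of the combined program $\hat P = P_0^\bullet\cup\cdots\cup P_n^\bullet\cup O$: because the head of every rule of $P_i$ avoids the atoms of each earlier $P_j$, no rule outside $P_0^\bullet$ has its head in $U$, so by the Splitting Theorem~\cite{liftur94a} the stable models of $\hat P$ arise by first fixing a stable model of the bottom part $P_0^\bullet$ and then solving the residual on the remaining atoms. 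Iterating this decomposition shows that a stable model of $\hat P$ factorises layer by layer, mirroring exactly how the coherence recursion of ASP(Q) accumulates the fact sets $\fixfact{M_{i-1}}{\cdot}$ into successive programs.

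Before the case analysis I would record two auxiliary facts. First, since $\fixcons{Y}{X}$ consists only of integrity constraints we have $SM(P\cup\fixcons{Y}{X})\subseteq SM(P)$; consequently, if a program has no stable model then $\mathcal{Q}P$ is unsatisfiable for every prefix $\mathcal{Q}$. Second, the flag-deriving rules, namely those of $O$ together with the constraints of the universal $P_i^\bullet$ (whose heads are now the $\flag_i$), are positive in the $\flag_i$ and hence have a least fixpoint; once some $\flag_i$ is derived, the rule $\flag_{i+1}\leftarrow\flag_i$ of $O$ propagates it to all later layers, so that the literals $\naf{\flag_j}$ deactivate every rule of the later existential programs while the redirected constraints of the later universal programs become harmless. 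This is the generalisation of the escape mechanism already exploited in Theorem~\ref{theorem:confp}, and I would reuse that argument to conclude that deriving a flag renders the residual program satisfiable.

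With these in hand, the inductive step splits into the two quantifier cases. For $\square_0=\exists^{st}$ we have $X_0=\atoms{P_0}$ and $P_0^\bullet\equiv P_0$ (its $\naf{\flag_0}$ being vacuous), so a choice $Y_0\subseteq X_0$ yields a stable model of the bottom part iff $Y_0\in SM(P_0)$; by the first auxiliary fact every other $Y_0$ makes $\hat P\cup\fixcons{Y_0}{X_0}$, and hence the whole QLP, unsatisfiable, so the existential choice effectively ranges over $SM(P_0)$, as in the coherence clause for $\exists^{st}$. For $\square_0=\forall^{st}$ the program $P_0$ is in \emph{GDT} form, so a choice $Y_0$ of the choice atoms $X_0=\{p\mid\{p\}\leftarrow\in P_0\}$ fixes a unique candidate through the stratified \emph{define} part; if $Y_0$ satisfies the constraints of $P_0$ it is a genuine stable model and no flag fires, whereas otherwise $\flag_0$ is derived and the residual is satisfiable by the second fact. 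Thus the universal choices that can actually refute the QLP are precisely the members of $SM(P_0)$, matching the clause for $\forall^{st}$. In either case, fixing $Y_0=M_0$ leaves a residual QLP whose bottom now pins $\atoms{P_0}$ to $M_0$, which I would identify with $qlp(\Pi'_{P_0,M_0})$ in order to apply the induction hypothesis to the tail $\Pi'$ with $n$ layers.

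The base case $n=0$ is immediate: $\exists X_0 P_0$ is satisfiable iff $SM(P_0)\neq\emptyset$, which is the coherence of $\exists^{st}P_0$ with $C=\emptyset$ (the test $\fixfact{M}{\atoms{P_0}}$ is always satisfiable), while $\forall X_0 P_0^\bullet$ is always satisfiable, its constraints having been redirected to $\flag_0$, matching the vacuous coherence of $\forall^{st}P_0$. I expect the main obstacle to lie not in this conceptual correspondence, which the Splitting Theorem and the flag argument settle, but in the bookkeeping of the reduction step: the canonical $qlp(\Pi'_{P_0,M_0})$ differs cosmetically from the residual QLP, because the injected facts $\fixfact{M_0}{\atoms{P_0}}$ cause the atoms of $\atoms{P_0}$ to reappear in the new leading set $X_1'$ and because the flag offset $\beta$ is recomputed from the new leading quantifier. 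To make the induction airtight I would instead prove the stronger statement for $\hat P$ under an arbitrary consistent fixing of the already-quantified atoms and flags, so that peeling produces another instance of the same statement rather than a literal $qlp(\cdot)$, rendering these renamings manifestly satisfiability-preserving.
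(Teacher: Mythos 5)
The paper states this theorem without proof, so there is no in-paper argument to compare yours against; the closest reference is the lemma chain used for Theorem~\ref{theorem:confp}. Measured against that, your plan is sound and is essentially the paper's own machinery lifted to arbitrary quantifier depth: the normal-form condition does make $\atoms{P_0}\cup\{\flag_0\}$ a splitting set of $P_0^\bullet\cup\dots\cup P_n^\bullet\cup O$; the fact that $\fixcons{Y}{X}$ only filters stable models gives the unsatisfiability propagation you need for irrelevant existential choices; the bijection between relevant universal choices $Y_0\subseteq X_0$ and $SM(P_0)$ is exactly the \emph{GDT} argument of Lemma~\ref{lem:aux1b:theorem:confp}; and your flag-propagation claim (once some $\flag_i$ fires, all later existential layers are deactivated, all later universal layers remain trivially satisfiable, and the residual has a stable model) is the multi-level analogue of Lemma~\ref{lem:aux3:theorem:confp} and checks out, including the role of the offset $\beta$ (when $\square_0=\exists^{st}$, no rule has head $\flag_0$, so the added $\naf{\flag_0}$ literals are indeed vacuous). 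Your base cases are also correct. The one place where this is a plan rather than a proof is the step you yourself flag: the residual QLP obtained by peeling $Q_0X_0$ is not literally $qlp(\Pi'_{P_0,M_0})$ (the atoms of $\atoms{P_0}$ migrate into the new leading program via $\fixfact{M_0}{\atoms{P_0}}$, the flags are reindexed, and $\beta$ is recomputed), so the induction hypothesis cannot be applied verbatim. Your proposed remedy --- proving a stronger statement about the layered program under an arbitrary consistent fixing of the already-processed atoms and flags --- is the right fix, but it still has to be formulated and checked (in particular that $P_1\cup\fixfact{M_0}{\atoms{P_0}}$ remains in \emph{GDT} form when $\square_1=\forall^{st}$, which holds precisely because of the normal-form restriction on heads). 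With that statement written out, the argument goes through.
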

%


The translation in the other direction is as follows.
Given a QLP $\mathcal{Q}P$ over \sig{A} of the form (\ref{qasp:program}), 
let $aspq(\mathcal{Q}P)$ be the ASP(Q) program of the form
\[
Q^{st}_0 P_0 Q^{st}_1 P_1 \ldots Q^{st}_n P_n \exists^{st}(P \cup O) : \emptyset
\]
where every $P_i$ is $\{ \{ p' \}\leftarrow \mid p \in X_i\}$,
$O$ is 
$\{ \bot \leftarrow p, \naf{p'} \mid p \in \sig{X} \} \cup
\{ \bot \leftarrow \naf{p}, p' \mid p \in \sig{X} \}$ where 
$\sig{X}=\bigcup_{i \in \{0, \dotsc, n\}}X_i$, and
we assume that the set $\{ p' \mid p \in \sig{X} \}$ is disjunct from $\sig{A}$.
\begin{theorem}
$\mathcal{Q}P$ is satisfiable if and only if $aspq(\mathcal{Q}P)$ is coherent.
\end{theorem}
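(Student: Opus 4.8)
The plan is to prove a stronger, parameterized statement by induction on the length of the quantifier prefix, so that the recursive definitions of QLP satisfiability and of ASP(Q) coherence can be unfolded in lockstep. Throughout, for a set $S$ of atoms I write $S' = \{p' \mid p \in S\}$ for its primed copy, and I abbreviate $\mathcal{Q} = Q_0 X_0 \cdots Q_n X_n$ and $\sig{X} = \bigcup_{i} X_i$. The observation driving the whole argument is that each program $P_i = \{\{p'\} \leftarrow \mid p \in X_i\}$ has exactly the stable models $\{Y' \mid Y \subseteq X_i\}$, so selecting a stable model of $P_i$ in the coherence recursion corresponds precisely to selecting a subset $Y_i \subseteq X_i$ in the satisfiability recursion, with the chosen subset recorded on the primed atoms.

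First I would isolate the base case as a separate lemma about the innermost block. Fix subsets $Y_0 \subseteq X_0, \ldots, Y_n \subseteq X_n$ and let $W = \bigcup_i Y_i$. I claim that $(P \cup O) \cup \fixfact{W'}{\sig{X}'}$ has a stable model iff $P \cup \fixcons{W}{\sig{X}}$ does. Indeed, in $(P \cup O) \cup \fixfact{W'}{\sig{X}'}$ the primed atoms occur only as the facts that $\fixfact{W'}{\sig{X}'}$ prescribes and in the bodies of $O$, so every stable model contains exactly $W'$ on the primed part; substituting this truth value into $O$ turns its two families of constraints into exactly the constraints of $\fixcons{W}{\sig{X}}$ over $\sig{A}$. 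Since the primed atoms are disjoint from $\sig{A}$ and never head a rule of $P$, the reduct and its minimal models over $\sig{A}$ are unaffected by them, giving a bijection between the stable models restricted to $\sig{A}$ and those of $P \cup \fixcons{W}{\sig{X}}$. Because the trailing check of the final block is against $C = \emptyset$, hence always satisfiable, coherence of $\exists^{st}\big((P\cup O)\cup\fixfact{W'}{\sig{X}'}\big) : \emptyset$ reduces to mere existence of such a stable model, matching satisfiability of the empty-prefix QLP $P \cup \fixcons{W}{\sig{X}}$.

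Next I would state the inductive claim: for every $k$ and every choice $Y_0, \ldots, Y_{k-1}$, writing $Z_k = X_0 \cup \cdots \cup X_{k-1}$ and $W_k = Y_0 \cup \cdots \cup Y_{k-1}$, the ASP(Q) program obtained from $Q_k^{st} P_k \cdots Q_n^{st} P_n \, \exists^{st}(P \cup O) : \emptyset$ by augmenting its first program with $\fixfact{W_k'}{Z_k'}$ is coherent iff the QLP $Q_k X_k \cdots Q_n X_n \big(P \cup \fixcons{W_k}{Z_k}\big)$ is satisfiable; the base case $k = n+1$ is the lemma above. The inductive step peels off the block $Q_k^{st} P_k$: using the bijection, the stable models of $P_k \cup \fixfact{W_k'}{Z_k'}$ are exactly the sets $W_{k+1}'$ for $Y_k \subseteq X_k$ (with $W_{k+1} = W_k \cup Y_k$), and the coherence recursion then augments the next program with $\fixfact{W_{k+1}'}{Z_{k+1}'}$, which is precisely the instance of the claim at $k+1$. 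Matching $\fixcons{W_{k+1}}{Z_{k+1}} = \fixcons{W_k}{Z_k} \cup \fixcons{Y_k}{X_k}$ against the unfolding of $\exists X_k$ (resp. $\forall X_k$) in the QLP definition closes the step in both the existential and the dual universal case. Instantiating the claim at $k = 0$, where $W_0 = Z_0 = \emptyset$ so the augmentation is empty and the program is exactly $aspq(\mathcal{Q}P)$, yields the theorem.

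I expect the main obstacle to be the base-case lemma, specifically verifying that introducing the inert primed facts and the constraint block $O$ neither creates spurious stable models nor destroys existing ones; this is where the stable-model semantics, via the reduct and subset-minimality rather than mere classical equivalence, must be handled with care, since $O$ couples the primed and unprimed atoms. The remaining work is essentially bookkeeping: checking that the atom set $\atoms{P_k \cup \fixfact{W_k'}{Z_k'}}$ accumulated by the coherence recursion is exactly $Z_{k+1}'$, so that the fixing passed to the next block lines up with $\fixcons{W_{k+1}}{Z_{k+1}}$, and using disjointness of the $X_i$ to ensure the recorded choices never interfere. This mirrors the structure of the induction used for Theorem~\ref{qasp:theorem:qbf} and for the converse translation.
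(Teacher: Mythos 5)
Your proof is correct. Note that the paper states this theorem in the appendix \emph{without} any proof, so there is no paper argument to compare against; your induction on the prefix length, carrying the accumulated choices $W_k$ on the primed atoms and resting on the base-case identification of the stable models of $(P\cup O)\cup\fixfact{W'}{\sig{X}'}$ with those of $P\cup\fixcons{W}{\sig{X}}$ extended by the inert facts $W'$, is exactly the argument one would expect and mirrors the induction the paper sketches for Theorem~\ref{qasp:theorem:qbf}. The two points you single out as delicate are indeed the only ones requiring care, and your handling of both is sound: $O$ consists solely of integrity constraints, so it filters but never alters support for unprimed atoms, and $\atoms{P_k\cup\fixfact{W_k'}{Z_k'}}=Z_{k+1}'$ guarantees that the coherence recursion passes precisely the fixing $\fixfact{W_{k+1}'}{Z_{k+1}'}$ needed to match $\fixcons{W_{k+1}}{Z_{k+1}}=\fixcons{W_k}{Z_k}\cup\fixcons{Y_k}{X_k}$ in the QLP unfolding.
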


\clearpage
\section{Proof of Theorem \ref{theorem:confp}}

We first introduce some notation.
Let~\dd{} be a planning description.
By \cp{} we denote the logic program
$\facts \cup \ttt{\dd} \cup (\ref{solving:generate}) \cup (\ref{solving:flags})$.
Then (\ref{solving:conformant}) can be rewritten as $\exists Occ \forall \open{}\ \cp{}$.
Let $Generate$ denote $(\ref{solving:generate})$,
and $Alpha$ denote $(\ref{solving:flags})$.
Then $\cp{}$ can be rewritten as 
$\facts \cup \ttt{\dd} \cup Generate \cup Alpha$.
%
%
Since all occurrences of variable $T$ in~$\cp{}$ are always bound by an atom $t(T)$,
and the only ground atoms of that form that can be true are $t(1), \ldots, t(n)$,
we can consider only the ground instantiations of~$\cp{}$ where $T$ has the values from $1$ to $n$.
We add some more notation to represent those ground instances.
We denote by~$r(i)$ the rule obtained by replacing in~$r$ the variable~$T$ by~$i$.
If~$i \leq j$, by $r(i,j)$ we denote the set of rules containing~$r(k)$ for all~$i \leq k \leq j$.
For a set of rules~$R$, we define~$R(i) = \{ r(i) \mid r \in R \}$ and~$R(i,j) = \{ r(i,j) \mid r \in R \}$.
Let $\cpp{}[i]$ denote the program
\begin{gather*}
Alpha(i)
\cup
Generate(i)
\cup
\ttt{\dd_\dynx}(i)
\end{gather*}
and~$\cpp{}[0,j]$ denote the program
\begin{gather*}
Domain
\cup
    \ttt{\dd_\initx}
\cup
\bigcup_{i=1}^j \cp{}[i].
\end{gather*}
Note that~$\cp{}$ 
has the same stable models as the program 
\[\cpp{}[0,n] \cup \ttt{\dd_\finalx}\]
that is the same as 
\[\cpp{}[0,j] \cup \bigcup_{i=j+1}^n \cpp{}[i] \cup \ttt{\dd_\finalx}\]
for every~$j \in \{1,\dots,n\}$.
The following results make repetitive use of the Splitting Theorem~\cite{liftur94a}.
Note that no atom occurring in~$\cpp{}[0,j]$ occurs in the head of a rule in~$\cpp{}[i] \cup \ttt{\dd_\finalx}$ for any~$i > j$.
We also define~$Holds[i] = \{ h(f,i) \mid f \in F \}$.

\begin{lemma}\label{lem:aux0a:theorem:confp}
Let \dd{} be a planning description that represents a conformant planning problem $\pp{}$,
~$s,s' \subseteq F$ be two states,
$a$ be an action,
and $i$ be a positive integer.
Moreover,
let $X$ be a stable model of~$\cpp{}[0,i-1]$
such that~$s' = \{ f \in F \mid h(f,i-1) \in X \}$ and~$\alpha(i-1) \notin X$.
Then, $\trf(s',a) = s$ iff there is a unique stable model~$Y$ of
\begin{gather}
    X \cup \cpp{}[i]  \cup fixcons(\{occ(a,i)\},Occ_i)
    \label{e1:0:lem:aux0a:theorem:confp}
\end{gather}
such that~$s  = \{ f \in F \mid h(f,i) \in Y \}$ and $\alpha(i) \notin Y$.
Furthermore,
if $\trf(s',a) = \bot$ then (\ref{e1:0:lem:aux0a:theorem:confp}) is unsatisfiable.
\end{lemma}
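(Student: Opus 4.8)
The plan is to isolate the computation carried out at time step~$i$ from everything preceding it, and then to observe that this isolated computation is, up to a renaming of atoms, exactly the one-step program $s'' \cup \{a\} \cup \dyn{\dd}$ that defines the transition function, where $s'' = \{f' \mid f \in s'\}$. The determinism of $\dyn{\dd}$ then delivers the ``zero or one stable model'' dichotomy that the statement requires.

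First I would apply the Splitting Theorem to split $X \cup \cpp{}[i] \cup \fixcons{\{occ(a,i)\}}{Occ_i}$ along the bottom set~$X$, identified with the corresponding set of facts, and the top program $\cpp{}[i] \cup \fixcons{\{occ(a,i)\}}{Occ_i}$. This is legitimate because, as already noted, no atom occurring in $\cpp{}[0,i-1]$, and hence no atom of~$X$, appears in the head of a rule of $\cpp{}[i]$: the heads of $\cpp{}[i]$ involve only the step-$i$ atoms $occ(\cdot,i)$, $h(\cdot,i)$ and $\flag(i)$. Every stable model is therefore of the form $Y = X \cup Z$, with~$Z$ a stable model of the top program evaluated against~$X$. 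I would then pin down the two distinguished atoms: since $t(i) \in \facts \subseteq X$ while $\flag(i-1) \notin X$, rule $Alpha(i)$ cannot fire, and in the conformant setting $\flag(i)$ has no other defining rule, so $\flag(i) \notin Y$; this yields $\alpha(i) \notin Y$ for free and makes every body literal $\naf{\flag(i)}$ occurring in $\ttt{\dd_\dynx}(i)$ true. Simultaneously, $\fixcons{\{occ(a,i)\}}{Occ_i}$ forces $occ(a,i)$ and forbids every other $occ(a',i)$, which is compatible with $Generate(i)$ (whose cardinality~$1$ is met, as $t(i)$ holds and exactly one action is chosen).

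Next I would introduce the renaming $h(f,i)\mapsto f$, $h(f,i-1)\mapsto f'$ and $occ(a,i)\mapsto a$, a bijection between the step-$i$ atoms of the evaluated top program and the atoms of $\dyn{\dd}$. Under it the evaluated top program, whose $h(\cdot,i-1)$ facts are supplied by~$X$ and record exactly~$s''$, becomes literally $s'' \cup \{a\} \cup \dyn{\dd}$, the trivially true literals $t(i)$ and $\naf{\flag(i)}$ dropping out; stable models thus correspond one-to-one. Since $\dd$ is deterministic, this program has at most one stable model~$M$. If it has exactly one, then $\trf(s',a) = M \cap F$ and the corresponding~$Y$ is the unique stable model with $\{f \mid h(f,i) \in Y\} = M \cap F$ and $\alpha(i) \notin Y$, which gives both directions of the equivalence; if it has none, then the top program and hence the whole program is unsatisfiable, which is precisely the case $\trf(s',a) = \bot$ of the final clause.

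I expect the main obstacle to be the careful verification that this renaming genuinely carries $\ttx{\dd_\dynx}(i)$, with its extra body atoms $t(T)$ and $\naf{\flag(T)}$ and its step-indexed fluents and action, back to the original rules $\dyn{\dd}$, and in particular that the two added literals are always satisfied so that they neither block nor spuriously support any rule; this is exactly where the hypotheses $t(i) \in \facts$ and $\alpha(i-1) \notin X$ do their work. A secondary point needing care is that treating the stable model~$X$ as a set of facts introduces no additional support or unfoundedness for the step-$i$ atoms, which is guaranteed by the head-disjointness condition underlying the Splitting Theorem.
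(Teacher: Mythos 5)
Your proposal is correct and follows essentially the same route as the paper's proof: both isolate the step-$i$ program via the Splitting Theorem, use $\alpha(i-1)\notin X$ to rule out $\alpha(i)$, and then invoke the determinism of $\dyn{\dd}$ together with the representation relation to reduce the one-step program to $s''\cup\{a\}\cup\dyn{\dd}$. The only difference is presentational — you make the renaming $h(f,i)\mapsto f$, $h(f,i-1)\mapsto f'$, $occ(a,i)\mapsto a$ explicit where the paper simply asserts the correspondence as a consequence of $\dd$ representing $\pp$.
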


\begin{proof}
    Let
    \begin{align*}
        P_1 &= X \cup Alpha(i) \cup Generate(i) \cup fixcons(\{occ(a,i)\},Occ_i) 
        \\
        P_2 &= P_1 \cup \ttt{\dd_\dynx}(i)
    \end{align*}
    Recall that~$\cpp{}[i] =   Alpha(i) \cup Generate(i) \cup \ttt{\dd_\dynx}(i)$ and, thus,~$P_2$ is equal to~\eqref{e1:0:lem:aux0a:theorem:confp}.
    Program~$P_1$ has a unique stable model~$X_1 = X \cup \{occ(a,i)\}$
    and, by the Splitting Theorem, the stable models of~$P_2$ are the same as the stable models of~$X_1 \cup \ttt{\dd_\dynx}(i)$.
    Note that~$\alpha(i-1) \notin X$ implies that~$\alpha(i) \notin X_1$ because the only rule with~$\alpha(i)$ in the head has~$\alpha(i-1)$ in the body.
    Furthermore, since~\dd{} represents~$\pp{}$ and~$\alpha(i) \notin X$, it follows that 
    \begin{gather}
        (X \cap Holds[i-1]) \cup \{occ(a,i)\} \cup \{ t(i) \} \cup \ttt{\dd_\dynx}(i)
        \label{eq:1:lem:aux0a:theorem:confp} 
    \end{gather}
    either (Case 1) has a unique stable model~$Y'$ such that~${s = \{ f \in F \mid h(f,i) \in Y' \}}$ or (Case 2) has no stable model and~${\trf(s',a) = \bot}$.
    Recall that~$\dyn{\dd{}}$ is deterministic.
    Hence, it cannot generate multiple stable models, and the same applies to $\ttt{\dd_\dynx}(i)$.
    Note also that~$\alpha(i) \notin X$ implies~$\alpha(i) \notin Y'$ because the only rules with~$\alpha(i)$ in the head belong to~$P_1$.
    In Case 1 the iff holds because $Y = Y' \cup X$ is the unique stable model of~$X_1 \cup \ttt{\dd_\dynx}(i)$, and 
    it satisfies the conditions ${s = \{ f \in F \mid h(f,i) \in Y \}}$ and~$\alpha(i) \notin Y$.
    To see this, observe that the atoms in~$X_1$ that do not belong to~\eqref{eq:1:lem:aux0a:theorem:confp}
    also do not occur in the rules of~$\ttt{\dd_\dynx}(i)$.
    In Case 2 the iff holds because none of its two sides holds.
    Finally, if ${\trf(s',a) = \bot}$ then (\ref{eq:1:lem:aux0a:theorem:confp}) is unsatisfiable, hence~$X_1 \cup \ttt{\dd_\dynx}(i)$
    is also unsatisfiable, and therefore (\ref{e1:0:lem:aux0a:theorem:confp}) is unsatisfiable.%
\end{proof}

\begin{lemma}\label{lem:aux0b:theorem:confp}
Let \dd{} be a planning description that represents a conformant planning problem $\pp{}$,
$s,s' \subseteq F$ be two states, 
$p= [ a_1; a_2; \dotsc ;a_{n} ]$ be a plan,
$i$ be a positive integer, and
$X$ be a stable model of~$\cpp{}[0,i-1]$
such that~$s' = \{ f \in F \mid h(f,i-1) \in X \}$ and~$\alpha(i-1) \notin X$.
Moreover, let
\begin{align*}
    A &= \{ occ(a_1,i), occ(a_2,i+1), \dotsc, occ(a_n,i+n-1) \}
\end{align*}
be a set of atoms representing action occurrences.
Then, $\trf(\{s'\},p) = s$ iff there is a unique stable model~$Y$ of
\begin{gather}
    X \cup \bigcup_{j=i}^{i+n-1} \big( \cpp{}[j]  \cup fixcons(\{occ(a,j)\},Occ_j) \big)
    \label{eq:0:lem:aux0b:theorem:confp}
\end{gather}
such that~$s  = \{ f \in F \mid h(f,i + n -1) \in Y \}$. Furthermore,~$\alpha(i + n -1) \notin Y$.
\end{lemma}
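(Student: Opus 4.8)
The plan is to prove the lemma by induction on the length $n$ of the plan $p$, using Lemma~\ref{lem:aux0a:theorem:confp} to advance one time step at a time and the Splitting Theorem~\cite{liftur94a} to glue the first step to the remainder. Since $p$ consists of non-sensing actions only (we are in a conformant problem), the singleton $\{s'\}$ stays a singleton (or collapses to $\bot$) under $\trf$, so throughout I identify $\trf(\{s'\},p)$ with the single state it yields, matching the statement $\trf(\{s'\},p)=s$. I would also record once that the relevant splitting condition is exactly the head-atom monotonicity noted before the lemmas: no atom occurring in $\cpp{}[0,j]$ appears in the head of any rule of $\cpp{}[k]$ with $k>j$.

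For the base case $n=0$ we have $p=[]$, so $\trf(\{s'\},p)=\{s'\}$ and the union in~\eqref{eq:0:lem:aux0b:theorem:confp} is empty, leaving just the program $X$, whose unique stable model is $X$ itself. Reading off $s=\{f\in F\mid h(f,i-1)\in X\}=s'$ and recalling $\alpha(i-1)\notin X$ from the hypotheses settles both directions at once, because here $i+n-1=i-1$.

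For the inductive step I would peel off the first action $a_1$ at time step $i$. First I apply Lemma~\ref{lem:aux0a:theorem:confp} with action $a_1$. If $\trf(s',a_1)=\bot$, that lemma makes $X\cup\cpp{}[i]\cup fixcons(\{occ(a_1,i)\},Occ_i)$ unsatisfiable; since this is the bottom part of~\eqref{eq:0:lem:aux0b:theorem:confp} and no head atom of a later step reaches into it, the whole program is unsatisfiable, while $\trf(\{s'\},p)=\bot$ as well, so both sides of the iff fail. Otherwise $\trf(s',a_1)=s_1$ and Lemma~\ref{lem:aux0a:theorem:confp} gives a unique stable model $X_1$ of that bottom program with $s_1=\{f\in F\mid h(f,i)\in X_1\}$ and $\alpha(i)\notin X_1$. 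The key observation is that $X_1$, being a stable model of $\cpp{}[0,i]\cup fixcons(\{occ(a_1,i)\},Occ_i)$, is in particular a stable model of $\cpp{}[0,i]$, hence a legitimate prefix for the induction hypothesis at time $i+1$. Splitting~\eqref{eq:0:lem:aux0b:theorem:confp} along the atoms of its first-step part, the Splitting Theorem identifies its stable models $Y$ with the stable models of $X_1\cup\bigcup_{j=i+1}^{i+n-1}\big(\cpp{}[j]\cup fixcons(\ldots)\big)$ extending the unique $X_1$. Applying the induction hypothesis to the plan $[a_2;\dots;a_n]$ of length $n-1$, state $s_1$, time $i+1$, and prefix $X_1$, together with $\trf(\{s'\},p)=\trf(\{s_1\},[a_2;\dots;a_n])$ (valid because $\trf(\{s'\},a_1)=\{s_1\}\neq\bot$), transfers the equivalence and the final $h$- and $\alpha$-conditions; uniqueness of $Y$ follows from the uniqueness of $X_1$ and the bijection supplied by the split.

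The main obstacle I anticipate is bookkeeping rather than conceptual: keeping the time indices of $\cpp{}[\cdot]$, of $Occ_j$, and of the action occurrences in $A$ aligned across the split, and arguing cleanly that $X_1$ promotes from ``stable model of the constrained first-step program'' to ``valid prefix stable model of $\cpp{}[0,i]$'' so that the induction hypothesis genuinely applies. One must also thread the $\bot$ case through the argument so that the stated equivalence holds uniformly for every candidate state $s$, and confirm that the deterministic character of \dd{} is what forces the stable model of each single-step extension, and hence of the whole program, to be unique.
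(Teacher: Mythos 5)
Your proposal is correct and follows essentially the same route as the paper's proof: induction on the plan length, using Lemma~\ref{lem:aux0a:theorem:confp} to handle the first action (including the $\trf(s',a_1)=\bot$ case, where unsatisfiability of the first-step subprogram propagates to the whole program via the Splitting Theorem), then applying the induction hypothesis to the tail plan from the unique intermediate stable model and gluing with the Splitting Theorem. The only differences are cosmetic — you start the induction at $n=0$ where the paper starts at $n=1$, and you spell out explicitly why the intermediate model qualifies as a prefix for the induction hypothesis, which the paper leaves implicit.
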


\begin{proof}
We consider first the case where $\trf(s',a_1)$ is $\bot$, and then the case where $\trf(s',a_1)$ is a state $s''$.
If $\trf(s',a_1)$ is $\bot$, then by Lemma~\ref{lem:aux0a:theorem:confp}
the program
\begin{gather}
X \cup \cpp{}[i] \cup fixcons(\{occ(a,i)\},Occ_i)
    \label{eq:1:lem:aux0b:theorem:confp}
\end{gather}
is unsatisfiable.
Hence, by the Splitting Theorem, the program 
(\ref{eq:0:lem:aux0b:theorem:confp}) is also unsatisfiable, and the iff holds because none of its sides holds.
We continue with the case where $\trf(s',a_1)$ is a state $s''$.
If~$n=1$, the lemma statement follows directly from Lemma~\ref{lem:aux0a:theorem:confp}.
Then the proof follows by induction assuming that the lemma statement holds for all plans of length~$n-1$.
By definition, $\trf(\{s'\},p) = \trf(\{s''\},q)$ with~$q=[ a_2; \dotsc; a_{n}]$.
From Lemma~\ref{lem:aux0a:theorem:confp},
we get that~$\trf(s',a_1) = s''$ iff there is a unique stable model~$Z$ of
(\ref{eq:1:lem:aux0b:theorem:confp})
such that~$s''  = \{ f \in F \mid h(f,i) \in Z \}$ and~$\alpha(i) \notin Z$.
%
%
%
By induction hypothesis, we get that~$\trf(\{s''\},q) = s$ iff there is a unique stable model~$Y$ of
$$Z \cup \bigcup_{j=i+1}^{i+n-1} \big( \cpp{}[j]  \cup fixcons(\{occ(a,j)\},Occ_j) \big)$$
such that~$s  = \{ f \in F \mid h(f,i+n-1) \in Y \}$.
Furthermore,~$\alpha(i + n -1) \notin Y$.
From the Splitting Theorem,~$Y$ is the unique stable model of~\eqref{eq:0:lem:aux0b:theorem:confp}, 
and the iff condition holds.
\end{proof}

\begin{lemma}\label{lem:aux0:theorem:confp}
Let \dd{} be a planning description that represents a conformant planning problem $\pp{}$.
Let~$Y \subseteq Open$ be a set of atoms,~$s \subseteq F$ and~$s_0 \in SM(\init{\dd{}})$ be states such that
$$Y = \{ h(f,0) \in Open \mid f \in s_0 \},$$ and $p= [ a_1; a_2; \dotsc; a_{n}]$ be a plan
with
\begin{align*}
    X &= \{ occ(a_1,1), occ(a_2,2), \dotsc, occ(a_n,n) \}.
\end{align*}
Then, $\trf(\{s_0\},p) = s$ iff there is a unique stable model~$Z$ of
\begin{gather}
    \cpp{}[0,n] \cup \fixcons{X}{Occ} \cup \fixcons{ Y }{Open}
    \label{eq:0:lem:aux0:theorem:confp}
\end{gather}
such that~$s  = \{ f \mid h(f,n) \in Z \}$. Furthermore, $\alpha(n) \notin Z$.
\end{lemma}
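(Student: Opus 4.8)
The plan is to reduce the statement to Lemma~\ref{lem:aux0b:theorem:confp} instantiated at $i=1$. That lemma, applied to the plan $p$, requires a stable model of $\cpp{}[0,0] = \facts \cup \ttt{\dd_\initx}$ that represents the chosen initial state and does not contain $\flag(0)$; indeed, at $i=1$ the set $A$ appearing there is exactly $X$. So the first task is to manufacture such a witness and to recognise it as the stable model singled out by $\fixcons{Y}{\open{}}$, and the second is to splice the initial layer onto the dynamic layers via the Splitting Theorem~\cite{liftur94a}.

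First I would analyse the bottom layer $\facts \cup \ttt{\dd_\initx} \cup \fixcons{Y}{\open{}}$, noting that the facts $\facts$ contribute only $t/1$, $action/1$ and $senses/2$ atoms and hence cannot interfere. Since $\init{\dd}$, and thus $\ttx{\dd_\initx}$, is in GDT form, I write $\ttx{\dd_\initx} = \generate \cup \define \cup \test$ as in the main text, so that $\ttt{\dd_\initx}$ consists of $\generate$, $\define$, and the rules obtained from $\test$ by replacing each $\bot$ head with $\flag(0)$. Because $s_0 \in SM(\init{\dd{}})$, its image $M_0 = \{ h(f,0) \mid f \in s_0 \}$ is a stable model of $\ttx{\dd_\initx}$, so by the relevance argument recalled in the main text, $Y = M_0 \cap \open{}$ drives $\generate$, the program $Y \cup \define$ has the unique stable model $M_0$, and $M_0$ satisfies $\test$. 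Adding $\fixcons{Y}{\open{}}$ pins the choices of $\generate$ to exactly $Y$; splitting on $\open{}$ then leaves the unique stable model $M_0$ of $Y \cup \define$, and since $M_0$ satisfies every constraint of $\test$ none of the $\flag(0)$-rules fires. Hence $\facts \cup \ttt{\dd_\initx} \cup \fixcons{Y}{\open{}}$ has a unique stable model $W$ with $s_0 = \{ f \in F \mid h(f,0) \in W \}$ and $\flag(0) \notin W$. By the defining property of $\fixcons{\cdot}{\cdot}$, $W$ is precisely the stable model of $\cpp{}[0,0]$ whose intersection with $\open{}$ is $Y$, so $W$ meets the hypotheses of Lemma~\ref{lem:aux0b:theorem:confp} with $i=1$ and $s' = s_0$.

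Next I would apply Lemma~\ref{lem:aux0b:theorem:confp} at $i=1$, obtaining that $\trf(\{s_0\},p) = s$ holds iff there is a unique stable model of $W \cup \bigcup_{j=1}^{n}\big( \cpp{}[j] \cup \fixcons{\{occ(a_j,j)\}}{Occ_j} \big)$ whose $h(\cdot,n)$-part encodes $s$, with $\flag(n)$ absent in that case. It then remains to identify this program with the target. Since the $Occ_j$ are pairwise disjoint, $\bigcup_{j=1}^{n} Occ_j = Occ$ and $\bigcup_{j=1}^{n}\{occ(a_j,j)\} = X$, so $\bigcup_{j=1}^{n}\fixcons{\{occ(a_j,j)\}}{Occ_j} = \fixcons{X}{Occ}$, while $\bigcup_{j=1}^{n}\cpp{}[j]$ is the portion of $\cpp{}[0,n]$ above the initial layer. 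A final application of the Splitting Theorem, using the observation recorded before the lemmas that no atom of $\cpp{}[0,0]$ occurs in the head of any rule of $\cpp{}[i]$ for $i \geq 1$ (and that $\fixcons{X}{Occ}$ has only $\bot$ in its heads), shows that the stable models of $\cpp{}[0,n] \cup \fixcons{X}{Occ} \cup \fixcons{Y}{\open{}}$ coincide with those of $W \cup \bigcup_{j=1}^{n}\big(\cpp{}[j] \cup \fixcons{\{occ(a_j,j)\}}{Occ_j}\big)$, since the bottom layer $\facts \cup \ttt{\dd_\initx} \cup \fixcons{Y}{\open{}}$ has the single stable model $W$. Transferring the uniqueness statement and the conditions on $h(\cdot,n)$ and $\flag(n)$ across this identity yields the claim.

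The main obstacle I anticipate is the careful bookkeeping at the two splitting steps: I must check that treating the unique bottom stable model $W$ as a set of facts produces exactly the same top stable models as the partial evaluation prescribed by the Splitting Theorem (which relies on no bottom atom occurring in a top head), and I must verify that the relevance of $Y$ together with the rewriting $\bot \mapsto \flag(0)$ genuinely forces $\flag(0) \notin W$ rather than merely permitting it. Once the splitting sets are fixed, the remaining reasoning is routine.
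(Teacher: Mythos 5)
Your proposal is correct and follows essentially the same route as the paper's proof: pin down the unique stable model of the initial layer $\cpp{}[0,0]$ corresponding to $s_0$, invoke Lemma~\ref{lem:aux0b:theorem:confp} with $i=1$, and splice the layers together via the Splitting Theorem. The only (harmless) difference is that you fold $\fixcons{Y}{Open}$ into the bottom layer before splitting, whereas the paper splits first and then argues a posteriori that the resulting $Z$ is the unique stable model satisfying those constraints.
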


\begin{proof}
Note that~$\cpp{}[0,0] = Domain \cup \ttt{\dd_\initx}$ and that
$s_0 \in SM(\init{\dd{}})$ implies that this program has a unique stable model~$Z_0$ such that~$s_0 = \{ f \in F \mid h(f,0) \in Z_0 \}$ and~$\alpha(0) \notin Z_0$.
Then, from Lemma~\ref{lem:aux0b:theorem:confp} with~$i = 1$, we get that
$\trf(\{s_0\},p) = s$ iff there is a unique stable model~$Z$ of
$$Z_0 \cup \bigcup_{j=1}^{n} \big( \cpp{}[j]  \cup fixcons(\{occ(a,j)\},Occ_j) \big).$$
Furthermore, $\alpha(n) \notin Z$.
From the Splitting Theorem, this implies that~$Z$ is a stable model of~$\cpp{}[0,n]$.
Furthermore, it is the unique stable model of~$\cpp{}[0,n]$ such that
$$
s_0 = \{ f \in F \mid h(f,0) \in Z_0 \} = \{ f \in F \mid h(f,0) \in Z \}.
$$
Note also that~$Y = \{ h(f,0) \in Open \mid f \in s_0 \} = Z \cap Open$
and, thus, $Z$ satisfies
$$\fixcons{ Y}{Open}
 = \fixcons{ Z \cap Open }{Open}.$$
No other stable model of~$\cpp{}[0,0]$ satisfies these constraints.
Therefore, $Z$ is the unique stable model of~\eqref{eq:0:lem:aux0:theorem:confp}.
\end{proof}

\begin{lemma}\label{lem:aux1a:theorem:confp}
Let \dd{} be a planning description that represents a conformant planning problem $\pp{}$ such that~$\init{\dd}$ is in $GDT$ form, 
and $n$ be a positive integer.
Let $p= [ a_1; \dotsc; a_n ]$ be a plan, 
$X = \{ occ(a_1,1), \dotsc occ(a_n,n) \}$ be a set of atoms, and
$Y \subseteq Open$ be another set of atoms.
Then, the following two statements are equivalent:
\begin{itemize}
    \item there is some stable model~$Z$ of $\cp \cup \fixcons{X}{Occ} \cup \fixcons{Y}{Open}$ such that~$\alpha(n) \notin Z$; and
    \item there is $s_0 \in SM(\init{\dd})$ s.t.~$Y = \{ h(f,0) \in Open \mid f \in s_0 \}$ and~$\trf_{\dyn{\dd}}(\{s_0\},p) \subseteq G$.%
\end{itemize}
\end{lemma}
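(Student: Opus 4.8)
The plan is to reduce the statement to Lemma~\ref{lem:aux0:theorem:confp} by separating the goal part $\ttt{\dd_\finalx}$ from $\cpp{}[0,n]$ with the Splitting Theorem, and by using the $GDT$ form of $\init{\dd}$ to match subsets $Y \subseteq Open$ with initial states. Recall that $\cp = \cpp{}[0,n] \cup \ttt{\dd_\finalx}$ and that $\ttt{\dd_\finalx}$ consists of integrity constraints over atoms $h(f,n)$ and $\alpha(n)$ that are all defined in $\cpp{}[0,n]$; hence, by the Splitting Theorem, a set $Z$ is a stable model of $\cp \cup \fixcons{X}{Occ} \cup \fixcons{Y}{Open}$ iff it is a stable model of $\cpp{}[0,n] \cup \fixcons{X}{Occ} \cup \fixcons{Y}{Open}$ that in addition satisfies $\ttt{\dd_\finalx}$. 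The key observation is that when $\alpha(n) \notin Z$ the literal $\naf{\alpha(n)}$ in each rule of $\ttt{\dd_\finalx}$ is satisfied, so $Z$ satisfies $\ttt{\dd_\finalx}$ precisely when the state $s = \{ f \mid h(f,n) \in Z \}$ satisfies every goal rule of $\final{\dd}$, i.e.\ precisely when $s \in G$.

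For the direction from the second statement to the first, I am given $s_0 \in SM(\init{\dd})$ with $Y = \{ h(f,0) \in Open \mid f \in s_0 \}$ and $\trf(\{s_0\},p) \subseteq G$; as $p$ is sensing-free and $\{s_0\}$ a singleton, $\trf(\{s_0\},p)$ is a single state $s$, and $s \in G$. Applying Lemma~\ref{lem:aux0:theorem:confp} to $s_0$, $Y$ and $s$ gives a unique stable model $Z$ of $\cpp{}[0,n] \cup \fixcons{X}{Occ} \cup \fixcons{Y}{Open}$ with $s = \{ f \mid h(f,n) \in Z \}$ and $\alpha(n) \notin Z$. Since $s \in G$, the key observation shows that $Z$ satisfies $\ttt{\dd_\finalx}$, so by the splitting step $Z$ is a stable model of $\cp \cup \fixcons{X}{Occ} \cup \fixcons{Y}{Open}$ with $\alpha(n) \notin Z$, which is the first statement.

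For the converse, let $Z$ be a stable model of $\cp \cup \fixcons{X}{Occ} \cup \fixcons{Y}{Open}$ with $\alpha(n) \notin Z$. Since rule~(\ref{solving:flags}) propagates $\alpha(0)$ forward to $\alpha(n)$ (the facts $t(1), \dots, t(n)$ being present in $\facts$), the assumption $\alpha(n) \notin Z$ forces $\alpha(0) \notin Z$. Because $\init{\dd}$, and hence $\ttt{\dd_\initx}$, is in $GDT$ form, the fixed choices $Y = Z \cap Open$ yield a unique stable model $M$ of $Y \cup \define{}$ with $M \cap Open = Y$, and $\alpha(0)$ is derived exactly when $M$ violates $\test{}$; so $\alpha(0) \notin Z$ means $Y$ is relevant and determines a state $s_0 = \{ f \mid h(f,0) \in M \} \in SM(\init{\dd})$ with $Y = \{ h(f,0) \in Open \mid f \in s_0 \}$. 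By the splitting step $Z$ is also a stable model of $\cpp{}[0,n] \cup \fixcons{X}{Occ} \cup \fixcons{Y}{Open}$, so Lemma~\ref{lem:aux0:theorem:confp} applies to $s_0$, $Y$, and $s = \{ f \mid h(f,n) \in Z \}$; as that program is satisfiable, the unsatisfiability clauses of Lemmas~\ref{lem:aux0a:theorem:confp} and~\ref{lem:aux0b:theorem:confp} exclude $\trf(\{s_0\},p) = \bot$, giving $\trf(\{s_0\},p) = s$. Finally $Z$ satisfies $\ttt{\dd_\finalx}$ and $\alpha(n) \notin Z$, so $s \in G$ and thus $\trf(\{s_0\},p) \subseteq G$.

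The main obstacle I expect is the first half of the converse, namely turning $\alpha(0) \notin Z$ into an honest initial state $s_0 \in SM(\init{\dd})$: this is exactly where the $GDT$-form hypothesis enters, through the correspondence between relevant subsets of $Open$ and stable models of $\ttt{\dd_\initx}$, and it has to be combined carefully with $\fixcons{Y}{Open}$ and the Splitting Theorem so that the hypotheses of Lemma~\ref{lem:aux0:theorem:confp} are met. A secondary subtlety is ruling out the non-executable case $\trf(\{s_0\},p) = \bot$, which triggers no $\alpha$ atom and so must be handled through the unsatisfiability statements of the earlier lemmas rather than through the flag mechanism.
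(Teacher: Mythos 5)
Your proposal is correct and follows essentially the same route as the paper: it reduces the equivalence to Lemma~\ref{lem:aux0:theorem:confp}, splits off $\ttt{\dd_\finalx}$ via the Splitting Theorem with the observation that (under $\alpha(n)\notin Z$) satisfying $\ttt{\dd_\finalx}$ amounts to $s\in G$, and uses the $GDT$ form of $\init{\dd}$ together with the $\alpha(0)$ flag to match $Y$ with an initial state $s_0$. The only difference is organizational — you argue the two implications directly where the paper first case-splits on whether a matching $s_0\in SM(\init{\dd})$ exists — and your explicit handling of the $\trf(\{s_0\},p)=\bot$ case via the unsatisfiability clauses is, if anything, slightly more careful than the paper's.
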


\begin{proof}
First, assume that there is no~$s_0 \in SM(\init{\dd})$ s.t.~$Y = \{ h(f,0) \in Open \mid f \in s_0 \}$.
In this case, the second lemma statement does not hold, and we will see next that the first lemma statement
does not hold either.
Given our assumption and by construction,
the program $\base \cup \ttt{\dd_\initx} \cup Alpha \cup \fixcons{Y}{Open}$, 
that is a subset of $\cp \cup \fixcons{X}{Occ} \cup \fixcons{Y}{Open}$,
has a unique stable model, and this stable model contains the atom $\alpha(n)$.
Then, by the Splitting Theorem, the stable models of $\cp \cup \fixcons{X}{Occ} \cup \fixcons{Y}{Open}$
contain the atom $\alpha(n)$, which contradicts the first lemma statement.

Otherwise, there is $s_0 \in SM(\init{\dd})$ such that~$Y = \{ h(f,0) \in Open \mid f \in s_0 \}$.
Then, from Lemma~\ref{lem:aux0:theorem:confp}, it follows that
$\trf_{\dyn{\dd}}(\{s_0\},p) = s$ iff there is a unique stable model~$Z$ of
$$\cpp{}[0,n] \cup \fixcons{X}{Occ} \cup \fixcons{Y}{Open}$$
such that~$s  = \{ f \in F \mid h(f,n) \in Z \}$. Furthermore, $\alpha(n) \notin Z$.
Note that~$\cp = \cpp{}[0,n]$.
Moreover, we can see that~$s \in G$ iff~$Z$ satisfies~$\ttt{\dd_\finalx}$.
Since~${\ttt{\dd_\finalx} \subseteq \ttt{\dd}}$, it follows that the two lemma statements are equivalent.%
\end{proof}

\begin{lemma}\label{lem:aux1b:theorem:confp}
Let \dd{} be a planning description that represents a conformant planning problem $\pp{}$ such that $\init{\dd}$ is in $GDT$ form.
Then, the following statements hold:
\begin{itemize}
    \item For every $Y\subseteq Open$ such that~$\ttx{\dd_\initx} \cup  \fixcons{Y}{Open}$ is satisfiable,
    there is a unique~$s_Y \in SM(\init{\dd})$ such that~$Y = \{ h(f,0) \in Open \mid f \in s_Y \}$.
    \item For every~$s_0 \in SM(\init{\dd})$, there is a unique~$Y\subseteq Open$ such that \\ $Y = \{ h(f,0) \in Open \mid f \in s_0 \}$.
    Furthermore, $\ttx{\dd_\initx} \cup \fixcons{Y}{Open}$ is satisfiable.
\end{itemize}
\end{lemma}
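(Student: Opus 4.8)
The plan is to establish a bijective correspondence between the \emph{relevant} subsets $Y \subseteq \open{}$ --- those for which $\ttx{\dd_\initx} \cup \fixcons{Y}{\open{}}$ is satisfiable --- and the stable models of $\ttx{\dd_\initx}$, and then to transport this correspondence back to $SM(\init{\dd})$ through the renaming $f \mapsto h(f,0)$. Both bullets then fall out as the ``two directions'' of a bijection together with its injectivity.

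First I would record two facts already available from the discussion preceding Theorem~\ref{theorem:confp}. Fact~(a): since every atom of $\init{\dd}$ lies in $F$ and $\ttx{\dd_\initx}$ is obtained by the injective renaming $f \mapsto h(f,0)$, the map $s_0 \mapsto \{h(f,0) \mid f \in s_0\}$ is a bijection from $SM(\init{\dd})$ onto $SM(\ttx{\dd_\initx})$. Fact~(b): because $\ttx{\dd_\initx}$ is in \emph{GDT} form, with $\generate{}$, $\define{}$, $\test{}$ its choice rules, normal rules, and constraints, every $M \in SM(\ttx{\dd_\initx})$ is the unique stable model of $(M \cap \open{}) \cup \define{}$ and additionally satisfies $\test{}$; conversely each such $M$ is determined by the subset $M \cap \open{}$ of chosen atoms. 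In particular $M \mapsto M \cap \open{}$ is injective on $SM(\ttx{\dd_\initx})$.

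With these in hand I would prove the two bullets. For the first, the hypothesis that $\ttx{\dd_\initx} \cup \fixcons{Y}{\open{}}$ is satisfiable means, by the defining property of $\fixcons{\cdot}{\cdot}$, that there is some $M \in SM(\ttx{\dd_\initx})$ with $M \cap \open{} = Y$; by Fact~(b) this $M$ is unique, and by Fact~(a) it corresponds to a unique $s_Y \in SM(\init{\dd})$ with $Y = M \cap \open{} = \{h(f,0) \in \open{} \mid f \in s_Y\}$. Uniqueness of $s_Y$ follows because any competing state would yield, via Fact~(a), a second stable model of $\ttx{\dd_\initx}$ agreeing with $M$ on $\open{}$, contradicting the injectivity in Fact~(b). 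For the second bullet, given $s_0 \in SM(\init{\dd})$ the set $Y = \{h(f,0) \in \open{} \mid f \in s_0\}$ is unique by its very definition, and its image $M = \{h(f,0) \mid f \in s_0\}$ is by Fact~(a) a stable model of $\ttx{\dd_\initx}$ with $M \cap \open{} = Y$, so $\ttx{\dd_\initx} \cup \fixcons{Y}{\open{}}$ is satisfiable.

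The only substantive step is Fact~(b): that a stable model of $\ttx{\dd_\initx}$ is pinned down by its intersection with $\open{}$. This is exactly where the \emph{GDT} hypothesis is used --- the choice atoms in $\open{}$ occur in no other rule head, so fixing them reduces $\define{}$ to a stratified normal program with a unique stable model, while $\test{}$ merely filters. Since this observation is made essentially verbatim in the text before Theorem~\ref{theorem:confp}, I expect no genuine obstacle; the remaining work is bookkeeping --- matching the defining property $SM(P \cup \fixcons{Y}{\open{}}) = \{M \in SM(P) \mid M \cap \open{} = Y\}$ to the correspondence, and checking that the renaming in Fact~(a) commutes with intersecting against $\open{} \subseteq \{h(f,0) \mid f \in F\}$ --- so the main care needed is keeping the two levels, states $s_0$ over $F$ and their images over $\{h(f,0)\}$, carefully aligned.
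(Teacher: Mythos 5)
Your proposal is correct and follows essentially the same route as the paper's proof: both rest on the observation that the choice rules of $\init{\dd}$ correspond exactly to the atoms of \open{}, that the GDT hypothesis makes the remainder of \ttx{\dd_\initx} deterministic so a stable model is pinned down by its intersection with \open{}, and that the renaming $f \mapsto h(f,0)$ transports this back to $SM(\init{\dd})$. The paper states these points more tersely (leaning on the discussion of \generate{}, \define{}, and \test{} preceding Theorem~\ref{theorem:confp}), whereas you spell them out as explicit Facts~(a) and~(b), but the substance is identical.
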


\begin{proof}
    For the first statement, note that there is a choice rule of form~$\{ f \} \leftarrow$ in~$\init{\dd}$ iff the atom~$h(f,0)$ belongs to~$Open$.
    The rest of the program is deterministic, therefore, there is at most one $s_0 \in SM(\init{\dd})$ such that~$Y = \{ h(f,0) \in Open \mid f \in s_0 \}$.
    Furthermore, since~$\ttx{\dd_\initx} \cup  \fixcons{Y}{Open}$ is satisfiable, such stable model exists.
    For the same reason, for the second statement, if~$s_0 \in SM(\init{\dd})$, there is a unique~$Y\subseteq Open$ such that~$Y = \{ h(f,0) \in Open \mid f \in s_0 \}$.
    Furthermore, it is easy to see that~$\ttx{\dd_\initx} \cup \fixcons{Y}{Open}$ is satisfiable.
\end{proof}

\begin{lemma}\label{lem:aux1:theorem:confp}
Let \dd{} be a planning description that represents a conformant planning problem $\pp{}$ such that $\init{\dd}$ is in $GDT$ form, 
and $n$ be a positive integer.
Let $p= [ a_1; \dotsc; a_n ] $ be a plan,
and $X = \{ occ(a_1,1), \dotsc occ(a_n,n) \}$ be a set of atoms.
Then, the following two statements are equivalent:
\begin{itemize}
    \item for all $Y\subseteq Open$ such that~$\ttx{\dd_\initx} \cup  \fixcons{Y}{Open}$ is satisfiable,
    there is some stable model~$Z$ of~$\cp \cup \fixcons{X}{Occ} \cup \fixcons{Y}{Open})$ such that~$\alpha(n) \notin Z$; and
    \item every $s_0 \in SM(\init{\dd})$ 
    satisfies $\trf_{\dyn{\dd}}(\{s_0\},p) \subseteq G$.
\end{itemize}
\end{lemma}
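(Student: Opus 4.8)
The plan is to obtain the equivalence by combining Lemmas~\ref{lem:aux1a:theorem:confp} and~\ref{lem:aux1b:theorem:confp}. Lemma~\ref{lem:aux1a:theorem:confp} already settles, for a fixed guess $Y$, the correspondence between ``some stable model $Z$ of $\cp \cup \fixcons{X}{Occ} \cup \fixcons{Y}{Open}$ with $\alpha(n) \notin Z$ exists'' and ``some initial state matching $Y$ reaches the goal''; what remains is to lift this pointwise equivalence to the two universally quantified statements. The bridge is Lemma~\ref{lem:aux1b:theorem:confp}, which provides a bijection between the satisfiable guesses $Y \subseteq Open$ and the stable models $s_0 \in SM(\init{\dd})$, realized through $Y = \{ h(f,0) \in Open \mid f \in s_0 \}$. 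So the whole argument reduces to translating quantifiers along this bijection.

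For the implication from the first statement to the second, I would fix an arbitrary $s_0 \in SM(\init{\dd})$ and use the second part of Lemma~\ref{lem:aux1b:theorem:confp} to obtain the matching $Y \subseteq Open$ with $Y = \{ h(f,0) \in Open \mid f \in s_0 \}$ and with $\ttx{\dd_\initx} \cup \fixcons{Y}{Open}$ satisfiable. The first statement then supplies a stable model $Z$ of $\cp \cup \fixcons{X}{Occ} \cup \fixcons{Y}{Open}$ with $\alpha(n) \notin Z$, and Lemma~\ref{lem:aux1a:theorem:confp} converts this into the existence of some $s_0' \in SM(\init{\dd})$ with the same $Y$ satisfying $\trf_{\dyn{\dd}}(\{s_0'\},p) \subseteq G$. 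The uniqueness in the first part of Lemma~\ref{lem:aux1b:theorem:confp} identifies $s_0'$ with $s_0$, giving $\trf_{\dyn{\dd}}(\{s_0\},p) \subseteq G$ for the arbitrary $s_0$.

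For the converse, I would fix an arbitrary $Y \subseteq Open$ with $\ttx{\dd_\initx} \cup \fixcons{Y}{Open}$ satisfiable and use the first part of Lemma~\ref{lem:aux1b:theorem:confp} to get the unique $s_Y \in SM(\init{\dd})$ with $Y = \{ h(f,0) \in Open \mid f \in s_Y \}$. The second statement yields $\trf_{\dyn{\dd}}(\{s_Y\},p) \subseteq G$, so Lemma~\ref{lem:aux1a:theorem:confp}, applied to $s_Y$, produces a stable model $Z$ of $\cp \cup \fixcons{X}{Occ} \cup \fixcons{Y}{Open}$ with $\alpha(n) \notin Z$; since $Y$ was arbitrary, the first statement holds.

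The argument is essentially bookkeeping, so I expect no genuine difficulty, only the need to keep the quantifiers aligned with the bijection. The one step that deserves attention is the appeal to uniqueness in the forward direction: Lemma~\ref{lem:aux1a:theorem:confp} returns \emph{some} initial state matching $Y$, and it must be argued, via Lemma~\ref{lem:aux1b:theorem:confp}, that this is exactly the $s_0$ we started from rather than a different one. Equally, I would make sure that Lemma~\ref{lem:aux1a:theorem:confp} is only invoked for guesses $Y$ that genuinely correspond to a stable model of $\init{\dd}$, which is precisely what the satisfiability hypothesis on $\ttx{\dd_\initx} \cup \fixcons{Y}{Open}$ guarantees.
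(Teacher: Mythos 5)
Your proposal is correct and follows essentially the same route as the paper: both reduce the equivalence to Lemma~\ref{lem:aux1a:theorem:confp} for each fixed guess $Y$ and then transport the universal quantifiers across the correspondence between satisfiable guesses and initial states given by Lemma~\ref{lem:aux1b:theorem:confp}, including the uniqueness step needed to identify the initial state returned by Lemma~\ref{lem:aux1a:theorem:confp} with the one fixed at the outset.
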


\begin{proof}
From Lemma~\ref{lem:aux1a:theorem:confp},
the first lemma statement holds
iff
\begin{flalign*}
    \hspace{0.5cm}&\parbox{325pt}{for all $Y\subseteq Open$ such that~$\ttx{\dd_\initx} \cup  \fixcons{Y}{Open}$ is satisfiable,\\
    there is $s \in SM(\init{\dd})$ such that~$Y = \{ h(f,0) \in Open \mid f \in s \}$\\ and~$\trf_{\dyn{\dd}}(\{s\},p) \subseteq G$.}&
\end{flalign*}
Furthermore, for every $Y\subseteq Open$,
there is at most one state~$s_0 \in SM(\init{\dd})$ such that~$Y = \{ h(f,0) \in Open \mid f \in s_0 \}$ (Lemma~\ref{lem:aux1b:theorem:confp}).
Therefore, the above statement can be rewritten as
\begin{flalign}
    \hspace{0.5cm}&\parbox{325pt}{every ${Y\subseteq Open}$ satisfies that, if~${\ttx{\dd_\initx} \cup  \fixcons{Y}{Open}}$ is satisfiable,\\ then~$\trf_{\dyn{\dd}}(\{s_Y\},p) \subseteq G$ where~$s_Y \in SM(\init{\dd})$ is the unique state such that~${Y = \{ h(f,0) \in Open \mid f \in s_Y \}}$.}&
    \label{eq:1:lem:aux1:theorem:confp}
\end{flalign}
\emph{First-to-second statement}.
Pick any~$s_0 \in SM(\init{\dd})$.
From Lemma~\ref{lem:aux1b:theorem:confp},
there is a unique set~$Y\subseteq Open$ s.t.~${Y = \{ h(f,0) \in Open \mid f \in s_0 \}}$.
Furthermore, by construction, $\ttx{\dd_\initx} \cup  \fixcons{Y}{Open}$ is satisfiable.
From~\eqref{eq:1:lem:aux1:theorem:confp}, this implies~$\trf_{\dyn{\dd}}(\{s_0\},p) \subseteq G$ and, thus, the second lemma statement holds.
\\[5pt]
\emph{Second-to-first statement}.
Pick any $Y\subseteq Open$ such that~$\ttx{\dd_\initx} \cup  \fixcons{Y}{Open}$ is satisfiable.
From Lemma~\ref{lem:aux1b:theorem:confp}, this implies that there is a unique~$s_Y \in SM(\init{\dd})$ such that~${Y = \{ h(f,0) \in Open \mid f \in s_Y \}}$.
From the second lemma statement, this implies that~$\trf_{\dyn{\dd}}(\{s_Y\},p) \subseteq G$.
Hence, \eqref{eq:1:lem:aux1:theorem:confp} and the first lemma statement hold.%
\end{proof}

\begin{lemma}\label{lem:aux3:theorem:confp}
Let \dd{} be a planning description that represents a conformant planning problem $\pp{}$ 
such that $\init{\dd}$ is in $GDT$ form, and $n$ be a positive integer.
Let $p = [ a_0; \dotsc; a_n ]$ be a plan, 
$X = \{ occ(a_1,1), \dotsc occ(a_n,n) \}$ be a set of atoms, and
$Y \subseteq Open$.
Then, the following two statements are equivalent:
\begin{itemize}
    \item $\cp \cup \fixcons{X}{Occ} \cup \fixcons{Y}{Open}$ is satisfiable; and
    \item if~$\ttx{\dd_\initx} \cup  \fixcons{Y}{Open}$ is satisfiable, 
     then there is some stable model~$Z$ of~$\cp \cup \fixcons{X}{Occ} \cup \fixcons{Y}{Open})$ such that~$\alpha(n) \notin Z$.
\end{itemize}
\end{lemma}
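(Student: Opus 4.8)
The plan is to split on whether $Y$ is \emph{relevant}, i.e.\ on whether $\ttx{\dd_\initx} \cup \fixcons{Y}{Open}$ is satisfiable, which is exactly the dichotomy that motivates the $\alpha$-construction. In each case I would read off the fate of the atom $\alpha(n)$ from the initial layer $\base \cup \ttt{\dd_\initx}$, propagate it through the rule $Alpha$ and through the negative literals $\naf{\alpha(T)}$ that guard every rule of $\ttt{\dd_\dynx}$ and $\ttt{\dd_\finalx}$, and invoke the Splitting Theorem along the time-indexed layering already fixed before Lemma~\ref{lem:aux0a:theorem:confp}. Throughout I would reuse Lemmas~\ref{lem:aux1a:theorem:confp} and~\ref{lem:aux1b:theorem:confp}; in particular, by Lemma~\ref{lem:aux1b:theorem:confp}, $\ttx{\dd_\initx} \cup \fixcons{Y}{Open}$ being satisfiable is the same as the existence of some $s_0 \in SM(\init{\dd})$ with $Y = \{ h(f,0) \in Open \mid f \in s_0 \}$.

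First I would treat the \emph{irrelevant} case, where $\ttx{\dd_\initx} \cup \fixcons{Y}{Open}$ is unsatisfiable. Here the second statement holds vacuously, so only the first statement remains, namely that $\cp \cup \fixcons{X}{Occ} \cup \fixcons{Y}{Open}$ is satisfiable. This is the situation analysed in the first part of the proof of Lemma~\ref{lem:aux1a:theorem:confp}: the bottom program $\base \cup \ttt{\dd_\initx} \cup Alpha \cup Generate \cup \fixcons{X}{Occ} \cup \fixcons{Y}{Open}$ has a unique stable model $B$ that derives $\alpha(0)$, and hence $\alpha(1), \ldots, \alpha(n)$ through $Alpha$. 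Then I would add the observation that, relative to $B$, every rule of $\ttt{\dd_\dynx} \cup \ttt{\dd_\finalx}$ is deleted in the reduct because its body contains some $\naf{\alpha(T)}$ with $\alpha(T) \in B$; so the top contributes nothing and, by the Splitting Theorem, $B$ is a stable model of the whole program. Thus the program is satisfiable and both statements hold.

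Next I would treat the \emph{relevant} case, where $\ttx{\dd_\initx} \cup \fixcons{Y}{Open}$ is satisfiable. The key claim is that then \emph{every} stable model $Z$ of $\cp \cup \fixcons{X}{Occ} \cup \fixcons{Y}{Open}$ satisfies $\alpha(n) \notin Z$. To establish it I would take the unique matching $s_0 \in SM(\init{\dd})$ from Lemma~\ref{lem:aux1b:theorem:confp}, note that the corresponding initial model satisfies the $Test$ constraints so that the $\alpha(0)$-headed rules of $\ttt{\dd_\initx}$ do not fire, and conclude $\alpha(0)$ is absent from the unique stable model of the initial layer; then, since the only rules deriving $\alpha(i)$ for $i \geq 1$ are those of $Alpha$ from $\alpha(i-1)$, propagate $\alpha(i) \notin Z$ up to $i = n$ by the Splitting Theorem. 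Given this claim, the consequent of the second statement (``some stable model $Z$ with $\alpha(n) \notin Z$'') is equivalent to the mere existence of a stable model, which is precisely the first statement, while the antecedent holds by assumption; hence the two statements coincide.

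The main obstacle I anticipate is bookkeeping rather than conceptual: making the two applications of the Splitting Theorem precise along the time layering, and in particular verifying that no rule other than $Alpha$ can introduce an $\alpha(i)$ with $i \geq 1$ and that deactivating $\ttt{\dd_\dynx} \cup \ttt{\dd_\finalx}$ by the derived $\alpha$'s leaves no residual $h(\cdot,\cdot)$ atom forced into the stable model. Both points follow from the head-disjointness of the layers already recorded before Lemma~\ref{lem:aux0a:theorem:confp}, so no genuinely new machinery should be required.
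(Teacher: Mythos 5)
Your proposal is correct and follows essentially the same route as the paper's proof: a case split on whether $\ttx{\dd_\initx} \cup \fixcons{Y}{Open}$ is satisfiable, showing in the satisfiable case that $\alpha(0)$ (hence $\alpha(n)$) cannot occur in any stable model because only the former integrity constraints of $\ttt{\dd_\initx}$ and the rule $(\ref{solving:flags})$ have $\alpha$-atoms in their heads, and in the unsatisfiable case explicitly exhibiting the stable model $Z_0 \cup X \cup \{\alpha(i) \mid 1 \leq i \leq n\}$ in which all rules of $\ttt{\dd_\dynx} \cup \ttt{\dd_\finalx}$ are deactivated. The only cosmetic difference is that you prove $\alpha(n)$ absent from \emph{every} stable model by forward propagation where the paper argues by contradiction for a given one; both rest on the same splitting and head-disjointness observations.
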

\begin{proof}
We proceed by cases.
Assume first that~$\ttx{\dd_\initx} \cup  \fixcons{Y}{Open}$ is satisfiable.
Then, the second lemma statement trivially implies the first.
Next, assume that the first lemma statement holds.
We show that the second lemma statement follows.
By definition, the first lemma statement implies that there is some stable model~$Z$ of~$\cp \cup \fixcons{X}{Occ} \cup \fixcons{Y}{Open})$.
Hence, we only need to show that~$\alpha(n) \notin Z$ and, for this, it is enough to show that~$\alpha(0) \notin Z$.
Suppose, for the sake of contradiction, that~$\alpha(0) \in Z$.
From the Splitting Theorem, there is a stable model~$Z_0$ of~$\ttt{\dd_\initx} \cup  \fixcons{Y}{Open}$ such that~${\alpha(0) \in Z_0}$.
By construction, this implies that~$\ttx{\dd_\initx} \cup  \fixcons{Y}{Open}$ is not satisfiable, which is a contradiction with the assumption.

Assume now that~$\ttx{\dd_\initx} \cup  \fixcons{Y}{Open}$ is not satisfiable.
Then, the second lemma statement trivially holds.
Furthermore, by construction, every stable model~$Z_0$ of~$\ttt{\dd_\initx} \cup  \fixcons{Y}{Open}$ satisfies~${\alpha(0) \in Z_0}$.
Let~$Z = Z_0 \cup X \cup \{ \alpha(i) \mid 1 \leq i \leq n \}$.
It holds that
\begin{itemize}
    \item the bodies of all rules in~$\ttt{\dd_\dynx} \cup \ttt{\dd_\finalx}$ are not satisfied by~$Z$,
    \item the bodies of all rules in~$Alpha$ are satisfied by~$Z$ and all~$\alpha(i)$ are thus supported, and
    \item all rules in~$Generate$ are satisfied by~$Z$ and all elements in~$X$ are supported.
\end{itemize}
Thus~$Z$ is a stable model of~$\cp \cup \fixcons{X}{Occ} \cup \fixcons{Y}{Open}$ and the first lemma statement holds.
\end{proof}

\begin{lemma}\label{lem:aux4:theorem:confp}
Let \dd{} be a planning description that represents a conformant planning problem $\pp{}$ such that $\init{\dd}$ is in $GDT$ form, and $n$ be a positive integer.
Let $p = [ a_0; \dotsc; a_n ]$ be a plan and~$X = \{ occ(a_0,0), \dotsc occ(a_n,n) \}$ be a set of atoms.
Then, the following two statements are equivalent:
\begin{itemize}
    \item for all $Y\subseteq Open$,
    $\cp \cup \fixcons{X}{Occ} \cup \fixcons{Y}{Open}$ is satisfiable; and
    \item every $s_0 \in SM(\init{\dd})$ 
    satisfies $\trf_{\dyn{\dd}}(\{s_0\},p) \subseteq G$.
\end{itemize}
\end{lemma}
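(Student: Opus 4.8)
The plan is to obtain this equivalence by composing the two immediately preceding lemmas, Lemma~\ref{lem:aux3:theorem:confp} and Lemma~\ref{lem:aux1:theorem:confp}, chaining them through their shared intermediate condition. Neither lemma does any new work here; the task is purely to glue them together.

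First I would fix an arbitrary $Y \subseteq Open$ and apply Lemma~\ref{lem:aux3:theorem:confp}, which states that satisfiability of $\cp \cup \fixcons{X}{Occ} \cup \fixcons{Y}{Open}$ is equivalent to the conditional claim that, whenever $\ttx{\dd_\initx} \cup \fixcons{Y}{Open}$ is satisfiable, the program $\cp \cup \fixcons{X}{Occ} \cup \fixcons{Y}{Open}$ admits a stable model $Z$ with $\alpha(n) \notin Z$. Quantifying this equivalence universally over all $Y \subseteq Open$ transforms the first statement of the present lemma into ``for every $Y \subseteq Open$, the conditional holds.''

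The key rewriting step is then to observe that a universally quantified implication ``for all $Y$, if $\ttx{\dd_\initx} \cup \fixcons{Y}{Open}$ is satisfiable, then \ldots'' is logically identical to the guarded form ``for all $Y$ such that $\ttx{\dd_\initx} \cup \fixcons{Y}{Open}$ is satisfiable, \ldots'' The latter is precisely the first statement of Lemma~\ref{lem:aux1:theorem:confp}, which that lemma equates with the condition that every $s_0 \in SM(\init{\dd})$ satisfies $\trf_{\dyn{\dd}}(\{s_0\},p) \subseteq G$. Since this is exactly the second statement of the current lemma, the chain of equivalences closes.

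The one point I would be careful about is the legitimacy of pushing the universal quantifier over $Y$ through the pointwise equivalence of Lemma~\ref{lem:aux3:theorem:confp}: because that lemma holds for an arbitrary but fixed $Y$, instantiating it under the scope of ``for all $Y$'' is sound and introduces no cross-dependence between distinct choices of $Y$. With both building blocks already established, I anticipate no genuine obstacle; the proof amounts to a short bookkeeping argument linking the two lemmas.
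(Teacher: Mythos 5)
Your proposal is correct and matches the paper's own proof exactly: the paper likewise chains Lemma~\ref{lem:aux3:theorem:confp} (applied pointwise for each $Y\subseteq Open$, converting unconditional satisfiability into the guarded existence of a stable model $Z$ with $\alpha(n)\notin Z$) with Lemma~\ref{lem:aux1:theorem:confp} to reach the semantic statement about $\trf_{\dyn{\dd}}$. Your care about pushing the universal quantifier through the pointwise equivalence is sound and is implicitly what the paper does as well.
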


\begin{proof}
The first lemma statement holds
\\iff
for all $Y\subseteq Open$ satisfying that~$\ttx{\dd_\initx} \cup  \fixcons{Y}{Open}$ is satisfiable, 
there is some stable model~$Z$ of~$\cp \cup \fixcons{X}{Occ} \cup \fixcons{Y}{Open})$ such that~$\alpha(n) \notin Z$
(Lemma~\ref{lem:aux3:theorem:confp})
\\iff
the second lemma statement holds
(Lemma~\ref{lem:aux1:theorem:confp}).%
\end{proof}

\begin{lemma}\label{lem:aux5:theorem:confp}
Let \dd{} be a planning description that represents a conformant planning problem $\pp{}$ 
such that $\init{\dd}$ is in $GDT$ form, and let~$n$ be a positive integer.
Then, the following two statements are equivalent:
\begin{itemize}
    \item 
    there is some $X\subseteq Occ$ such that, for all $Y\subseteq Open$,
    there is some stable model of~$\cp \cup \fixcons{X}{Occ} \cup \fixcons{Y}{Open}$; and
    \item 
    there is some $X\subseteq Occ$ of the form $\{occ(a_1,1), \ldots, occ(a_n,n)\}$ such that, for all $Y\subseteq Open$,
    there is some stable model of $\cp \cup \fixcons{X}{Occ} \cup \fixcons{Y}{Open}$.
\end{itemize}
\end{lemma}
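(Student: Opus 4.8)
The plan is to prove the two implications separately, observing that one direction is trivial and the other reduces to a single structural fact about the generate rule~$(\ref{solving:generate})$. For the direction from the second statement to the first, I would simply note that any set $X$ of the form $\{occ(a_1,1), \ldots, occ(a_n,n)\}$ is by construction a subset of $Occ$; hence a witness for the second statement is verbatim a witness for the first, and nothing further is required.

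The substantive direction is from the first statement to the second. Here I would take an arbitrary witness $X \subseteq Occ$ for the first statement and show that it must already have the required shape, so that the very same $X$ witnesses the second statement. The key step is to exploit the cardinality rule~$(\ref{solving:generate})$, namely $\{occ(A,T) : action(A)\} = 1 \leftarrow t(T)$, which belongs to $\cp$. Since $\facts$ supplies the facts $t(1), \ldots, t(n)$ and $action(a)$ for every $a \in A$, this rule forces, in every stable model of $\cp$, exactly one action atom $occ(\cdot,t)$ to hold for each time step $t \in \{1, \ldots, n\}$. I would then fix any $Y \subseteq Open$---the assumed property guarantees a stable model $Z$ for it---and use $\fixcons{X}{Occ}$ to obtain $Z \cap Occ = X$. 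Intersecting the exactly-one-per-step property with this identity yields that $X$ selects precisely one action $a_t \in A$ at each step, i.e.\ $X = \{occ(a_1,1), \ldots, occ(a_n,n)\}$, as desired.

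The main obstacle, though a mild one, is making the cardinality argument watertight: I must check that the ``$=1$'' semantics genuinely excludes both degenerate shapes of $X$. If $X$ omitted all action atoms at some step $t$, then once $\fixcons{X}{Occ}$ forbids every $occ(\cdot,t)$ the lower bound of the constraint could no longer be met; if $X$ contained two distinct atoms $occ(a,t)$ and $occ(b,t)$, the upper bound would be violated. In either case $\cp \cup \fixcons{X}{Occ} \cup \fixcons{Y}{Open}$ would be unsatisfiable for every $Y$, contradicting the assumed property. So the whole argument is a short satisfiability analysis rather than a delicate construction, and---unlike the preceding lemmas---it needs neither the Splitting Theorem nor the inductive machinery, which makes this the simplest link in the chain.
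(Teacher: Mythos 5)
Your proposal is correct and follows essentially the same route as the paper's proof: the second-to-first direction is dismissed as trivial, and the first-to-second direction extracts, for an arbitrary $Y$, a stable model $M$ whose intersection with $Occ$ must equal $X$ by $\fixcons{X}{Occ}$ and must have the form $\{occ(a_1,1),\ldots,occ(a_n,n)\}$ by the cardinality rule~(\ref{solving:generate}). Your extra paragraph spelling out why the ``$=1$'' bound rules out both degenerate shapes of $X$ is just a more explicit rendering of the same observation.
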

\begin{proof}
The second statement trivially implies the first.
In the other direction, assume that the first statement holds, we show that in this case the second statement also holds.
Pick any $Y\subseteq Open$, and let $M$ be some stable model of $\cp \cup \fixcons{X}{Occ} \cup \fixcons{Y}{Open}$.
Given that the rule $Generate$ belongs to $\cp$,
it follows that $M \cap Occ$ has the form $\{occ(a_1,1), \ldots, occ(a_n,n)\}$.
On the other hand, since $M$ satisfies $\fixcons{X}{Occ}$, it follows that $M \cap Occ = X$.
Hence, $X$ has the form $\{occ(a_1,1), \ldots, occ(a_n,n)\}$, and the second statement holds.
\end{proof}

\begin{proof*}[Proof of Theorem \ref{theorem:confp}]
Since~\dd{} represents~$\pp{}$,
it follows that
$\pp{} = \langle \trf_{\dyn{\dd{}}}, SM(\init{\dd{}}), G \rangle$
where~$G$ is $SM( \final{\dd{}} \cup \{ \{f\}\leftarrow \mid f \in F \})$.
Then, the program
\[\exists Occ \forall \open{}\ \cp{}\]
is satisfiable
\\iff
there is some $X\subseteq Occ$ such that, 
for all $Y\subseteq Open$,
there is some stable model of $\cp \cup \fixcons{X}{Occ} \cup \fixcons{Y}{Open}$
\\iff
there is some $X\subseteq Occ$ of the form $\{occ(a_1,1), \ldots, occ(a_n,n)\}$ such that,
for all $Y\subseteq Open$,
there is some stable model of $\cp \cup \fixcons{X}{Occ} \cup \fixcons{Y}{Open}$
\hfill (Lemma~\ref{lem:aux5:theorem:confp})
\\iff
there is some $X\subseteq Occ$ of the form $\{occ(a_1,1), \ldots, occ(a_n,n)\}$ such that every $s_0 \in SM(\init{\dd})$ 
satisfies $\trf_{\dyn{\dd}}(\{s_0\},p) \subseteq G$ with~$p= [ a_0; \dotsc; a_n ]$
\hfill (Lemma~\ref{lem:aux4:theorem:confp})
\\iff
there is some plan $p$ of length $n$ s.t. every $s_0 \in SM(\init{\dd{}})$ 
satisfies~${\trf_{\dyn{\dd{}}}(\{s_0\},p) \subseteq G}$ 
\\iff
there is some plan $p$ of length $n$ such that $\trf_{\dyn{\dd{}}}(SM(\init{\dd{}}),p) \subseteq G$
\\iff
there is some plan of length $n$ that solves problem~$\pp{}$.$\mathproofbox$
\end{proof*}

\end{document}
